\DeclareMathOperator{\Hom}{Hom}
\tikzstyle{node}=[fill=white, draw=black, shape=circle, minimum size=1mm, ultra thick]
\tikzstyle{red node}=[fill=white, draw=red, shape=circle, minimum size=1mm, ultra thick]
\tikzstyle{green node}=[fill=white, draw={rgb,255: red,0; green,138; blue,0}, shape=circle, minimum size=1mm, ultra thick]
\tikzstyle{small box}=[fill=white, draw=black, shape=rectangle, minimum height=0.5cm, minimum width=0.5cm, ultra thick]
\tikzstyle{grey box}=[fill=gray!40, draw=black, shape=rectangle, minimum height=0.5cm, minimum width=0.5cm, ultra thick]
\tikzstyle{red box}=[fill=red!40, draw=black, shape=rectangle, minimum height=0.5cm, minimum width=0.5cm, ultra thick]
\tikzstyle{weyl}=[fill=white, draw={rgb,255: red,0; green,0; blue,109}, shape=rectangle, minimum height=0.5cm, minimum width=0.5cm]
\tikzstyle{filled_node}=[fill=black, draw=black, fill opacity=1, shape=circle, minimum size=1mm, ultra thick]
\tikzstyle{large box}=[fill=white, draw=black, shape=rectangle, minimum height=0.5cm, minimum width=1cm, ultra thick]
\tikzstyle{blue node}=[fill=white, draw=blue, shape=circle, minimum size=1mm, ultra thick]
\tikzstyle{filled_green_node}=[fill={rgb,255: red,0; green,138; blue,0}, draw={rgb,255: red,0; green,138; blue,0}, fill opacity=1, shape=circle, minimum size=1mm, ultra thick]
\tikzstyle{pattern_node}=[
\tikzstyle{thick}=[-, ultra thick]
\tikzstyle{blue_thick}=[-, ultra thick, draw=blue]
\tikzstyle{dashes}=[-, dashed, draw={rgb,255: red,191; green,191; blue,191}, dash pattern=on 2mm off 1mm, fill={rgb,255: red,244; green,228; blue,0}]
\tikzstyle{thick_arrow}=[ultra thick, ->]
\tikzstyle{dash_1}=[-, dashed]
\tikzstyle{dash_2}=[-, dashed, fill={rgb,255: red,246; green,235; blue,255}]
\tikzstyle{dash_3}=[-, dashed, fill={rgb,255: red,227; green,255; blue,209}]
\tikzstyle{dash_4}=[-, dashed, fill={rgb,255: red,255; green,209; blue,153}]
\tikzstyle{red_thick}=[-, ultra thick, draw=red]
\tikzstyle{dash_5}=[-, dashed, fill={rgb,255: red,225; green,255; blue,254}]
\tikzstyle{dash_6}=[-, dashed, fill={rgb,255: red,251; green,245; blue,160}]
\tikzstyle{jellyfish}=[-, ultra thick, fill={rgb,255: red,34; green,48; blue,255}]
\tikzstyle{arrow}=[thick, ->]
\tikzstyle{dashed_arrow}=[dashed, ->]
\tikzstyle{dashed_thick_arrow}=[ultra thick, dashed, ->]
\tikzstyle{red_thick_arrow}=[ultra thick, ->, draw=red]
\tikzstyle{thick_blue_double_arrow}=[ultra thick, <->, draw=blue]
\tikzstyle{thick_green_double_arrow}=[ultra thick, <->, draw={rgb,255: red,0; green,138; blue,0}]
\definecolor{melon}{RGB}{227, 168, 105} 
\definecolor{peach}{RGB}{255, 218, 185}
\definecolor{babypink}{RGB}{244, 194, 194}
\definecolor{lplum}{RGB}{191, 128, 191}
\definecolor{sblue}{RGB}{135, 206, 250}
\definecolor{ygreen}{RGB}{154, 205, 50}
\definecolor{lpurple}{RGB}{210, 180, 222}
\definecolor{terracotta}{RGB}{204, 78, 63}
\definecolor{lorange}{RGB}{255, 209, 153}
\definecolor{sage}{RGB}{188, 184, 138}   
\definecolor{dustyblue}{RGB}{96, 130, 182}
\theoremstyle{plain}
\newtheorem{theorem}{Theorem}[section]
\newtheorem{proposition}[theorem]{Proposition}
\newtheorem{lemma}[theorem]{Lemma}
\newtheorem{corollary}[theorem]{Corollary}
\theoremstyle{definition}
\newtheorem{definition}[theorem]{Definition}
\theoremstyle{remark}
\newtheorem{remark}[theorem]{Remark}
\newtheorem{example}[theorem]{Example}
\icmltitlerunning{Permutation Equivariant Neural Networks for Symmetric Tensors}
\begin{document}

\twocolumn[
\icmltitle{
	Permutation Equivariant Neural Networks for Symmetric Tensors
	}




\begin{icmlauthorlist}
\icmlauthor{Edward Pearce--Crump}{yyy}
\end{icmlauthorlist}

\icmlaffiliation{yyy}{Department of Mathematics, Imperial College London, United Kingdom}

\icmlcorrespondingauthor{Edward Pearce--Crump}{ep1011@ic.ac.uk}

\icmlkeywords{Machine Learning, ICML}

\vskip 0.3in
]



\printAffiliationsAndNotice{}  

\begin{abstract}
	Incorporating permutation equivariance into neural networks 
	has proven to be useful 
	in ensuring that models respect 
	symmetries 
	that exist 
	in data.
	Symmetric tensors, which 
	naturally 
	appear
	in statistics, machine learning, and graph theory,
	are essential for many
	applications in
	physics, chemistry, and materials science,
	amongst others.
	However,
	existing research on permutation equivariant models
	has not explored symmetric tensors as inputs,
	and most prior work on learning from these
	tensors 
	has focused on equivariance to Euclidean groups.
	In this paper, 
	we present two different characterisations of all linear 
	permutation equivariant functions between symmetric power spaces of $\mathbb{R}^n$.
	We show on two tasks that these functions 
	are highly data efficient compared to standard MLPs
	and 
	have potential
	to generalise well to symmetric tensors of different sizes.
\end{abstract}


\section{Introduction}


Equivariance has proven to be an effective approach
for encoding 
known structure
about a problem 
directly into a neural network's architecture, making learning more efficient
and interpretable
\citep{
cohenc16, cohen2017steerable,
qi2017, ravanbakhsh17a, zaheer2017, 
cohen2018, esteves2018, kondor18a, clebschgordan, thomas2018, weiler2018, maron2019, 
finzi2021, satorras21a, villar2021scalars,
yarotsky2022, pearcecrumpB}.
Equivariant neural networks have been applied successfully in a wide range of problems, 
including, but not restricted to,
particle physics \citep{bogatskiy2020, villar2021scalars, gong2022},
biological and medical imaging \citep{bekkers2018, lafarge2021, suk24},
molecular and quantum chemistry 
\citep{thomas2018, finzi20a, fuchs2020, miller2020, schutt21a, batatia2022, 
batzner2022, hoogeboom22a, liao2023},
reinforcement learning \citep{vanderpol, wang2022r, wang2022, wang22j},
computer vision \citep{marcos2017, worrall2017, esteves2019, deng2021, chatzipantazis2023, kaba23a}, and
modelling compositional structures in natural language \citep{gordon2020, petrache2024}.

In this paper, we consider linear permutation ($S_n$) equivariant functions $f$ on symmetric tensors 
$T \in (\mathbb{R}^{n})^{\otimes k}$.
In particular, these functions must satisfy
\begin{equation}
	f(e_{i_1} \otimes e_{i_2} \otimes \dots \otimes e_{i_k}) 
	= 
	f(e_{i_{\pi(1)}} \otimes e_{i_{\pi(2)}} \otimes \dots \otimes e_{i_{\pi(k)}}) 
\end{equation}
for all $\pi \in S_k$,
where $e_{i_j}$ are standard basis vectors in $\mathbb{R}^{n}$,
since the coefficients of $T$, when expressed in this basis,
satisfy
$T_{i_1, i_2, \dots, i_k} = T_{i_{\pi(1)}, i_{\pi(2)}, \dots, i_{\pi(k)}}$
for all $\pi \in S_k$.

Symmetric tensors and linear permutation equivariant functions on them play a fundamental role 
in many scientific domains owing to their ability to capture invariant and equivariant relationships 
in high-dimensional data. 
In statistics, these functions preserve the defining 
properties of covariance matrices, which are symmetric and equivariant under
permutations of their coordinates \citep{mccullagh2018}. 
These functions and tensors are also important in machine learning, 
where they enable robust parameter estimation in latent variable models 
\citep{anandkumar2014, jaffe2018, goulart2022}.
They also appear in clustering algorithms, which must treat all data points equivalently \citep{khouja2022}. 
However, the most notable example is in graph theory, where
symmetric adjacency matrices are formed from undirected graphs \citep{maron2019}
and high-order adjacency tensors are used to identify affinity relations in hypergraphs 
\citep{shashua2005, georgii2011, ghoshdastidar2017}.
This framework is essential for many real-world applications: for example,
in fluid dynamics simulations, particle interactions must be invariant under exchange \citep{gao2022};
in materials science, stress-strain relationships must maintain their physical meaning 
under permutation symmetry \citep{garanger2024, wen2024};
and in neuroscience, brain connectome mapping requires consistent representation 
of neural pathways \citep{guha2020}.

The range and nature of these applications highlight the importance of achieving 
a complete and exact characterisation of permutation equivariant linear functions 
applied to symmetric tensors.
However, most studies in the existing machine learning literature on symmetric tensors
focus only on equivariance to Euclidean groups,
and no prior work on permutation equivariant neural networks
has considered the situation where the only inputs are symmetric tensors.

To address this gap, we make the following contributions.
We obtain a complete and exact characterisation of all
permutation equivariant linear functions
between any two symmetric powers of $\mathbb{R}^{n}$, 
deriving this result using two different bases.
In particular, we show that, by embedding symmetric powers 
of 
$\mathbb{R}^{n}$ into tensor power space, our characterisation
includes all linear scalar-valued and vector-valued
permutation equivariant functions that are defined on symmetric tensors.
As a corollary, we also recover the Deep Sets characterisation \citep{zaheer2017}.
Crucially, we address the implementation challenges that are associated 
with storing large weight matrices in memory by introducing what we call 
\textit{map label notation}. 
This notation makes it possible for us to express the transformation 
of an input symmetric tensor by a permutation equivariant weight matrix 
as a series of equations in the input tensor, eliminating the need to 
store the weight matrices explicitly in memory.
In particular, 
we can then use the same set of equations 
to compute the transformation 
between symmetric tensors of fixed orders $k$ and $l$
for any dimension $n$, 
eliminating the need to generate separate weight matrices
for each choice of $n$.
We validate our approach on two toy problems 
and show that these functions exhibit high data efficiency and 
strong potential for generalising well to symmetric tensors of different sizes.






\section{Related Work} \label{relatedwork}

We describe two important classes of related works.

\textbf{Equivariant Machine Learning on Symmetric Tensors.}
Most prior research on learning from
symmetric tensors has only considered equivariance to Euclidean groups.
\citet{gao2022} applied tensor contractions to achieve $SO(n)$-equivariance 
on high-order symmetric tensors,
using their network to simulate fluid systems.
\citet{lou2024} and \citet{wen2024} each introduced $SO(3)$-equivariant graph neural networks 
to respectively discover highly anisotropic dielectric crystals and to
learn from elasticity tensors,
which are fourth-order symmetric tensors that fully characterise a material's elastic behavior.
\citet{heilman2024}
designed an $SE(3)$-equivariant graph neural network architecture 
to predict material property tensors directly from crystalline structures.
\citet{garanger2024}
constructed a neural network that is equivariant
to subgroups of the orthogonal group $O(n)$
in order to learn from symmetric $n \times n$ matrices,
and applied it to the constitutive modelling of materials with symmetries.

A few prior works have characterised equivariant functions on symmetric tensors, 
but none have provided an exact characterisation of permutation equivariant weight matrices 
for these tensors. 
\citet{kunisky2024} used graph moments to characterise $O(n)$-invariant polynomials on symmetric tensors, 
while \citet{blumsmith2024} introduced a set of $\binom{n+1}{2}+ 1$ functions 
that can universally approximate $S_n$-invariant functions 
on almost all real symmetric $n \times n$ matrices.

\textbf{Permutation Equivariant Neural Networks.}
These neural networks have been constructed for a number
of different types of data.
They include 
permutation equivariant 
neural networks to learn
from set-structured data \citep{qi2017, zaheer2017, sverdlov2024};
to learn mappings between different sized sets \citep{ravanbakhsh17a};
to learn the interaction between sets \citep{hartford2018};
to learn from graphs and hypergraphs 
\citep{maron2019, thiede2020, finzi2021, morris22a, godfrey2023, 
puny2023, pearcecrump, pearcecrumpG};
and to learn from sets of symmetric elements \citep{maron20a}.
\citet{pan22} studied how to organise the computations that are involved
in high-order tensor power networks.
Other works have studied the expressive power of permutation equivariant neural networks
\citep{keriven2019, maron19a, segol2019, yarotsky2022}.


\section{Symmetric Powers of $\mathbb{R}^{n}$}

We present the key definitions and concepts 
that are needed to study linear permutation
equivariant functions on symmetric tensors. 
We first introduce an action of $S_n$ on a set of $k$-length indices $S[n]^k$
which will then index
a basis of a real vector space, $S^k(\mathbb{R}^{n})$, 
known as the $k^{\text{th}}$ symmetric power of $\mathbb{R}^{n}$.
We let $[n] \coloneqq \{1, \dots, n\}$ throughout.

Define the set $S[n]^k$ to consist of all tuples of the form
\begin{equation}
	I 
	=
		(
		\underbrace{i_1, \dots, i_1}_{m_1}, 
		\underbrace{i_2, \dots, i_2}_{m_2}, 
		\dots,
		\underbrace{i_p, \dots, i_p}_{m_p}
		)
\end{equation}
such that
$p \in [n]$,
$1 \leq i_1 < i_2 < \ldots < i_p \leq n$, 
and $\sum m_i = k$.
By a ``stars and bars'' argument, 
we see that the cardinality of $S[n]^k$ is
$\binom{k+n-1}{k}$.

We obtain an action of the symmetric group $S_n$ on $S[n]^k$
by applying the permutation to each element in the tuple
and then reordering the result (via a permutation in $S_k$) 
to restore ascending numerical order.

\begin{example}
	Choosing the element $(1,1,1,2,3,3)$ of $S[3]^6$ and applying
	the permutation $(1 3 2)$ in $S_3$ to it, we first obtain
	$(3,3,3,1,2,2)$ and then reorder it to $(1,2,2,3,3,3)$
	to give another element of $S[3]^6$.
\end{example}

Consequently, we define the
\textbf{$\bm{k^\text{th}}$ symmetric power of $\mathbb{R}^{n}$},
$S^k(\mathbb{R}^{n})$,
to be the vector space
that has a \textbf{standard basis}
$\{e_I\}$ indexed by the elements $I$ of 
$S[n]^k$. 
Moreover, we can raise the action of $S_n$ on the set $S[n]^k$
to a representation of $S_n$ on $S^k(\mathbb{R}^{n})$ in the usual way:
we call the representation itself $\rho_{k}^{S}$.

We wish to characterise in full the linear
permutation equivariant functions between 
any two symmetric power spaces of $\mathbb{R}^{n}$.
They are defined as follows.
\begin{definition} \label{equivariance}
	A linear map $f: 
	S^k(\mathbb{R}^{n})
	\rightarrow
	S^l(\mathbb{R}^{n})$
	is said to be \textbf{permutation equivariant} if,
	for all $\sigma \in S_n$ and $v \in 
	S^k(\mathbb{R}^{n})$
	\begin{equation} \label{equivmapdefn}
		f(\rho_{k}^{S}(\sigma)[v]) = \rho_{l}^{S}(\sigma)[f(v)]
	\end{equation}
	We denote the vector space of all such maps by
	$\Hom_{S_n}(S^k(\mathbb{R}^{n}),S^l(\mathbb{R}^{n}))$.
\end{definition}

Since our goal is to calculate all of the permutation equivariant weight matrices
that can appear between any two symmetric power spaces of $\mathbb{R}^{n}$,
it is enough to construct a basis of matrices for 
$\Hom_{S_n}(S^k(\mathbb{R}^{n}), S^l(\mathbb{R}^{n}))$,
by viewing it as a subspace of 
$\Hom(S^k(\mathbb{R}^{n}), S^l(\mathbb{R}^{n}))$
and choosing the standard basis for each symmetric power space,
since any weight matrix will be a weighted linear combination of these basis matrices.
This characterisation includes the following functions: 

\begin{proposition} \label{linearsymmchar}
	Linear permutation equivariant scalar-valued and vector-valued functions 
	on symmetric tensors in $(\mathbb{R}^{n})^{\otimes k}$ 
	are elements of $\Hom_{S_n}(S^k(\mathbb{R}^{n}),S^0(\mathbb{R}^{n}))$
	and
	$\Hom_{S_n}(S^k(\mathbb{R}^{n}),S^1(\mathbb{R}^{n}))$, respectively.
\end{proposition}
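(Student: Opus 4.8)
The plan is to realise the abstract symmetric power $S^k(\mathbb{R}^{n})$ concretely as the $S_n$-subrepresentation of symmetric tensors inside $(\mathbb{R}^{n})^{\otimes k}$, and then to identify $S^0(\mathbb{R}^{n})$ and $S^1(\mathbb{R}^{n})$ with the scalar and vector output spaces.

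First I would let $\mathrm{Sym}^k(\mathbb{R}^{n}) \subseteq (\mathbb{R}^{n})^{\otimes k}$ be the subspace of tensors $T$ with $T_{i_1,\dots,i_k} = T_{i_{\pi(1)},\dots,i_{\pi(k)}}$ for all $\pi \in S_k$. Since $S_n$ acts diagonally on $(\mathbb{R}^{n})^{\otimes k}$ and this action commutes with the $S_k$-action that permutes tensor factors, $\mathrm{Sym}^k(\mathbb{R}^{n})$ is an $S_n$-subrepresentation. I would then define a linear map $\varphi \colon S^k(\mathbb{R}^{n}) \to \mathrm{Sym}^k(\mathbb{R}^{n})$ on the standard basis by sending $e_I$, where $I = (i_1, \dots, i_k)$ is written in ascending order, to the orbit sum $\sum e_{j_1} \otimes \dots \otimes e_{j_k}$ over all distinct rearrangements $(j_1, \dots, j_k)$ of $(i_1, \dots, i_k)$. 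These orbit sums, as $I$ ranges over $S[n]^k$, are linearly independent and span $\mathrm{Sym}^k(\mathbb{R}^{n})$, whose dimension equals $\binom{k+n-1}{k} = |S[n]^k|$; hence $\varphi$ is an isomorphism. It is $S_n$-equivariant because applying $\sigma \in S_n$ diagonally permutes the summands of an orbit sum amongst themselves and carries the orbit sum indexed by $I$ to the one indexed by the tuple obtained by applying $\sigma$ entrywise to $I$ and re-sorting into ascending order, which is precisely $\rho_{k}^{S}(\sigma)[e_I]$.

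Next I would observe that $S[n]^0$ consists of the single empty tuple, which is fixed by $S_n$, so $S^0(\mathbb{R}^{n})$ is the trivial one-dimensional representation, i.e.\ $\mathbb{R}$ with the trivial action; and $S[n]^1 = [n]$ with $S_n$ permuting entries, so $S^1(\mathbb{R}^{n})$ is $\mathbb{R}^{n}$ with the defining permutation representation. A linear scalar-valued function on symmetric tensors is a linear map $g \colon \mathrm{Sym}^k(\mathbb{R}^{n}) \to \mathbb{R}$ (equivalently, by the symmetry constraint stated in the introduction, a linear functional on $(\mathbb{R}^{n})^{\otimes k}$ that depends only on the symmetric part of its argument), and its being permutation equivariant is exactly the condition $g(\rho_{k}^{S}(\sigma)[T]) = g(T)$ for all $\sigma$. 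Under the isomorphism $\varphi$ this says precisely that $g \circ \varphi \in \Hom_{S_n}(S^k(\mathbb{R}^{n}), S^0(\mathbb{R}^{n}))$, and conversely every element of this $\Hom$ space yields such a $g$ via $\varphi^{-1}$. The vector-valued case is identical with $\mathbb{R}$ replaced by $\mathbb{R}^{n} = S^1(\mathbb{R}^{n})$ and invariance replaced by equivariance, giving $h \circ \varphi \in \Hom_{S_n}(S^k(\mathbb{R}^{n}), S^1(\mathbb{R}^{n}))$.

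The main obstacle I anticipate is not conceptual but organisational: setting up $\varphi$ so that the $S_k$-symmetry of the tensor inputs is absorbed cleanly — this is what motivates using the symmetric-subspace (orbit-sum) model rather than the quotient model — and then checking carefully that the $S_n$-equivariance of $\varphi$ matches the ``apply and re-sort'' description of the $S_n$-action on $S[n]^k$ given in the definition of $\rho_{k}^{S}$. Once that dictionary is fixed, recognising $S^0(\mathbb{R}^{n})$ and $S^1(\mathbb{R}^{n})$ as the scalar and vector representations and translating equivariance across $\varphi$ is routine.
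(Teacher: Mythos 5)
Your proof is correct, and it reaches the same conclusion as the paper by a slightly different (and more explicit) route. The paper's own proof works with what is effectively the quotient model: it takes the defining constraint $f(e_{i_1}\otimes\cdots\otimes e_{i_k}) = f(e_{i_{\pi(1)}}\otimes\cdots\otimes e_{i_{\pi(k)}})$, partitions the index tuples into $S_k$-orbits, observes that $f$ is constant on each orbit, and selects the ascending-order representative of each orbit; these representatives are exactly $S[n]^k$, so $f$ is determined by a linear map out of $S^k(\mathbb{R}^{n})$, and the identifications $S^0(\mathbb{R}^{n})=\mathbb{R}$, $S^1(\mathbb{R}^{n})=\mathbb{R}^{n}$ finish the argument. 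You instead realise $S^k(\mathbb{R}^{n})$ as the \emph{subspace} $\mathrm{Sym}^k(\mathbb{R}^{n})\subseteq(\mathbb{R}^{n})^{\otimes k}$ via the orbit-sum isomorphism $\varphi$ and transport everything across it. In characteristic zero the two models coincide, so both arguments are valid. What your version buys is that the $S_n$-equivariance of the identification is actually checked: you verify that $\varphi$ intertwines the diagonal $S_n$-action on $\mathrm{Sym}^k(\mathbb{R}^{n})$ with $\rho_k^{S}$ (the ``apply $\sigma$ entrywise and re-sort'' action), a point the paper's proof passes over silently when it replaces orbits by their ascending representatives. What the paper's version buys is economy and a closer match to its stated definition of a linear function on symmetric tensors (a function on basis elements constant on $S_k$-orbits), avoiding the need to introduce $\varphi$ at all; your parenthetical remark that such a function is equivalently a functional on $(\mathbb{R}^{n})^{\otimes k}$ depending only on the symmetric part of its argument correctly bridges the two viewpoints, since the orbit sums have nonzero coefficients.
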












\section{Orbit Basis of
	$\Hom_{S_n}(S^k(\mathbb{R}^{n}), S^l(\mathbb{R}^{n}))$
	}

We construct our first basis of 
$\Hom_{S_n}(S^k(\mathbb{R}^{n}), S^l(\mathbb{R}^{n}))$
by 
associating the basis with
the orbits of an action of $S_n$ on
the Cartesian product set $S[n]^{l} \times S[n]^{k}$
and understanding each orbit as a diagram.
We begin with the following result.

\begin{proposition} \label{basisorbits}
	There is a basis of
	$\Hom_{S_n}(S^k(\mathbb{R}^{n}), S^l(\mathbb{R}^{n}))$
	that corresponds bijectively
	with the orbits coming from the action of 
	$S_n$ on the set $S[n]^{l} \times S[n]^{k}$.
\end{proposition}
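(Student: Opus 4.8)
The plan is to translate the equivariance condition of Definition~\ref{equivariance} into a purely combinatorial condition on the matrix of $f$ taken with respect to the standard bases, and then to exhibit the required basis as the indicator matrices of orbits.

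First I would fix the standard bases $\{e_I\}_{I \in S[n]^k}$ of $S^k(\mathbb{R}^{n})$ and $\{e_J\}_{J \in S[n]^l}$ of $S^l(\mathbb{R}^{n})$, and record an arbitrary linear map $f \colon S^k(\mathbb{R}^{n}) \to S^l(\mathbb{R}^{n})$ by its matrix $A = (A_{J,I})$, so that $f(e_I) = \sum_{J} A_{J,I}\, e_J$. Since $\rho_{k}^{S}$ and $\rho_{l}^{S}$ are the linearisations of the $S_n$-action on $S[n]^k$ and $S[n]^l$, the operators $\rho_{k}^{S}(\sigma)$ and $\rho_{l}^{S}(\sigma)$ are permutation matrices, with $\rho_{k}^{S}(\sigma)[e_I] = e_{\sigma\cdot I}$ and likewise for $\rho_{l}^{S}$. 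Substituting $v = e_I$ into \eqref{equivmapdefn} and comparing the coefficient of each $e_J$ on the two sides, after relabelling the summation index on the right-hand side, I would obtain the equivalence: $f$ is permutation equivariant if and only if $A_{\sigma\cdot J,\, \sigma\cdot I} = A_{J,I}$ for every $\sigma \in S_n$ and every $(J,I) \in S[n]^{l} \times S[n]^{k}$.

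Next I would observe that $\sigma\cdot(J,I) \coloneqq (\sigma\cdot J,\, \sigma\cdot I)$ is exactly the diagonal action of $S_n$ on the Cartesian product set $S[n]^{l} \times S[n]^{k}$, so the condition above says precisely that the entries of $A$ are constant on each orbit of this action. Hence $\Hom_{S_n}(S^k(\mathbb{R}^{n}), S^l(\mathbb{R}^{n}))$ is identified with the space of orbit-constant real-valued functions on $S[n]^{l} \times S[n]^{k}$.

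Finally, to each orbit $\mathcal{O}$ I would associate the matrix $E_{\mathcal{O}}$ whose $(J,I)$-entry equals $1$ when $(J,I) \in \mathcal{O}$ and $0$ otherwise, together with the linear map $X_{\mathcal{O}}$ that it represents. By the previous step each $X_{\mathcal{O}}$ lies in $\Hom_{S_n}(S^k(\mathbb{R}^{n}), S^l(\mathbb{R}^{n}))$; the matrices $E_{\mathcal{O}}$ have pairwise disjoint supports and so are linearly independent; and every orbit-constant matrix is the finite linear combination $\sum_{\mathcal{O}} c_{\mathcal{O}}\, E_{\mathcal{O}}$, where $c_{\mathcal{O}}$ is its common value on $\mathcal{O}$, so they also span. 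Therefore $\{X_{\mathcal{O}}\}_{\mathcal{O}}$ is a basis, and $\mathcal{O} \mapsto X_{\mathcal{O}}$ is the claimed bijection. I expect no serious obstacle here: the only delicate point is the index bookkeeping in passing from \eqref{equivmapdefn} to the orbit-constancy condition (in particular, tracking $\sigma$ versus $\sigma^{-1}$ and using that permute-then-reorder is a genuine $S_n$-action, so that $\rho_{k}^{S}$ really is a representation and $\rho_{k}^{S}(\sigma)$ is a permutation matrix); everything after that is the standard fact that the indicator functions of the orbits form a basis of the space of invariants.
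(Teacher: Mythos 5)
Your proposal is correct and follows essentially the same route as the paper: express $f$ as a matrix in the standard bases, show equivariance is equivalent to the entries being constant on the orbits of the diagonal (permute-then-reorder) action of $S_n$ on $S[n]^{l}\times S[n]^{k}$, and take the orbit indicator matrices as the basis. Your final paragraph just makes explicit the last step (disjoint supports give independence, orbit-constancy gives spanning) that the paper leaves implicit.
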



\begin{figure*}[tb]
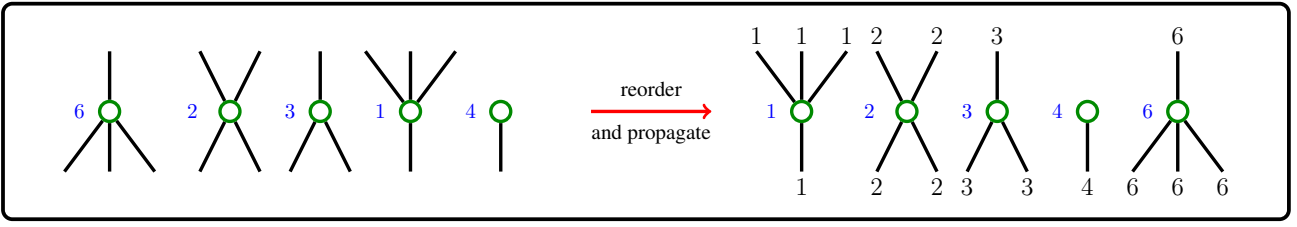

	\begin{tcolorbox}[colback=white!02, colframe=black]
	\begin{center}
		\scalebox{0.8}{\tikzfig{propagateorbit}}
	\end{center}
	\end{tcolorbox}
	\caption{
For the $(9,7)$--bipartition
$\{
	[3,1],
	[2,2],
	[2,1],
	[1,3],
	[1,0]
\}$,
we show how to obtain
the element in the corresponding $S_6$ orbit of
$S[6]^{7} \times S[6]^{9}$
that comes from labelling its blocks
with the $5$-length tuple 
$\{6,2,3,1,4\}$.
	By reordering the labelled green nodes of the
	$(9,7)$--orbit bipartition diagram and propagating
these values to the ends of the wires, we see that this element is
$\binom{1112236}{122334666}$.
		}
  	\label{propagateorbit}
\end{figure*}


\begin{proof}
	We see that the vector space
	$\Hom(S^k(\mathbb{R}^{n}), S^l(\mathbb{R}^{n}))$
	has a standard basis of matrix units
	\begin{equation}
		\{E_{I,J}\}_{I \in S[n]^l, J \in S[n]^k}
	\end{equation}
	where $E_{I,J}$ has a $1$ in the $(I,J)$ position and is $0$ elsewhere.

	Hence, for any standard basis element 
	$e_P \in S^k(\mathbb{R}^{n})$,
	we have that
	\begin{equation}
		E_{I,J}e_P = \delta_{J,P}e_I
	\end{equation}
	and so, for any linear map 
	$f: S^k(\mathbb{R}^{n}) \rightarrow S^l(\mathbb{R}^{n})$,
	expressing $f$ in the basis of matrix units as
	\begin{equation}
		f = \sum_{I \in S[n]^l}\sum_{J \in S[n]^k} f_{I,J}E_{I,J}
	\end{equation}
	we get that
	\begin{equation}
		f(e_P) = \sum_{I \in S[n]^l} f_{I,P}e_I
	\end{equation}

	Given that 
	$\Hom_{S_n}(S^k(\mathbb{R}^{n}), S^l(\mathbb{R}^{n}))$
	is a subspace of
	$\Hom(S^k(\mathbb{R}^{n}), S^l(\mathbb{R}^{n}))$,
	we have that
	$f$ is an $S_n$-equivariant linear map
	if and only if, 
	for all $\sigma \in S_n$ and standard basis elements
	$e_J \in S^k(\mathbb{R}^{n})$,
	\begin{equation} \label{equivarg}
		f(\rho_{k}^{S}(\sigma)[e_J]) = \rho_{l}^{S}(\sigma)[f(e_J)] 
	\end{equation}
	(\ref{equivarg}) holds
	if and only if
	\begin{equation} \label{centeq2}
		\sum_{I \in S[n]^l} f_{I,\sigma(J)\pi_{\sigma(J), k}}e_I 
		= 
		\sum_{I \in S[n]^l} f_{I,J}e_{\sigma(I)\pi_{\sigma(I), l}} 
	\end{equation}
	for some $\pi_{\sigma(J), k} \in S_k$ and $\pi_{\sigma(I), l} \in S_l$
	that restores 
	ascending numerical order
	in $\sigma(J)$ and $\sigma(I)$
	respectively.

	(\ref{centeq2}) now holds if and only if
	\begin{equation} \label{centeq1}
		f_{\sigma(I)\pi_{\sigma(I), l},\sigma(J)\pi_{\sigma(J), k}} 
		= 
		f_{I,J} 
	\end{equation}
	for all $\sigma \in S_n$, $I \in S[n]^{l}$ and $J \in S[n]^{k}$.

	Therefore, taking the Cartesian product set $S[n]^{l} \times S[n]^{k}$
	and writing its elements as $\binom{I}{J}$,
	we define an action of $S_n$ on $S[n]^{l} \times S[n]^{k}$ by
	\begin{equation} \label{sigmapairaction}
		\sigma\binom{I}{J} 
		\coloneqq 
		\binom{\sigma(I)\pi_{\sigma(I), l}}{\sigma(J)\pi_{\sigma(J), k}}
	\end{equation}	

	Consequently, (\ref{centeq1}) tells us that the basis elements of
	$\Hom_{S_n}(S^k(\mathbb{R}^{n}), S^l(\mathbb{R}^{n}))$
	correspond bijectively
	with the orbits coming from the 
	action of $S_n$ on $S[n]^{l} \times S[n]^{k}$.
\end{proof}

Now, consider an orbit coming from the action of 
$S_n$ on $S[n]^{l} \times S[n]^{k}$.
Suppose that $\binom{I}{J}$ is a class representative.
We form a diagram in the following way:

\begin{tcolorbox}[colback=terracotta!10, colframe=terracotta!40]
	We place the values of $I$ in one row and the values of 
	$J$ in a second row directly below it. For each value $x \in [n]$, 
	if $x$ appears in either $I$ or $J$, then we insert a central green node 
	between the two rows and connect it to every occurrence of $x$:
	drawing edges downward from entries in $I$ 
	and upward from entries in $J$.
	If $x$ does not appear in either row, then no central green node is added.
\end{tcolorbox}

\begin{example} \label{orbitdiagex}
	Suppose that $l = 6$, $k = 4$, and $n = 3$.
	Consider the $S_3$ orbit of 
	$S[3]^{6} \times S[3]^{4}$
	that contains the element 
	$\binom{111233}{1123}$.
	
	Then, by the construction given above, 
	we obtain
	\begin{equation} \label{orbitdiagex1}
		\begin{aligned}
			\scalebox{0.7}{\tikzfig{orbitbasis1}}
		\end{aligned}
	\end{equation}
	Removing the labels gives the diagram that corresponds to the orbit.
\end{example}

This diagram is, in fact, a diagrammatic representation of a 
\textbf{$\bm{(k,l)}$--bipartition} having at most $n$ \textbf{blocks}
that for now we call a \textbf{$\bm{(k,l)}$--orbit bipartition diagram}.
We define bipartitions below.

\begin{definition}
	A $(k,l)$--bipartition $\pi$ having some $t$ blocks is the set 
	\begin{equation} \label{bipartition}
		\{
			[x_1, y_1],
			[x_2, y_2],
			\dots,
			[x_t, y_t]
		\}
	\end{equation}
	of $t$ pairs $[x_i, y_i]$ such that
	$x_i, y_i \geq 0$ for all $i$, not both zero,
	such that 
	$\sum_{i = 1}^{t} x_i = k$ and
	$\sum_{i = 1}^{t} y_i = l$.
	These were first introduced and studied by Major Percy A. 
	\citet{macmahon1893, macmahon1896, macmahon1899, 
	macmahon1900, macmahon1906, macmahon1908, macmahon1912, macmahon1918}.
	For $(k,l)$--orbit bipartition diagrams, we call the blocks 
	\textbf{spiders}
	and the black edges \textbf{wires},
	referring to them as \textbf{input wires} or \textbf{output wires},
	as appropriate.

	The number of $(k,l)$--bipartitions having at most $n$ blocks
	is the value $p_n(k,l)$.
	We provide a novel procedure for how to generate all
	$(k,l)$--bipartitions having at most $n$ blocks, and hence
	determine $p_n(k,l)$,
	in Appendix \ref{generatebipart}.


\end{definition}

Note that the $(k,l)$--bipartition that is obtained 
from an $S_n$ orbit
is independent of
the choice of class representative: 
if we choose 
$\binom{\sigma(I)\pi_{\sigma(I), l}}{\sigma(J)\pi_{\sigma(J), k}}$ instead
and form a diagram using the same procedure given 
in the red-coloured box
above,
then the spiders that appear in each of the diagrams 
(viewed without their labels)
will be the same, just 
potentially 
in a different order.
However, both diagrams give the \textit{same} $(k,l)$--bipartition.

\begin{example} \label{orbitdiagex2}
	Continuing on from Example \ref{orbitdiagex}, we see that
	$\binom{122333}{1233}$ is in the same orbit as
	$\binom{111233}{1123}$, since it is obtained by applying the permutation
	$(132)$ to $\binom{111233}{1123}$,
	then applying permutations in $S_6$ and $S_4$
	to reorder each component into ascending numerical order.
	Hence, by applying the procedure given above to $\binom{122333}{1233}$, 
	we obtain 
	the diagram
	\begin{equation}	
		\begin{aligned}
			\scalebox{0.7}{\tikzfig{orbitbasis2}}
		\end{aligned}
	\end{equation}
	which is the same $(4,6)$--orbit bipartition diagram as (\ref{orbitdiagex1}),
	except the spiders are in a different order.
	However, the underlying $(4,6)$--bipartitions are the same.
\end{example}

\begin{remark}
	Example \ref{orbitdiagex2}
	highlights an important point: 
	in drawing a $(k,l)$--orbit bipartition diagram for a $(k,l)$--bipartition,
	the order of the spiders corresponding to the blocks does not matter.
	Hence each $(k,l)$--orbit bipartition diagram 
	represents an entire equivalence class of diagrams that 
	correspond to a $(k,l)$--bipartition.
\end{remark}

Now, we also see that all such $(k,l)$--bipartitions having at most $n$ blocks must appear 
in this process:
representing each $(k,l)$--bipartition having some $t \in [n]$ blocks 
(of the form (\ref{bipartition}))
as a $(k,l)$--orbit bipartition diagram
by connecting, for each $i \in [t]$, 
$x_i$ wires from below and $y_i$ wires from above to a central green node,
we obtain its orbit by following the six steps in the blue-coloured box below.

\begin{tcolorbox}[colback=blue!10, colframe=blue!40]
\begin{enumerate}[leftmargin=5pt,label=\arabic*.]
	\item Form all possible $t$-length tuples with elements in $[n]$, 
		not allowing for repetitions amongst the elements.
	\item Then, for each $t$-length tuple, label the central green nodes 
		in the diagram
		from left to right with the elements of the tuple.
	\item If necessary, reorder the spiders 
		so that 
		they appear from left to right in increasing order of their labels.
	\item Then, for each spider, propagate the block
		label to the end of each wire. 
	\item This produces the element $\binom{I}{J}$,
		where
		the top row's labels are $I$
		and the bottom row's labels are $J$.
	\item Doing this for each $t$-length tuple gives the entire $S_n$ orbit.
\end{enumerate}
\end{tcolorbox}

\begin{example} \label{orbittuple1}
In Figure \ref{propagateorbit}, 
	we give an example of how to obtain,
for the $(9,7)$--bipartition
$\{
	[3,1],
	[2,2],
	[2,1],
	[1,3],
	[1,0]
\}$,
the element in the corresponding $S_6$ orbit of
$S[6]^{7} \times S[6]^{9}$
that comes from labelling its blocks
with the $5$-length tuple 
$\{6,2,3,1,4\}$.
In particular, by reordering the labelled green nodes of the
corresponding $(9,7)$--orbit bipartition diagram and propagating
these values to the ends of the wires, we see that this element is
$\binom{1112236}{122334666}$.
\end{example}

Hence, overall, we have shown the following result:

\begin{proposition} \label{orbitsdiags}
	The orbits that come from the action of $S_n$ on 
	$S[n]^{l} \times S[n]^{k}$
	correspond bijectively with 
	the $(k,l)$--bipartitions (or $(k,l)$--orbit bipartition diagrams) 
	having at most $n$ blocks.
\end{proposition}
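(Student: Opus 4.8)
The plan is to exhibit explicit maps in both directions between the set of $S_n$-orbits on $S[n]^{l} \times S[n]^{k}$ and the set of $(k,l)$-bipartitions having at most $n$ blocks, and then to check that these maps are mutually inverse. Nearly all of the moving parts have already been assembled in the discussion preceding the statement; the work is in organising them.

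First I would make precise the forward map $\Phi$ from orbits to bipartitions. Given an orbit, pick any class representative $\binom{I}{J}$, form the $(k,l)$-orbit bipartition diagram by the procedure in the terracotta box, and record the bipartition $\{[x_1,y_1],\dots,[x_t,y_t]\}$ in which, for each value appearing in $I$ or $J$, $x_i$ is its multiplicity in $J$ and $y_i$ its multiplicity in $I$. One checks immediately that no pair is both zero (the value occurs in at least one row), that $\sum x_i = |J| = k$ and $\sum y_i = |I| = l$, and that $t \le n$ since the distinct values lie in $[n]$; so $\Phi$ lands in the claimed set. Well-definedness of $\Phi$ — independence of the chosen representative — is exactly the observation already recorded in the remark after the terracotta box: a different representative $\binom{\sigma(I)\pi_{\sigma(I),l}}{\sigma(J)\pi_{\sigma(J),k}}$ produces the same spiders up to reordering, hence the same underlying bipartition.

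Next I would define the backward map $\Psi$ via the six-step procedure in the blue box: draw the diagram with $x_i$ input wires and $y_i$ output wires at the $i$-th green node, and for each repetition-free $t$-tuple from $[n]$ label the green nodes, reorder the spiders into increasing label order, and propagate labels to the wire ends to obtain an element $\binom{I}{J}$. I would then verify $\Psi$ is well-defined, i.e. that the elements produced as the $t$-tuple ranges over all repetition-free $t$-tuples form a single $S_n$-orbit and exhaust it. For this, given two such tuples $(a_1,\dots,a_t)$ and $(b_1,\dots,b_t)$, choose $\sigma \in S_n$ with $\sigma(a_i)=b_i$; by the definition of the $S_n$-action in (\ref{sigmapairaction}), applying $\sigma$ to the element built from the first tuple (and using the reorderings $\pi_{\sigma(\cdot),k}$, $\pi_{\sigma(\cdot),l}$ that restore ascending order) yields the element built from the second. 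Conversely, applying an arbitrary $\sigma \in S_n$ to an element built from $(a_1,\dots,a_t)$ gives the element built from $(\sigma(a_1),\dots,\sigma(a_t))$, so the set of produced elements is exactly one full orbit.

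Finally I would check $\Phi$ and $\Psi$ are mutually inverse. For $\Phi\circ\Psi=\mathrm{id}$: in an element $\binom{I}{J}$ produced by $\Psi$ from $\{[x_i,y_i]\}$, the propagation step guarantees that each block contributes one distinct value with multiplicity $x_i$ in $J$ and $y_i$ in $I$, so reading off its bipartition returns $\{[x_i,y_i]\}$. For $\Psi\circ\Phi=\mathrm{id}$: starting from an orbit with representative $\binom{I}{J}$, let $a_1<\dots<a_t$ be its distinct values with multiplicities $(x_i,y_i)$ in $(J,I)$; running $\Psi$ on the recorded bipartition and on the particular $t$-tuple $(a_1,\dots,a_t)$ reproduces exactly $\binom{I}{J}$, since the spiders are already in increasing label order and propagation is forced, so $\Psi(\Phi(\text{orbit}))$ contains $\binom{I}{J}$ and hence equals the original orbit. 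The step I expect to be the main obstacle is the well-definedness of $\Psi$ together with the $\Psi\circ\Phi$ direction: both rest on the precise claim that any two elements of $S[n]^{l}\times S[n]^{k}$ giving rise to the same multiset of block data $\{[x_i,y_i]\}$ are $S_n$-conjugate, which requires constructing the conjugating permutation by matching up blocks and verifying that the ascending-order reorderings built into (\ref{sigmapairaction}) interact correctly with the propagation step. The remaining bookkeeping — the cardinality counts, the ``not both zero'' condition, and the bound $t\le n$ — is routine.
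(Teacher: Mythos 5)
Your proposal is correct and follows essentially the same route as the paper: the forward map is the terracotta-box diagram construction (with well-definedness being exactly the paper's remark on independence of the class representative), and the inverse is the blue-box labelling procedure applied to each $(k,l)$--bipartition with at most $n$ blocks. You simply make explicit the verifications the paper leaves informal, namely that the elements produced from all repetition-free $t$-tuples sweep out exactly one full $S_n$-orbit and that the two constructions are mutually inverse.
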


Combining Proposition \ref{basisorbits} and Proposition \ref{orbitsdiags},
we obtain

\begin{theorem} \label{orbitbasisSn}
	There is a basis of
	$\Hom_{S_n}(S^k(\mathbb{R}^{n}), S^l(\mathbb{R}^{n}))$
	that corresponds bijectively with
	the $(k,l)$--bipartitions having at most $n$ blocks.
	Consequently, its dimension is $p_n(k,l)$.
	Moreover, we call this basis of 
	$\Hom_{S_n}(S^k(\mathbb{R}^{n}), S^l(\mathbb{R}^{n}))$
	the \textbf{orbit basis}. 
\end{theorem}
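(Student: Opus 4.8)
The plan is simply to chain together the two bijections that Propositions \ref{basisorbits} and \ref{orbitsdiags} have already supplied. Recall from the proof of Proposition \ref{basisorbits} that, writing an arbitrary linear map as $f = \sum_{I,J} f_{I,J} E_{I,J}$, equivariance is equivalent to condition (\ref{centeq1}); that is, the scalars $f_{I,J}$ are constant on each orbit $\mathcal{O}$ of the $S_n$-action on $S[n]^{l} \times S[n]^{k}$. First I would make this explicit by setting, for each such orbit $\mathcal{O}$, the orbit sum $X_{\mathcal{O}} \coloneqq \sum_{\binom{I}{J} \in \mathcal{O}} E_{I,J}$, and noting that $\{X_{\mathcal{O}}\}$ is precisely the basis of $\Hom_{S_n}(S^k(\mathbb{R}^{n}), S^l(\mathbb{R}^{n}))$ produced by Proposition \ref{basisorbits}: distinct orbits give matrices with disjoint supports, hence linearly independent, and every equivariant $f$ is a linear combination of them by (\ref{centeq1}).

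Next I would apply Proposition \ref{orbitsdiags}, which identifies these orbits bijectively with the $(k,l)$--bipartitions (equivalently, the $(k,l)$--orbit bipartition diagrams) having at most $n$ blocks. Composing the two bijections, the orbit basis $\{X_{\mathcal{O}}\}$ is indexed bijectively by the $(k,l)$--bipartitions with at most $n$ blocks; I would write $X_\pi$ for the basis element attached to the bipartition $\pi$ and take this as the definition of the \textbf{orbit basis}. The dimension claim is then immediate: since $\{X_\pi\}$ is a basis, $\dim \Hom_{S_n}(S^k(\mathbb{R}^{n}), S^l(\mathbb{R}^{n}))$ equals the number of $(k,l)$--bipartitions having at most $n$ blocks, which is $p_n(k,l)$ by the definition of that quantity.

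Because both Propositions have already been established, there is essentially no obstacle remaining; the one point meriting a sentence of care is that the indexing is genuinely well defined — the bipartition associated to an orbit must not depend on the choice of class representative, nor on the order in which the spiders are drawn. This was verified in the remark following the diagram construction (reordering the labelled green nodes and then reordering the spiders into increasing label order returns the same bipartition), so the composite map from orbits to bipartitions is a genuine bijection, and the theorem follows.
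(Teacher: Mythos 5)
Your proposal is correct and follows essentially the same route as the paper, which obtains the theorem by combining Proposition \ref{basisorbits} (basis elements correspond to orbits of $S_n$ on $S[n]^{l} \times S[n]^{k}$) with Proposition \ref{orbitsdiags} (orbits correspond to $(k,l)$--bipartitions with at most $n$ blocks). Your explicit orbit sums $X_{\mathcal{O}}$ and the well-definedness check are welcome elaborations of details the paper leaves implicit or relegates to a remark, but the argument is the same.
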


To find the orbit basis element $X_\pi$ in 
$\Hom_{S_n}(S^k(\mathbb{R}^{n}), S^l(\mathbb{R}^{n}))$
that corresponds 
to a $(k,l)$--bipartition $\pi$ having at most $n$ blocks,
we represent $\pi$ as a $(k,l)$--orbit bipartition diagram $x_\pi$ 
and follow the same steps $1$ through $5$ inclusive as before, 
except we now form the matrix unit $E_{I,J}$ from each
element $\binom{I}{J}$ that comes from a $t$-length tuple. 
Adding together all of these matrix units (over all of the tuples) gives $X_\pi$.
Hence we have the following corollary:

\begin{corollary}
	The $S_n$-equivariant weight matrix from 
	$S^k(\mathbb{R}^{n})$
	to
	$S^l(\mathbb{R}^{n})$
	is $\sum_{\pi} \lambda_\pi{X_\pi}$ 
	for weights $\lambda_\pi$,
	where the sum
	is over all $(k,l)$--bipartitions $\pi$ having at most $n$ blocks.
\end{corollary}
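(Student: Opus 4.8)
The plan is to deduce the corollary directly from Theorem \ref{orbitbasisSn} together with the explicit construction of the basis elements $X_\pi$ described immediately before the statement. First I would note that, once the standard bases on $S^k(\mathbb{R}^n)$ and $S^l(\mathbb{R}^n)$ are fixed, an $S_n$-equivariant weight matrix between these spaces is exactly an element of $\Hom_{S_n}(S^k(\mathbb{R}^n), S^l(\mathbb{R}^n))$. Theorem \ref{orbitbasisSn} asserts that the orbit basis $\{X_\pi\}$, as $\pi$ ranges over all $(k,l)$--bipartitions having at most $n$ blocks, is a basis of this vector space. Hence every such weight matrix can be written as $\sum_\pi \lambda_\pi X_\pi$ for scalars $\lambda_\pi$, uniquely determined by the matrix, which is precisely the claimed form.

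The one point requiring a short verification is that the matrix $X_\pi$ produced by the stated recipe — running steps $1$ through $5$ of the blue-coloured box, forming the matrix unit $E_{I,J}$ from each resulting pair $\binom{I}{J}$, and summing over all $t$-length tuples — really is the orbit basis element attached to $\pi$. For this I would recall from the proof of Proposition \ref{basisorbits} that a linear map $f = \sum_{I,J} f_{I,J} E_{I,J}$ lies in $\Hom_{S_n}(S^k(\mathbb{R}^n), S^l(\mathbb{R}^n))$ if and only if the coefficient function $(I,J) \mapsto f_{I,J}$ is constant on each $S_n$-orbit of $S[n]^l \times S[n]^k$ under the action \eqref{sigmapairaction}; consequently the basis element indexed by an orbit $\mathcal{O}$ is $\sum_{\binom{I}{J} \in \mathcal{O}} E_{I,J}$. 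By Proposition \ref{orbitsdiags} the orbit corresponding to a given $(k,l)$--bipartition $\pi$ with $t$ blocks is enumerated by the blue-box procedure: as the $t$-length tuple ranges over all repetition-free tuples in $[n]$, steps $2$ through $5$ output exactly the elements $\binom{I}{J}$ of $\mathcal{O}$. Therefore summing the matrix units $E_{I,J}$ over the tuples yields $\sum_{\binom{I}{J} \in \mathcal{O}} E_{I,J}$, which is the orbit basis element $X_\pi$.

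Finally I would observe that distinct bipartitions $\pi$ give $X_\pi$ supported on disjoint sets of matrix units, so they are automatically linearly independent, and by Theorem \ref{orbitbasisSn} they span the whole of $\Hom_{S_n}(S^k(\mathbb{R}^n), S^l(\mathbb{R}^n))$; hence the expansion $\sum_\pi \lambda_\pi X_\pi$ of any equivariant weight matrix exists and the weights $\lambda_\pi$ are unique. I do not expect any genuine obstacle: the corollary is essentially Theorem \ref{orbitbasisSn} made concrete at the level of weight matrices, and the only step needing care is the bookkeeping in the second paragraph, namely that the diagram recipe enumerates each orbit exactly once and with no repeated pairs.
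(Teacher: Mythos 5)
Your proposal is correct and follows essentially the same route as the paper: the corollary is an immediate consequence of Theorem \ref{orbitbasisSn}, with the only substantive check being that the blue-box recipe produces $X_\pi = \sum_{\binom{I}{J} \in \mathcal{O}} E_{I,J}$ for the orbit $\mathcal{O}$ attached to $\pi$, which you verify just as the paper does via Propositions \ref{basisorbits} and \ref{orbitsdiags}. Your added remark on disjoint supports and uniqueness of the weights is a harmless refinement of the same argument.
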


\begin{example}
	Continuing on from Example \ref{orbittuple1}, we see that
	the $(1112236,122334666)$--entry of the basis matrix $X_\pi$ in 
	$\Hom_{S_6}(S^9(\mathbb{R}^{6}), S^7(\mathbb{R}^{6}))$
	corresponding to the $(9,7)$--bipartition $\pi =
\{
	[3,1],
	[2,2],
	[2,1],
	[1,3],
	[1,0]
\}$ is $1$.
\end{example}

In the Appendix, we provide a general procedure
summarising the results above for calculating the $S_n$-equivariant weight matrix 
from $S^k(\mathbb{R}^{n})$ to $S^l(\mathbb{R}^{n})$ from all of the $(k,l)$--orbit
bipartition diagrams having at most $n$ blocks.

\begin{example} \label{orbitexample}
	Suppose that we would like to find
	the $S_3$-equivariant weight matrix
	from
	$S^2(\mathbb{R}^{3})$
	to
	$S^1(\mathbb{R}^{3}) = \mathbb{R}^{3}$.

	We need to consider the $(2,1)$--orbit bipartition diagrams having at most $n = 3$ blocks.
	They are 
	\begin{equation}	
		\begin{aligned}
			\scalebox{0.7}{\tikzfig{orbit21diags}}
		\end{aligned}
	\end{equation}
	
	Each of these four diagrams corresponds to an orbit basis matrix in 
	$\Hom_{S_3}(S^2(\mathbb{R}^{3}), S^1(\mathbb{R}^{3}))$
	of size $3 \times 6$.
	Example \ref{orbitexampleapp} gives the calculations of these matrices.


	Hence, the $S_3$-equivariant weight matrix
	from
	$S^2(\mathbb{R}^{3})$
	to
	$S^1(\mathbb{R}^{3})$
	is of the form 
\begin{equation}
	\NiceMatrixOptions{code-for-first-row = \scriptstyle \color{blue},
                   	   code-for-first-col = \scriptstyle \color{blue}
	}
	\renewcommand{\arraystretch}{1}
	\begin{bNiceArray}{*{6}{c}}[first-row,first-col]
				& 1,1 	& 1,2	& 1,3	& 2,2 	& 2,3	& 3,3	\\
		1		& \lambda_1	& \lambda_3	& \lambda_3	& \lambda_2	& \lambda_4	& \lambda_2	\\
		2		& \lambda_2	& \lambda_3	& \lambda_4	& \lambda_1	& \lambda_3	& \lambda_2	\\
		3		& \lambda_2	& \lambda_4	& \lambda_3	& \lambda_2	& \lambda_3	& \lambda_1
	\end{bNiceArray}
\end{equation}
for weights $\lambda_1, \lambda_2, \lambda_3, \lambda_4 \in \mathbb{R}$.
\end{example}


\section{Diagram Basis of
	$\Hom_{S_n}(S^k(\mathbb{R}^{n}), S^l(\mathbb{R}^{n}))$
	}

We are able to obtain a second basis of 
$\Hom_{S_n}(S^k(\mathbb{R}^{n}), S^l(\mathbb{R}^{n}))$
that we will show is more efficient for performing 
linear transformations
in the following section.
To obtain this second basis, we first need to define the following vector space
and reformulate the construction given in the previous section
as a linear map. 

\begin{definition}
	We define the formal $\mathbb{R}$-linear span of the 
	set of all $(k,l)$--orbit bipartition diagrams
	to be the vector space $SP_k^l(n)$.
	We call this vector space the 
	\textbf{spherical partition vector space} 
	since it is an adaptation of the spherical partition algebra
	that was first discovered by \citet{bastias2024}.
	We also call the generating basis the \textbf{orbit basis} of $SP_k^l(n)$.
\end{definition}

\begin{figure*}[tb]
	\begin{tcolorbox}[colback=melon!10, colframe=melon!40, coltitle=black, title={\bfseries A General Procedure for Calculating 
the $S_n$-Equivariant Weight Matrix from 
		$S^k(\mathbb{R}^{n})$
		to
		$S^l(\mathbb{R}^{n})$
		using the Diagram Basis of $SP_k^l(n)$.},
	fonttitle=\bfseries]
		Perform the following steps:
	\begin{enumerate}
		\item Calculate all of the $(k,l)$--bipartitions $\pi$ 
			that have at most $n$ blocks, and
		express each $(k,l)$--bipartition $\pi$ as a $(k,l)$--bipartition
		diagram $d_\pi$ in $SP_k^l(n)$.
	\item Then, if $t$ is the number of blocks in $\pi$,
		form all possible $t$-length tuples with elements in $[n]$, 
		allowing for repetitions amongst the elements.
	\item For each $t$-length tuple, label the central green nodes in $d_\pi$ 
		from left to right with the elements of the tuple.
		Reorder the spiders in $d_\pi$ such that they are in increasing
		order from left to right. 
		Fuse together any spiders whose central green nodes are labelled
		with the same value.
		Then, for each spider, 
		propagate the block label to the end of each wire. 
		We obtain the matrix unit $E_{I,J}$ from this diagram
		by setting $I$ to be the labels in the top row and
		$J$ to be the labels in the bottom row.
	\item Since each tuple corresponds to a matrix unit, add together
		all of the resulting matrix units from all of the tuples
		to obtain the basis matrix $D_\pi$ that is associated with $d_\pi$.
		Attach a weight $\lambda_\pi \in \mathbb{R}$ to each matrix $D_\pi$.
		Finally, calculate $\sum \lambda_\pi D_\pi$ to give the overall weight matrix.
	\end{enumerate}
	\end{tcolorbox}
  	\label{diagrambasissummaryprocedure}
\end{figure*}

\begin{definition} \label{partmatrixmap}
	For all non-negative integers $k, l$ and positive integers $n$, 
	we define a surjective map
	\begin{equation} \label{partmatrixmapexp}
		\Gamma_{k,n}^l : SP_k^l(n) \rightarrow 
		\Hom_{S_n}(S^k(\mathbb{R}^{n}), S^l(\mathbb{R}^{n}))
	\end{equation}
	on the orbit basis 
	of $SP_k^l(n)$ as follows, and extend linearly:
	\begin{equation} \label{partcentsurj}
		\Gamma_{k,n}^l(x_\pi) \coloneqq
		\begin{cases}
			X_\pi & \text{if $\pi$ has $n$ or fewer blocks} \\
			0     & \text{if $\pi$ has more than $n$ blocks}
		\end{cases}
	\end{equation}
\end{definition}

We now introduce a new diagram $d_\pi$ for each $(k,l)$--bipartition $\pi$:
it is formed in exactly the same way as a $(k,l)$--orbit bipartition diagram,
except the central green nodes are now filled in.
We call such a diagram a \textbf{$\bm{(k,l)}$--bipartition diagram}.

\begin{example}
	Let $\pi$ be the $(9,7)$--bipartition
	\begin{equation}
		\pi
		\coloneqq
		\{
			[3,1],
			[2,2],
			[2,1],
			[1,3],
			[1,0]
		\}
	\end{equation}
	Then its corresponding $(9,7)$--bipartition diagram, $d_\pi$, is
	\begin{equation}	
		\begin{aligned}
			\scalebox{0.7}{\tikzfig{diagrambasis1}}
		\end{aligned}
	\end{equation}
	As with the orbit bipartition diagrams, the order of the spiders
	corresponding to the blocks does not matter in drawing the bipartition diagram.
\end{example}

We define $d_\pi$ to be an element of $SP_k^l(n)$ as follows:
\begin{equation} \label{orbitbasis}
	d_\pi 
	\coloneqq
	\sum_{\pi \preceq \theta} x_\theta
\end{equation}
where $\preceq$ is the partial order defined on all $(k,l)$--bipartitions as follows:
$\pi \preceq \theta$ if every pair in $\theta$ is the sum of pairs in $\pi$
such that each pair in $\pi$ appears in exactly one sum.

\begin{example}
	We see that if 	
	$\pi
		\coloneqq
		\{
			[1,1],
			[1,0]
		\}
	$
	then, by (\ref{orbitbasis}), we have that
	\begin{equation}	
		\begin{aligned}
			\scalebox{0.7}{\tikzfig{partialorder}}
		\end{aligned}
	\end{equation}
\end{example}

\begin{lemma} \label{Homdiagbasis}
	The set of $(k,l)$--bipartition diagrams $\{d_\pi\}$
	forms a basis of $SP_k^l(n)$.
	We call this basis the \textbf{diagram basis} of $SP_k^l(n)$.
\end{lemma}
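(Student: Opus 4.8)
The plan is to exhibit the transition matrix between the orbit basis $\{x_\pi\}$ of $SP_k^l(n)$ and the proposed family $\{d_\pi\}$, show it is unitriangular with respect to a suitable ordering of the $(k,l)$--bipartitions having at most $n$ blocks, and conclude that it is invertible. Since $\{x_\pi\}$ is a basis of $SP_k^l(n)$ by the very definition of $SP_k^l(n)$, and $\{d_\pi\}$ is indexed by the same set of bipartitions, an invertible transition matrix immediately gives that $\{d_\pi\}$ is a basis.

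First I would record the two elementary facts about the partial order $\preceq$ that drive the argument. Reflexivity $\pi \preceq \pi$ holds by writing each pair of $\pi$ as the one-term sum of itself; this is what guarantees that $x_\pi$ occurs in $d_\pi$ with coefficient $1$. The key fact is the monotonicity statement: if $\pi \preceq \theta$ then $\theta$ has at most as many blocks as $\pi$, with equality if and only if $\pi = \theta$. Indeed, a witness to $\pi \preceq \theta$ amounts to a set partition of the blocks of $\pi$ into groups, one group per block of $\theta$, so the number of blocks of $\theta$ is the number of groups; if this equals the number of blocks of $\pi$, every group is a singleton, which forces $\theta = \pi$. Two consequences follow: $\preceq$ is antisymmetric (transitivity, needed to call it a partial order at all, is just composition of such groupings), and, crucially, whenever $\pi$ has at most $n$ blocks so does every $\theta$ with $\pi \preceq \theta$, so that $d_\pi \coloneqq \sum_{\pi \preceq \theta} x_\theta$ is a well-defined element of $SP_k^l(n)$.

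Next I would fix any linear order on the $(k,l)$--bipartitions having at most $n$ blocks that refines ``more blocks precedes fewer blocks'', breaking ties arbitrarily. With respect to this order the defining relation reads $d_\pi = x_\pi + \sum_{\pi \preceq \theta,\ \theta \ne \pi} x_\theta$, where by the monotonicity fact every $\theta$ in the trailing sum has strictly fewer blocks than $\pi$, hence comes strictly later. Thus the matrix $M$ with $M_{\pi,\theta} = 1$ if $\pi \preceq \theta$ and $0$ otherwise is upper triangular with $1$'s on the diagonal, and is therefore invertible (over $\mathbb{Z}$, in fact). Hence the $\{d_\pi\}$ are obtained from the $\{x_\pi\}$ by an invertible linear change of coordinates, so they form a basis of $SP_k^l(n)$.

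The main obstacle is precisely the monotonicity claim together with its equality case; everything else is bookkeeping. It requires unwinding the definition of $\preceq$ to see that a relation $\pi \preceq \theta$ is the same data as a partition of the block-set of $\pi$ indexed by the block-set of $\theta$, and then arguing no further collapsing can occur when the two block counts coincide. I would also pause to note that the only coefficients appearing in $M$ are $0$ and $1$ — immediate from (\ref{orbitbasis}) — so that the diagonal entries are exactly $1$ and triangularity alone suffices for invertibility.
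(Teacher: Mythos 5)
Your proof is correct and follows essentially the same route as the paper's: order the bipartitions by block count so that (\ref{orbitbasis}) makes the transition matrix between $\{d_\pi\}$ and the orbit basis $\{x_\pi\}$ unitriangular, hence invertible; your explicit justification that $\pi \preceq \theta$ forces $\theta$ to have at most as many blocks as $\pi$, with equality only when $\theta = \pi$, is a detail the paper leaves implicit. The only slip is that you restrict the index set to bipartitions with at most $n$ blocks, whereas $SP_k^l(n)$ is by definition spanned by \emph{all} $(k,l)$--orbit bipartition diagrams, so the lemma's family $\{d_\pi\}$ must range over all $(k,l)$--bipartitions — your argument goes through verbatim once that restriction is dropped.
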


We can combine the results in this section to define another map
\begin{equation}
	\Theta_{k,n}^l: SP_k^l(n) \rightarrow 
	\Hom_{S_n}(S^k(\mathbb{R}^{n}), S^l(\mathbb{R}^{n}))
\end{equation}
this time on the diagram basis of $SP_k^l(n)$.

\begin{definition} \label{mapthetakl}
We define $\Theta_{k,n}^l(d_\pi)$ in the following way.
First change the basis in $SP_k^l(n)$ 
from the diagram basis to the orbit basis
using (\ref{orbitbasis}),
and then apply $\Gamma_{k,n}^l$ to each of the orbit basis diagrams
that appear in (\ref{orbitbasis}) for $d_\pi$.
We get that $\Theta_{k,n}^l(d_\pi)$ is defined to be
\begin{equation} \label{diagbasiscalcs}
	\Gamma_{k,n}^l\Bigl(\sum_{\pi \preceq \theta} x_\theta\Bigr)
	= 
	\sum_{\pi \preceq \theta} \Gamma_{k,n}^l(x_\theta)
	= 
	\sum_{\pi \preceq  \theta, \; b(\theta) \leq n} X_\theta 
\end{equation}
where $b(\theta)$ is the number of blocks in $\theta$.
Now define
\begin{equation} \label{diagbasisdefn}
	D_\pi \coloneqq
	\sum_{\pi \preceq  \theta, \; b(\theta) \leq n} X_\theta 
\end{equation}
Clearly,
$D_\pi$ is an element of 
$\Hom_{S_n}(S^k(\mathbb{R}^{n}), S^l(\mathbb{R}^{n}))$.
Hence 
we have defined $\Theta_{k,n}^l(d_\pi) \coloneqq D_\pi$.
$\Theta_{k,n}^l$ becomes a 
map on $SP_k^l(n)$ by extending the definition
by linearity to the entire diagram basis of $SP_k^l(n)$.
We claim the following. 
\end{definition}

\begin{theorem} \label{diagbasisSn}
	For all non-negative integers $k, l$ and positive integers $n$,
	the set 
		$\{D_\pi \mid d_\pi \text{ has at most } n \text{ blocks}\}$
	forms a basis of
	$\Hom_{S_n}(S^k(\mathbb{R}^{n}), S^l(\mathbb{R}^{n}))$.
	We call this basis the \textbf{diagram basis} of
	$\Hom_{S_n}(S^k(\mathbb{R}^{n}), S^l(\mathbb{R}^{n}))$.
	In particular, the map $\Theta_{k,n}^l$ is surjective.
\end{theorem}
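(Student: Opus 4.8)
The plan is to realise $\{D_\pi \mid d_\pi \text{ has at most } n \text{ blocks}\}$ as the image of the already-established orbit basis of $\Hom_{S_n}(S^k(\mathbb{R}^{n}), S^l(\mathbb{R}^{n}))$ under an invertible change of basis. By Theorem \ref{orbitbasisSn}, the set $B \coloneqq \{X_\theta \mid \theta \text{ a } (k,l)\text{--bipartition with } b(\theta) \leq n\}$ is a basis of $H \coloneqq \Hom_{S_n}(S^k(\mathbb{R}^{n}), S^l(\mathbb{R}^{n}))$, so $\dim H = p_n(k,l)$. The proposed set $\mathcal{D} \coloneqq \{D_\pi \mid b(\pi) \leq n\}$ is indexed by exactly the same collection of $(k,l)$--bipartitions, hence $|\mathcal{D}| = p_n(k,l) = \dim H$; in particular it will suffice to show $\mathcal{D}$ is linearly independent (equivalently, spanning), although the argument below produces ``basis'' directly.

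The key observation is about the partial order $\preceq$. First I would note that for any $(k,l)$--bipartition $\pi$ with $b(\pi) \leq n$, every $\theta$ with $\pi \preceq \theta$ automatically satisfies $b(\theta) \leq b(\pi) \leq n$, since $\theta$ is obtained from $\pi$ by replacing groups of its pairs by their sums, which can only decrease the number of blocks. Hence the block constraint in (\ref{diagbasisdefn}) is vacuous in this range and $D_\pi = \sum_{\pi \preceq \theta} X_\theta$, a sum over elements of $B$. The transition matrix $M$ expressing $\mathcal{D}$ in terms of $B$ therefore has entries $M_{\pi,\theta} = 1$ if $\pi \preceq \theta$ and $0$ otherwise, with both indices ranging over $(k,l)$--bipartitions having at most $n$ blocks. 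Ordering this index set by non-increasing number of blocks (any linear extension of $\preceq$ works), $M$ is upper triangular with every diagonal entry equal to $1$: the diagonal entries are $1$ because $\pi \preceq \pi$; and if $M_{\pi,\theta} \neq 0$ with $\pi \neq \theta$, then $\pi \preceq \theta$ forces $b(\theta) < b(\pi)$ — the inequality $b(\theta) \leq b(\pi)$ from the previous sentence is strict unless $\theta = \pi$, because a non-trivial grouping of the pairs of $\pi$ has strictly fewer parts — so $\theta$ comes strictly later in the ordering. Thus $\det M = 1$, $M$ is invertible, and $\mathcal{D}$ is a basis of $H$ obtained from $B$ by an invertible change of basis. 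This is precisely the restricted version of the triangularity used for Lemma \ref{Homdiagbasis}; the point worth flagging is that the vacuousness of the block constraint is exactly what keeps the restricted transition matrix genuinely triangular rather than merely block-structured.

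Finally, surjectivity of $\Theta_{k,n}^l$ is immediate: since $\Theta_{k,n}^l(d_\pi) = D_\pi$ for every $\pi$, the image of $\Theta_{k,n}^l$ contains the spanning set $\mathcal{D}$ and hence equals $H$. (Equivalently, one can note $\Theta_{k,n}^l = \Gamma_{k,n}^l$ as maps on $SP_k^l(n)$, because the two linear maps agree on the diagram basis via (\ref{orbitbasis}), and $\Gamma_{k,n}^l$ is surjective since its image is the span of $B$, which is $H$ by Theorem \ref{orbitbasisSn}.) I do not expect any substantial obstacle here; the entire content is the triangularity observation for the restricted index set, and the remaining steps — the cardinality match, the vacuous block constraint, and degenerate cases such as $k = l = 0$ or $n$ so large that no bipartition is excluded — are routine bookkeeping.
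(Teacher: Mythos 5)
Your proposal is correct and follows essentially the same route as the paper: both establish that the transition matrix from $\{D_\pi\}$ to the orbit basis $\{X_\theta\}$, with both index sets restricted to $(k,l)$--bipartitions having at most $n$ blocks, is unitriangular with respect to an ordering by number of blocks, and deduce from Theorem \ref{orbitbasisSn} that $\{D_\pi\}$ is a basis and hence that $\Theta_{k,n}^l$ is surjective. Your explicit observation that $\pi \preceq \theta$ forces $b(\theta) \leq b(\pi)$, so that the constraint $b(\theta) \leq n$ in (\ref{diagbasisdefn}) is vacuous whenever $b(\pi) \leq n$, is a detail the paper leaves implicit but is precisely what makes the restricted transition matrix genuinely square and unitriangular.
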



\begin{figure*}[tb]
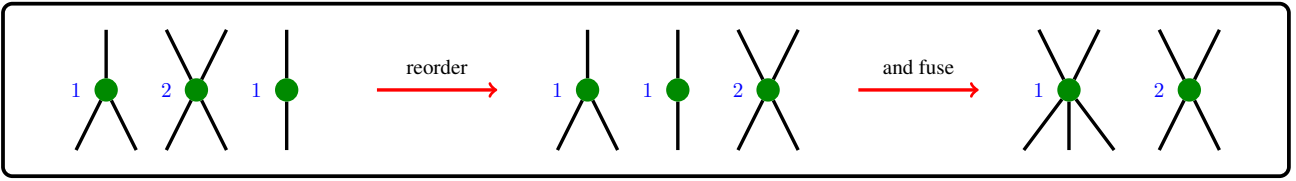

	\begin{tcolorbox}[colback=white!02, colframe=black]
	\begin{center}
		\scalebox{0.8}{\tikzfig{fusespiders}}
	\end{center}
	\end{tcolorbox}
		\caption{After labelling the central green nodes of a $(5,4)$--bipartition diagram
	with the tuple $(1,2,1)$ and reordering the spiders into 
	ascending numerical order,
	we fuse 
	together any spiders that are labelled with the same value.}

  	\label{fusespiders}
\end{figure*}


Note that whilst $\Theta_{k,n}^l$ is defined on all diagram basis elements in
$SP_k^l(n)$, it is only those that have at most $n$ blocks whose image under 
$\Theta_{k,n}^l$ creates a basis of 
$\Hom_{S_n}(S^k(\mathbb{R}^{n}), S^l(\mathbb{R}^{n}))$.
In particular, we have
the following corollary.

\begin{corollary}
	The $S_n$-equivariant weight matrix from 
	$S^k(\mathbb{R}^{n})$
	to
	$S^l(\mathbb{R}^{n})$
	is $\sum_{\pi} \lambda_\pi{D_\pi}$ 
	for weights $\lambda_\pi$,
	where the sum
	is over all $(k,l)$--bipartitions $\pi$ having at most $n$ blocks.
\end{corollary}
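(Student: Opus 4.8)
The plan is to exhibit $\{D_\pi \mid b(\pi) \le n\}$ as the image of the orbit basis $\{X_\pi \mid b(\pi) \le n\}$ of Theorem~\ref{orbitbasisSn} under an invertible change of basis, so that it is again a basis, and then to read off surjectivity of $\Theta_{k,n}^l$.

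The one combinatorial fact I would establish first is that $\pi \preceq \theta$ implies $b(\theta) \le b(\pi)$, with equality if and only if $\theta = \pi$. This is immediate from the definition of $\preceq$: the pairs of $\theta$ are obtained by partitioning the pairs of $\pi$ into nonempty groups and summing each group coordinatewise, so $b(\theta)$ equals the number of groups, which is at most $b(\pi)$ and equals $b(\pi)$ exactly when every group is a singleton, i.e.\ $\theta = \pi$. The key consequence is that when $b(\pi) \le n$, every $\theta$ with $\pi \preceq \theta$ automatically satisfies $b(\theta) \le n$; hence the truncation in Definition~\ref{mapthetakl} does nothing for such $\pi$, and $D_\pi = X_\pi + \sum_{\pi \prec \theta} X_\theta$, where every summand other than $X_\pi$ is indexed by a bipartition with strictly fewer blocks than $\pi$ (and, in particular, still with at most $n$ blocks, so that $D_\pi$ is genuinely an $\mathbb{R}$-combination of the orbit basis elements indexed by bipartitions with at most $n$ blocks).

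Next I would enumerate the set $P_n$ of $(k,l)$-bipartitions with at most $n$ blocks as $\pi^{(1)}, \dots, \pi^{(N)}$, where $N = p_n(k,l)$, ordered so that $b(\pi^{(1)}) \ge \cdots \ge b(\pi^{(N)})$. By the previous paragraph, $D_{\pi^{(i)}}$ equals $X_{\pi^{(i)}}$ plus a $0/1$-combination of the $X_{\pi^{(j)}}$ with $b(\pi^{(j)}) < b(\pi^{(i)})$, and all such $j$ satisfy $j > i$; so the matrix expressing $D_{\pi^{(1)}}, \dots, D_{\pi^{(N)}}$ in terms of $X_{\pi^{(1)}}, \dots, X_{\pi^{(N)}}$ is upper triangular with $1$'s on the diagonal, hence invertible. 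Since $\{X_\pi \mid \pi \in P_n\}$ is a basis of $\Hom_{S_n}(S^k(\mathbb{R}^{n}), S^l(\mathbb{R}^{n}))$ by Theorem~\ref{orbitbasisSn}, and this space has dimension $p_n(k,l) = N$, the $N$ vectors $D_{\pi^{(i)}}$ are linearly independent and therefore also form a basis.

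Finally, I would note that surjectivity of $\Theta_{k,n}^l$ follows at once: by Definition~\ref{mapthetakl} we have $\Theta_{k,n}^l(d_\pi) = D_\pi$ for every $(k,l)$-bipartition $\pi$, so the image of the linear map $\Theta_{k,n}^l$ contains the basis $\{D_\pi \mid \pi \in P_n\}$ and hence is all of $\Hom_{S_n}(S^k(\mathbb{R}^{n}), S^l(\mathbb{R}^{n}))$. I do not expect a genuine obstacle here; the only point requiring care is the interaction between the triangularity argument and the truncation ``$b(\theta) \le n$'' in the definition of $D_\pi$, which is exactly what the consequence noted in the second paragraph disposes of. (Alternatively one could observe that $\Theta_{k,n}^l$ agrees, via \eqref{orbitbasis} and \eqref{diagbasiscalcs}, with the map $\Gamma_{k,n}^l$ of Definition~\ref{partmatrixmap}, which is already asserted to be surjective, but the argument above is self-contained.)
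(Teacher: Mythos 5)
Your proposal is correct and follows essentially the same route as the paper: the paper's proof of Theorem~\ref{diagbasisSn} likewise deduces that $\{D_\pi\}$ is a basis from the unitriangularity of the transition matrix to the orbit basis $\{X_\theta\}$ (ordered by number of blocks), with surjectivity of $\Theta_{k,n}^l$ and the corollary then immediate. The one thing you add that the paper leaves implicit is the explicit verification that $\pi \preceq \theta$ forces $b(\theta) \le b(\pi)$, so the truncation $b(\theta)\le n$ in \eqref{diagbasisdefn} never removes the leading term $X_\pi$ when $b(\pi)\le n$ --- a worthwhile detail, but not a different argument.
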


We would like to be able to calculate the diagram basis of 
$\Hom_{S_n}(S^k(\mathbb{R}^{n}), S^l(\mathbb{R}^{n}))$
directly from the diagram basis of $SP_k^l(n)$, 
without having to go via the orbit basis of $SP_k^l(n)$.
We claim the following:

\begin{proposition} \label{diagbasisdirect}
	The diagram basis element $D_\pi$ in 
	$\Hom_{S_n}(S^k(\mathbb{R}^{n}), S^l(\mathbb{R}^{n}))$
	that corresponds 
	to a $(k,l)$--bipartition diagram $d_\pi$ having some $t \leq n$ blocks
	can be found as follows:
	we follow the same steps $1$ through $5$ inclusive in the blue-coloured box as before, 
	except in Step $1$ we now allow repetitions amongst the elements 
	in each $t$-length tuple,
	and in Step $3$, after reordering the spiders, we fuse together any central green nodes
	that are labelled with the same value. See Figure \ref{fusespiders} for an example.
	Adding together all of the resulting matrix units gives $D_\pi$.
\end{proposition}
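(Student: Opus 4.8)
The plan is to peel back the definition of $D_\pi$ and match it term by term against the output of the modified procedure, reducing everything to a combinatorial bijection between repeated-label tuples on $d_\pi$ and injective-label tuples on the various coarsenings of $\pi$. First I would observe that since $d_\pi$ has $t\le n$ blocks, every $(k,l)$--bipartition $\theta$ with $\pi\preceq\theta$ automatically satisfies $b(\theta)\le t\le n$, so the side condition in (\ref{diagbasisdefn}) is vacuous and $D_\pi=\sum_{\pi\preceq\theta}X_\theta$, the sum running over all coarsenings of $\pi$. By Theorem \ref{orbitbasisSn} and the recipe following it, each $X_\theta$ equals $\sum_{\binom{I}{J}\in O_\theta}E_{I,J}$, where $O_\theta\subseteq S[n]^l\times S[n]^k$ is the $S_n$--orbit attached to $\theta$. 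Since distinct bipartitions index distinct (hence disjoint) orbits by Proposition \ref{orbitsdiags}, this gives $D_\pi=\sum_{\binom{I}{J}}E_{I,J}$ with $\binom{I}{J}$ ranging over the disjoint union $\bigsqcup_{\pi\preceq\theta}O_\theta$. It then suffices to show that this disjoint union is exactly the set of elements $\binom{I}{J}$ produced by the modified procedure applied to $d_\pi$ (each contributing its matrix unit once).

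For one inclusion, I would take a $t$--tuple $\mathbf{a}\in[n]^t$, label the $t$ spiders of $d_\pi$, and note that the fibres of $\mathbf{a}$ partition the blocks of $\pi$. Fusing the equally--labelled spiders produces a diagram whose $i$-th spider has $x$- and $y$-degrees equal to the block-degree sums over the $i$-th fibre, i.e. a $(k,l)$--bipartition diagram $d_\theta$ with $\pi\preceq\theta$ --- which is precisely the definition of $\preceq$ --- carrying pairwise distinct block labels. Propagating the fused labels to the wire ends gives the same $\binom{I}{J}$ as recording, for each block of $\theta$, the common label on its wires, which is exactly the orbit element of $O_\theta$ obtained from the injective tuple of distinct values of $\mathbf{a}$ via the blue-coloured box. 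Hence every $\binom{I}{J}$ coming out of the modified procedure lies in $\bigsqcup_{\pi\preceq\theta}O_\theta$.

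For the reverse inclusion, I would fix $\theta$ with $\pi\preceq\theta$ and an element $\binom{I}{J}\in O_\theta$; by the orbit recipe it arises by labelling the $b(\theta)$ spiders of $d_\theta$ with some injective tuple $(c_1,\dots,c_{b(\theta)})$ and propagating. The relation $\pi\preceq\theta$ supplies a partition of the blocks of $\pi$ into $b(\theta)$ groups, the $i$-th group summing to the $i$-th block of $\theta$; assigning $c_i$ to every $\pi$-block in group $i$ defines a tuple $\mathbf{a}\in[n]^t$ whose fibres are exactly those groups. Running the modified procedure on $d_\pi$ with this $\mathbf{a}$ re-fuses precisely those spiders, recovers $d_\theta$ with labels $(c_i)$, and propagates back to $\binom{I}{J}$. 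Together with the first paragraph this proves that adding the resulting matrix units yields $D_\pi$.

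I expect the main obstacle to be the careful verification that ``fuse, reorder, then propagate'' computes the same $\binom{I}{J}$ as ``propagate on a single fused spider carrying the shared label'', and that the fibre partition of $\mathbf{a}$ is a legitimate two-way witness for $\pi\preceq\theta$; in particular one must track how reordering the spiders into increasing label order interacts with the fusion so that the surviving distinct labels genuinely form an injective labelling of $\theta$'s blocks in the sense required by the orbit recipe, and confirm that the resulting map from tuples to orbit elements is onto each $O_\theta$. The remaining steps are routine once Definitions \ref{partmatrixmap} and \ref{mapthetakl} and Theorem \ref{orbitbasisSn} are in hand.
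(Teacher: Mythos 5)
Your proof is correct, and it rests on the same combinatorial fact as the paper's proof --- that labelling the $t$ spiders of $d_\pi$ with a possibly repeating tuple and fusing equally labelled spiders realises exactly the injectively labelled coarsenings $\theta \succeq \pi$ --- but it is organised differently. The paper argues entry by entry: it fixes a position $(I,J)$, asks whether $(I,J)$ labels the central nodes of $d_\pi$ consistently, and shows that in the negative case every $X_\theta$ appearing in (\ref{diagbasisdefn}) vanishes there, while in the positive case exactly one $X_\theta$ (the one obtained by fusing the equally labelled spiders) contributes a $1$. You instead expand $D_\pi=\sum_{\pi\preceq\theta}X_\theta$ globally, write each $X_\theta$ as the sum of matrix units over its orbit, and exhibit a two-way correspondence between label tuples on $d_\pi$ and elements of the disjoint union of orbits. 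Your version makes the surjectivity onto each $O_\theta$ explicit (the paper leaves it implicit in the phrase ``consistently \textit{and distinctly}'') and records the useful observation that $b(\theta)\le t\le n$ makes the block-count restriction in (\ref{diagbasisdefn}) vacuous here. Two minor cautions, neither of which is a gap: when $\pi$ has repeated blocks, several tuples $\mathbf{a}$ yield the same $\binom{I}{J}$ after reordering, so ``adding together all of the resulting matrix units'' must be read as summing over the \emph{set} of distinct orbit elements --- a convention the paper already uses silently for the orbit basis and which your parenthetical ``each contributing its matrix unit once'' correctly adopts, and which the paper's entry-wise formulation sidesteps entirely; and the witness partition for $\pi\preceq\theta$ need not be unique, but your reverse inclusion only requires some witness, so no harm is done.
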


In the orange-coloured box, we provide a general procedure
for calculating the $S_n$-equivariant weight matrix 
from $S^k(\mathbb{R}^{n})$ to $S^l(\mathbb{R}^{n})$ using 
the diagram basis of $SP_k^l(n)$.

\begin{example} \label{diagramexample}
	We return to Example \ref{orbitexample} and calculate the
	$S_3$-equivariant weight matrix
	from
	$S^2(\mathbb{R}^{3})$
	to
	$S^1(\mathbb{R}^{3})$ using the diagram basis of $SP_2^1(3)$ instead.

	We now consider the $(2,1)$--bipartition diagrams having at most $n = 3$ blocks:
	\begin{equation} \label{bipartdiagsex}
		\begin{aligned}
			\scalebox{0.7}{\tikzfig{diagram21diags}}
		\end{aligned}
	\end{equation}
	As before, the calculations of the corresponding matrices are given
	in Example \ref{diagramexampleapp}.

	Hence, the $S_3$-equivariant weight matrix
	from
	$S^2(\mathbb{R}^{3})$
	to
	$S^1(\mathbb{R}^{3})$
	using the diagram basis
	is of the form 
\begin{equation} \label{diagweightmatex}
	\NiceMatrixOptions{code-for-first-row = \scriptstyle \color{blue},
                   	   code-for-first-col = \scriptstyle \color{blue}
	}
	\renewcommand{\arraystretch}{1}
	\begin{bNiceArray}{*{6}{c}}[first-row,first-col]
				& 1,1 	& 1,2	& 1,3	& 2,2 	& 2,3	& 3,3	\\
		1		& \lambda_{1,2,3,4}	& \lambda_{3,4}	& \lambda_{3,4}	& \lambda_{2,4}	& \lambda_4	& \lambda_{2,4}	\\
		2		& \lambda_{2,4}	& \lambda_{3,4}	& \lambda_4	& \lambda_{1,2,3,4}	& \lambda_{3,4}	& \lambda_{2,4}	\\
		3		& \lambda_{2,4}	& \lambda_4	& \lambda_{3,4}	& \lambda_{2,4}	& \lambda_{3,4}	& \lambda_{1,2,3,4}
	\end{bNiceArray}
\end{equation}
for weights $\lambda_1, \lambda_2, \lambda_3, \lambda_4 \in \mathbb{R}$,
	where $\lambda_A \coloneqq \sum_{i \in A} \lambda_i$.
\end{example}

\begin{remark}
	In Example \ref{deepsetsexample} in the Appendix,
	we show that we can recover the Deep Sets \citep{zaheer2017}
	characterisation of any $S_n$-equivariant weight matrix	
	from
	$\mathbb{R}^{n}$
	to 
	$\mathbb{R}^{n}$
	using the diagram basis of $SP_1^1(n)$. 
\end{remark}

\section{Embedding into Tensor Power Space}

There is a useful way of representing any $S_n$-equivariant weight matrix $W$
from 
$S^k(\mathbb{R}^{n})$
to 
$S^l(\mathbb{R}^{n})$
as a matrix from
$(\mathbb{R}^{n})^{\otimes k}$
to 
$(\mathbb{R}^{n})^{\otimes l}$,
namely by embedding $S^k(\mathbb{R}^{n})$ into $(\mathbb{R}^{n})^{\otimes k}$
and similarly for $S^l(\mathbb{R}^{n})$.
We call the latter matrix the \textbf{unrolled} $S_n$-equivariant weight matrix,
and the process of representing the original matrix in this way \textbf{unrolling}.

The $(I,J)$--entry in the unrolled weight matrix is the same as
the $(I\pi_{I,l}, J\pi_{J,k})$--entry in $W$, where 
$\pi_{I,l}$, $\pi_{J,k}$ are any permutations in $S_l$ and $S_k$, respectively,
that restore 
ascending numerical order
in $I$ and $J$, respectively.

This is because the $k^{\text{th}}$ symmetric power of $\mathbb{R}^{n}$ 
can be thought of as the quotient space of $(\mathbb{R}^{n})^{\otimes k}$ by 
the ideal generated by elements of the form $x \otimes y - y \otimes x$.

The benefit of unrolling an $S_n$-equivariant weight matrix to become a map
from
$(\mathbb{R}^{n})^{\otimes k}$
to 
$(\mathbb{R}^{n})^{\otimes l}$
is that we can now operate on symmetric tensors in $(\mathbb{R}^{n})^{\otimes k}$ whilst
retaining the original properties of the matrix.
Going forward, we now consider any weight matrix to be in its unrolled form.

\begin{example}
	Continuing on from Example \ref{orbitexample}, 
	we see that the unrolled $S_3$-equivariant weight matrix	
	from
	$(\mathbb{R}^{3})^{\otimes 2}$
	to 
	$\mathbb{R}^{3}$
	is
\begin{equation}
	\NiceMatrixOptions{code-for-first-row = \scriptstyle \color{blue},
                   	   code-for-first-col = \scriptstyle \color{blue}
	}
	\renewcommand{\arraystretch}{1}
	\begin{bNiceArray}{*{9}{c}}[first-row,first-col]
		& 1,1 	& 1,2	& 1,3	& 2,1 	& 2,2 	& 2,3	& 3,1	& 3,2	& 3,3	\\
		1		& \lambda_1	& \lambda_3	& \lambda_3	& \lambda_3	& \lambda_2	& \lambda_4	& \lambda_3	& \lambda_4	& \lambda_2	\\
		2		& \lambda_2	& \lambda_3	& \lambda_4	& \lambda_3	& \lambda_1	& \lambda_3	& \lambda_4	& \lambda_3	& \lambda_2	\\
		3		& \lambda_2	& \lambda_4	& \lambda_3	& \lambda_4	& \lambda_2	& \lambda_3	& \lambda_3	& \lambda_3	& \lambda_1
	\end{bNiceArray}
\end{equation}
\end{example}


\section{Map Label Notation}


In practice, when it comes to implementing the transformation of an input tensor $T$
by an unrolled weight matrix $W$, expressed in tensor form,
restrictions on memory make it impractical for the entire weight matrix to be stored in memory.
Instead we can use what we term \textbf{map label notation} to describe,
for each basis matrix $D_{\pi}$ that appears in $W$, 
the transformation of an input $T$ into its output $D_{\pi}(T)$.
It is here where the diagram basis shines over the orbit basis,
in that it is much easier to vectorize
the map label for each diagram basis element,
since it includes, as a result of (\ref{orbitbasis}),
all labels that come from all of the possible fusings 
of the spiders in the bipartition diagram $d_{\pi}$
that corresponds to $D_{\pi}$.
In fact, instead of calculating a separate weight matrix explicitly for each value of $n$,
when $n \geq l+k$, we will see that a single set of map labels 
can be used to obtain the transformation for any value of $n$,
and, when $n < l+k$, we only need to retain map labels 
that come from $(k,l)$--bipartition diagrams having at most $n$ blocks.
Consequently, the only requirements on memory come from needing to store 
the input tensor $T$
and the output tensor $W(T)$ in memory.

\begin{definition} \label{maplabeldefn}
	Let $d_\pi$ be a $(k,l)$--bipartition diagram. 
	We associate to $d_\pi$ a set of \textbf{map labels},
	where each map label consists of an arrow $\leftarrow$,
	a fixed pattern of indices $I$ on its left hand side,
	and, on its right hand side, 
	a formal linear combination of
	tuples of indices
	that are associated with $I$.

	We immediately use each map label for $d_\pi$
	to obtain the values in $D_{\pi}(T)$, for unrolled $D_{\pi}$
	and input tensor $T \in (\mathbb{R}^{n})^{\otimes k}$, by 
	replacing, in each map label, $I$ by ${D_{\pi}(T)}_I$
	and each tuple $J$ that appears on the right hand side
	by $T_J$, and treating the result as a map that describes
	how to find ${D_{\pi}(T)}_I$.
	The resulting \textbf{transformation map labels} coming from all of the map labels ultimately determine $D_{\pi}(T)$.

\end{definition}



We have provided a full procedure for how to obtain the map labels
and the resulting transformation for any $(k,l)$--bipartition diagram $d_\pi$
and for any value of $n$ in the green-coloured box. 
The implementations of Subprocedures I to V,
along with illustrative examples for each,
are provided in Appendix \ref{maplabelimpl}.
We give definitions of the important terms that appear in the procedure below.

\begin{definition}
	Let $d_\pi$ be a $(k,l)$--bipartition diagram.
	A \textbf{grouped output} for $d_\pi$ is any tuple of labels that can be obtained by labelling all
	the spiders that have output wires, allowing repeats, sorting the spiders in decreasing order of output wires,
	propagating the labels to the ends of the output wires, and reading off the resulting tuple.
	A \textbf{partially labelled diagram} of $d_\pi$ is identical to $d_\pi$,
	except all spiders that have output wires are assigned a label, allowing repeats.
	A \textbf{fully labelled diagram} of $d_\pi$ is identical to $d_\pi$,
	except now all spiders are assigned a label, allowing repeats.
	A \textbf{grouped map label} is obtained from a fully labelled diagram of $d_\pi$
	by propagating the labels to the ends of the wires, placing the tuple on the output wires
	to the left of an arrow $\xleftarrow{g}$ and placing the tuple on the input wires to the right of the arrow.
	A \textbf{left grouped map label} is obtained from a grouped map label by unrolling only the right hand side
	of a grouped map label, and replacing $\xleftarrow{g}$ with $\xleftarrow{lg}$.
\end{definition}

\begin{example} \label{mainexmaplabels}
	Continuing on from Example \ref{diagramexample}, 
	and considering once again the $(2,1)$--bipartition diagrams having at most $n = 3$ blocks
	given in (\ref{bipartdiagsex}), 
	we show in Example \ref{mainexmaplabelscalcs}
	that the transformations corresponding to each $(2,1)$--bipartition diagram, for $n = 3$,
	are fully determined by the following transformation map labels ---
	in this case, only one for each $(2,1)$--bipartition diagram:
	\begin{equation} \label{tensormaplabelex1}
		D_{\pi_1}(T)_i = T_{i,i}
	\end{equation}
	\begin{equation} \label{tensormaplabelex2}
		D_{\pi_2}(T)_i = \sum_{j = 1}^{3} T_{j,j}
	\end{equation}

\begin{figure}[thb]
	\begin{center}
		\includegraphics[scale=0.5]{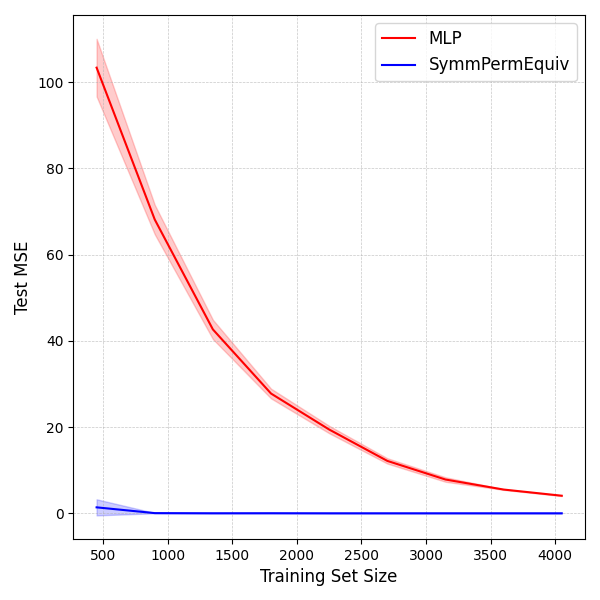} 
	\end{center}
		\caption{Data efficiency for the synthetic $S_{12}$-invariant task.
		The shaded regions depict 95\% confidence intervals taken over 3 runs.}
  	\label{fig:sumiji}
\end{figure}

	\begin{equation} \label{tensormaplabelex3}
		D_{\pi_3}(T)_i = \sum_{j = 1}^{3} [T_{i,j} + T_{j,i}] - T_{i,i}
	\end{equation}

	\begin{equation} \label{tensormaplabelex4}
		D_{\pi_4}(T)_i = \sum_{j = 1}^{3} \sum_{k = 1}^{3} T_{j,k}
	\end{equation}
	where $i \in [3]$.

	However, the power of this method lies in the fact that if $n \geq 3$, 
	then we only need to change the value $3$ in each sum to $n$,
	and if $n = 2$, then we remove $\pi_4$ and change the 
	value $3$ in each sum to $2$.
\end{example}


\begin{figure*}[thb!]
	\begin{tcolorbox}[colback=ygreen!10, colframe=ygreen!40, coltitle=black, 
		title={\bfseries 
		Procedure: Generation of the Transformation Map Labels That Describe the Transformation Corresponding
		to a $(k,l)$--Bipartition Diagram
		of a Symmetric Tensor in $(\mathbb{R}^{n})^{\otimes k}$
		to a Symmetric Tensor in $(\mathbb{R}^{n})^{\otimes l}$.
		},
		fonttitle=\bfseries
		]
		\textbf{Input:} 
		A $(k,l)$--bipartition diagram $d_{\pi}$ and a value of $n$.
	\begin{enumerate}
		\item Apply Subprocedure I to obtain all possible grouped outputs for $d_{\pi}$, and for each one,
			its associated set of partially labelled diagrams, where only spiders that have output wires are labelled in these diagrams.
		\item
			For each grouped output for $d_{\pi}$:
			\begin{enumerate}
				\item 
					Apply Subprocedure II to each partially labelled diagram in the grouped output's associated set.
					Take the set union of all fully labelled diagrams that are returned from each call to the subprocedure,
					giving a set of fully labelled diagrams that are associated with the grouped output.
				\item
					Remove all fully labelled diagrams in the set with more than $n$ spiders.
					If the set becomes empty, delete the grouped output and proceed to the next iteration of the loop.
				\item
					Apply Subprocedure III to each fully labelled diagram to convert it into a grouped map label.
				\item
					For each grouped map label, apply Subprocedure IV to unroll its right hand side,
					returning a left grouped map label.
				\item
					Formally add together the right hand side of the resulting left grouped map labels,
					creating one left grouped map label that is associated with the grouped output.
				\item	
					Unroll the left hand side of the left grouped map label 
					according to Subprocedure V
					to obtain a set of map labels
					that are associated with the grouped output. 
			\end{enumerate}
		\item	\label{step:maplabel}
			The union of the resulting map label sets from all grouped outputs
			are the map labels that correspond to the $(k,l)$--bipartition diagram $d_{\pi}$.
		\item
			Finally, for each map label 
			in Step \ref{step:maplabel},
			convert each index $J$ on the right hand side to be $T_J$, and convert
			the index $I$ on the left hand side to be ${D_{\pi}(T)}_{I}$ 
			to obtain its corresponding transformation map label.
	\end{enumerate}
		\textbf{Output:} A set of transformation map labels describing the transformation $D_{\pi}$
		of a symmetric tensor $T \in (\mathbb{R}^{n})^{\otimes k}$
		to a symmetric tensor $D_{\pi}(T) \in (\mathbb{R}^{n})^{\otimes l}$
		for the given value of $n$, where
		$D_{\pi}$ corresponds to the $(k,l)$--bipartition diagram $d_{\pi}$.
	\end{tcolorbox}
  	\label{maplabelprocedure}
\end{figure*}


\section{Numerical Experiments}

We demonstrate our characterisation on the following toy experiments.

\textbf{$S_{12}$-Invariant Task:}
We evaluate our model on a synthetic $S_{12}$-invariant task
given by the function $f(T) \coloneqq \sum_{i,j}^{12} T_{i,j,i}$,
where $T$ is a $3$-order symmetric tensor.
We demonstrate the high data efficiency 
of our model
compared with a standard MLP for this task,
as shown in Figure \ref{fig:sumiji}.
We attribute this superior performance to our model's strong inductive bias.

\textbf{$S_8$-Equivariant Task:}
We evaluate our model on a synthetic $S_8$-equivariant task
from $(\mathbb{R}^{8})^{\otimes 3}$ to $\mathbb{R}^{8}$: 
namely, to extract the diagonal from $8 \times 8 \times 8$ symmetric tensors.
We evaluate our model against a standard MLP and a standard
$S_8$-equivariant model 
from $(\mathbb{R}^{8})^{\otimes 3}$ to $\mathbb{R}^{8}$.
We show the Test Mean Squared Error 
(MSE) for each of these models in Table \ref{tab:diag}.
Furthermore, Table \ref{tab:generalisation} shows that the network appears to generalise well 
to symmetric tensors having different sizes, even though it was trained on 
symmetric tensors of a particular size.
\begin{table}[h!]
\centering
\begin{tabular}{|l|c|}
\hline
\textbf{Method} & \textbf{Test MSE} \\ \hline
MLP            & $0.6486$            \\ \hline
PermEquiv      & $0.0447$           \\ \hline
SymmPermEquiv       & $\bm{0.0035}$            \\ \hline
\end{tabular}
	\caption{Diagonal Extraction}
\label{tab:diag}
\end{table}

\begin{table}[h!]
\centering
\begin{tabular}{|c|c|c|c|}
\hline
	\textbf{$\bm{n}$} & $\bm{8}$ & $\bm{16}$ & $\bm{32}$ \\ \hline
\textbf{Test MSE}            & $0.0035$      & $0.0048$     & $0.0088$      \\ \hline
\end{tabular}
\caption{Generalisation}
\label{tab:generalisation}
\end{table}



\section{Conclusion}

We obtained a complete characterisation of 
permutation equivariant linear functions between symmetric powers of 
$\mathbb{R}^{n}$ using two different bases. 
This characterisation includes all scalar-valued 
and vector-valued permutation equivariant 
functions on symmetric tensors in 
$(\mathbb{R}^{n})^{\otimes k}$.
We introduced map label notation to address limitations on memory, 
enabling the efficient transformation of input tensors without explicitly 
storing large sized weight matrices in memory. 
We validated our approach through two toy examples
and demonstrated the potential for transfer learning 
to symmetric tensors of different sizes.


\section*{Acknowledgements}
The author 
is deeply grateful to Chen Lin for his valuable insights 
and constructive discussions 
on designing experiments.
This work is supported by the UKRI–EPSRC grant EP/Y028872/1, 
\textit{Mathematical Foundations of Intelligence: An ``Erlangen Programme” for AI}.

\section*{Impact Statement}

This paper presents work whose goal is to advance the field of 
Machine Learning. There are many potential societal consequences 
of our work, none which we feel must be specifically highlighted here.


\nocite{*}

\bibliography{references}
\bibliographystyle{icml2025}

\newpage
\appendix
\onecolumn

\section{Supplementary Proofs and Definitions}

\begin{proof}[Proof of Proposition \ref{linearsymmchar}]
	Suppose that $f$ is a linear permutation equivariant function
	on a symmetric tensor $T$ in $(\mathbb{R}^{n})^{\otimes k}$ to $(\mathbb{R}^{n})^{\otimes l}$,
	where $l$ is either $0$ or $1$.
	
	Then, for all tuples of indices $J = (j_1, \dots, j_k)$ that index the standard
	basis $e_J \in (\mathbb{R}^{n})^{\otimes k}$, we can partition these
	indices into disjoint sets such that $I$ and $J$ are in the same set
	if and only if there is a permutation in $S_k$ such that $\sigma(I) = J$.

	Moreover, for each such set, $f$ must map each element in the set to the same
	value, by the definition of $f$.
	Hence, for each set, we can select the unique tuple in the set
	that is in 
	ascending numerical
	order.

	The union of these tuples across all of the sets gives $S[n]^k$,
	which indexes the standard basis of $S^k(\mathbb{R}^{n})$, by construction.
	Moreover, since $l$ is either $0$ or $1$, we see that
	$\mathbb{R} = S^0(\mathbb{R}^{n})$ and $\mathbb{R}^{n} = S^1(\mathbb{R}^{n})$,
	giving the result.
\end{proof}

\begin{proof}[Proof of Lemma \ref{Homdiagbasis}]
	Let 
	$B_O \coloneqq \{x_\pi\}$ be the set of $(k,l)$--orbit bipartition diagrams
	and let
	$B_D \coloneqq \{d_\pi\}$ be the set of $(k,l)$--bipartition diagrams.

	To see why the set $B_D$ forms a basis of $SP_k^l(n)$,
	first we form an ordered set of $(k,l)$--bipartitions by ordering 
	the bipartitions by the number of blocks that they have from smallest to largest, 
	with any arbitrary ordering allowed for a pair of bipartitions that have 
	the same number of blocks.
	Call this set $S_{{l+k}}$.
	
	Then, because the square matrix that maps elements of $B_D$ to 
	linear combinations of the set $B_O$ --- 
	whose rows and columns are indexed (in order) by the ordered set $S_{{l+k}}$ 
	--- is unitriangular by (\ref{orbitbasis}), it is therefore invertible.
	Since $B_O$ is the generating basis of $SP_k^l(n)$,
	we get that $B_D$ also forms a basis of $SP_k^l(n)$, as required.
\end{proof}

\begin{proof}[Proof of Theorem \ref{diagbasisSn}]
Consider the two sets
	$B_D \coloneqq \{D_\pi\}$
and
	$B_O \coloneqq \{X_\theta\}$ 
	where each set is over all $(k,l)$--bipartitions that have at most $n$ blocks.
	We see that, 
	by the definition of $D_\pi$ in (\ref{diagbasisdefn}), 
	the transition matrix from $B_D$ to $B_O$ 
	is unitriangular.
As $B_O$
forms a basis of 
$\Hom_{S_n}(S^k(\mathbb{R}^{n}), S^l(\mathbb{R}^{n}))$
by Theorem \ref{orbitbasisSn},
this implies that $B_D$
is also a basis of
$\Hom_{S_n}(S^k(\mathbb{R}^{n}), S^l(\mathbb{R}^{n}))$.
Since $B_D$ is a basis, where each element in $B_D$ 
is the image under $\Theta_{k,n}^l$ of a diagram basis element in $SP_k^l(n)$,
and $\Theta_{k,n}^l$ is linear, the surjectivity of $\Theta_{k,n}^l$ is immediate.
\end{proof}

\begin{proof}[Proof of Proposition \ref{diagbasisdirect}]
	Suppose that $d_\pi$ is a diagram basis element 
	in $SP_k^l(n)$.
	Assume that we have placed the indices $I$ 
	on the top row of $d_\pi$ and the indices $J$ 
	on the bottom row of $d_\pi$.

	Consider the following cases.

	\textbf{Case 1}: $(I,J)$ does not label the central nodes of $d_\pi$ consistently.

	Then, by (\ref{orbitbasis}) and
	the definition of the partial ordering $\preceq$ on $(k,l)$--bipartitions,
	$(I,J)$ does not label the central nodes of any of the $x_\theta$ consistently,
	since such $x_\theta$ are formed diagrammatically 
	by fusing together some of the spiders of $x_\pi$ at the central nodes.
	Hence the $(I,J)$--entry of each $X_\theta$ that is the image of 
	$x_\theta$ under $\Gamma_{k,n}^l$ is $0$, and so 
	the $(I,J)$--entry of $D_\pi$ is also $0$, by (\ref{diagbasisdefn}).

	\textbf{Case 2}: $(I,J)$ labels the central nodes of $d_\pi$ consistently.

	Then, by (\ref{orbitbasis}),
	we see that $(I,J)$ labels the central nodes of exactly one $x_\theta$ in
	(\ref{orbitbasis})
	consistently \textit{and distinctly}.
	Indeed,
	this $x_\theta$ is obtained by fusing together the central nodes of all of 
	the spiders of the labelled diagram $x_\pi$ whose labels are the same.
	Hence the $(I,J)$--entry for $X_\theta$ that is the image of this $x_\theta$ 
	under $\Gamma_{k,n}^l$ is $1$, and is $0$ for the images 
	of all of the other orbit basis diagrams in (\ref{orbitbasis}) 
	that are not $x_\theta$.
	Hence the $(I,J)$--entry of $D_\pi$ is $1$, by (\ref{diagbasisdefn}).
\end{proof}

\begin{figure*}[tb]
	\begin{tcolorbox}[colback=teal!10, colframe=teal!40, coltitle=black, title={\bfseries A General Procedure for Calculating 
the $S_n$-Equivariant Weight Matrix from 
		$S^k(\mathbb{R}^{n})$
		to
		$S^l(\mathbb{R}^{n})$
		using the Orbit Basis of $SP_k^l(n)$.},
	fonttitle=\bfseries]
		Perform the following steps:
	\begin{enumerate}
		\item Calculate all of the $(k,l)$--bipartitions $\pi$ 
			that have at most $n$ blocks, and
		express each $(k,l)$--bipartition $\pi$ as a $(k,l)$--orbit bipartition
		diagram $x_\pi$ in $SP_k^l(n)$.
	\item Then, if $t$ is the number of blocks in $\pi$,
		form all possible $t$-length tuples with elements in $[n]$, 
		\textbf{not} allowing for repetitions amongst the elements.
	\item For each $t$-length tuple, label the central green nodes in $x_\pi$ 
		from left to right with the elements of the tuple.
		Reorder the spiders in $x_\pi$ such that they are in increasing
		order from left to right. Then, for each spider, 
		propagate the block label to the end of each wire. 
		We obtain the matrix unit $E_{I,J}$ from this diagram
		by setting $I$ to be the labels in the top row and
		$J$ to be the labels in the bottom row.
	\item Since each tuple corresponds to a matrix unit, add together
		all of the resulting matrix units from all of the tuples
		to obtain the basis matrix $X_\pi$ that is associated with $x_\pi$.
		Attach a weight $\lambda_\pi \in \mathbb{R}$ to each matrix $X_\pi$.
		Finally, calculate $\sum \lambda_\pi X_\pi$ to give the overall weight matrix.
	\end{enumerate}
	\end{tcolorbox}
  	\label{orbitbasissummaryprocedure}
\end{figure*}


\section{How to Generate All $(k,l)$--Bipartition Diagrams Having at Most $n$ Blocks.}
\label{generatebipart}

In Theorem \ref{orbitbasisSn} and Theorem \ref{diagbasisSn}, we constructed two different bases for
$\Hom_{S_n}(S^k(\mathbb{R}^{n}), S^l(\mathbb{R}^{n}))$
using the set of all $(k,l)$--bipartitions having at most $n$ blocks. 
However, we have yet to outline a method for how to generate all of these bipartitions.
To the best of our knowledge, there have only been a few studies on 
$(k,l)$--bipartitions restricted to a certain number of blocks
\citep{wright1964, landman1992, kim1997}.
In particular, no-one has given an explicit procedure for generating
all $(k,l)$--bipartitions having at most $n$ blocks: at best,
\citet{kim1997} suggest that one
can obtain them from the set of all $(k+l,0)$--bipartitions having at most $n$ blocks,
but they do not explain how to do this explicitly.

In the green- and orange-coloured boxes below, we present a procedure for 
generating these bipartitions, notably without creating any duplicates. 
While the procedure is described for $(k,l)$--bipartition diagrams, 
it applies equally to $(k,l)$--orbit bipartition diagrams. 
Only the choice of diagram that is specified in Step 6 of the green-coloured box changes.

To explain why this procedure works, we need to introduce some definitions.

\begin{definition}
	An \textbf{integer partition} $\lambda$ of $m$ 
	is defined to be a tuple 
	$(\lambda_1, \lambda_2, \dots, \lambda_t)$ 
	of positive integers $\lambda_i$
	such that $\sum_{i = 1}^{t} \lambda_i = m$ and
			$\lambda_1 \geq \lambda_2 \geq \ldots \geq \lambda_p > 0$.
	We say that $\lambda$ consists of $t$ \textbf{parts}, 
	and we define its length to be~$t$.

	We often extend the tuple of any integer partition
	$\lambda$ of $m$ that has some $t < n$ parts
	to have exactly $n$ parts by appending $n-t$ zeros to $\lambda$.
	Hence $\lambda = (\lambda_1, \lambda_2, \dots, \lambda_n)$
	of non-negative integers $\lambda_i$
	such that $\sum_{i = 1}^{n} \lambda_i = m$ and
	$\lambda_1 \geq \lambda_2 \geq \ldots \geq \lambda_n \geq 0$.	
\end{definition}

\begin{example}
	We see that
	$\lambda = (4, 2, 2)$
	is an integer partition of $8$ into exactly $3$ parts.
	If $n = 5$, then we write $\lambda$ as $(4, 2, 2, 0, 0)$ instead.
\end{example}

Integer partitions must be written in non-decreasing order.
The following definition relaxes this requirement.

\begin{definition}
	A \textbf{weak composition} $\mu$ of $m$ into $n$ parts is an $n$-length tuple
	$(\mu_1, \mu_2, \dots, \mu_n)$
	of non-negative integers $\mu_i$
	such that
	$\sum_{i = 1}^{m} \mu_i = m$.

	We say that a weak composition $\mu$ is in \textbf{partition order} if it is also
	an integer partition.
\end{definition}

\begin{example}
	$(2, 0, 4, 0, 2)$ is a weak composition of $8$. It is not in partition order.
	However, sorting these entries into non-decreasing order, $(4, 2, 2, 0, 0)$,
	gives a weak composition of $8$ that is in partition order.
\end{example}

\begin{definition}
	We can represent integer partitions and weak compositions as \textbf{Young diagrams}.
	For each tuple $(a_1, a_2, \dots, a_n)$, 
	we create a diagram consisting of $n$ rows, where
	row $i$ consists of $a_i$ boxes 
	(by convention, the row number increases downwards).

	Note that if $a_i$ is $0$ for some $i$, we leave a row's worth of space for it and move onto the next
	value in the tuple.
\end{definition}

\begin{example}
	We represent the integer partition $(4, 2, 2, 0, 0)$ 
	and the weak composition $(2, 0, 4, 0, 2)$ 
	as Young diagrams:
	\begin{equation}
		\begin{aligned}
			\scalebox{0.95}{\tikzfig{youngframe}}
		\end{aligned}
	\end{equation}
\end{example}

\begin{definition}
	Let $\lambda$ be an integer partition of $q$ having exactly $n$ parts,
	where we allow zero entry parts.
	Let $\mu$ be a weak composition of $m$ into $n$ parts.
	We say that $\mu$ \textbf{fits inside} $\lambda$ if
	$\lambda_i \geq \mu_i$ for all $i \in [n]$.
\end{definition}
We can see this diagrammatically by considering the Young diagrams 
for each of $\lambda$ and $\mu$:
if we can shade $\mu$ in $\lambda$ without going outside the boundary of $\lambda$, 
then $\mu$ fits inside $\lambda$, otherwise it does not.
\begin{example}
	If $\lambda$ is the integer partition $(6, 4, 4, 2, 0, 0)$
	and $\mu$ is the weak composition $(2, 3, 0, 2, 0, 0)$,
	then we see that $\mu$ fits inside $\lambda$:
	\begin{equation}
		\begin{aligned}
			\scalebox{0.95}{\tikzfig{skewyoungframe}}
		\end{aligned}
	\end{equation}
	However, the weak composition $(3, 5, 3, 2, 0, 1)$ does not fit inside
	$\lambda$, as indicated by the red boxes in the diagram below, 
	which fall outside the boundary of $\lambda$:
	\begin{equation}
		\begin{aligned}
			\scalebox{0.95}{\tikzfig{skewyoungframe2}}
		\end{aligned}
	\end{equation}
\end{example}
The key insight behind why our procedure generates all $(k,l)$--bipartition
diagrams having at most $n$ blocks can be stated as follows:
every integer partition of $k+l$ having exactly $n$ parts (allowing for zero element parts)
can be expressed as a $(k+l, 0)$--bipartition, and every 
$(k, l)$--bipartition having some $t \in [n]$ blocks must come from a integer partition of $k+l$
having exactly $t$ parts.
In particular, we can then represent each $(k+l, 0)$--bipartition as a
$(k+l, 0)$--bipartition diagram (or as a $(k+l, 0)$--orbit bipartition diagram).

Indeed,
if $\lambda = (\lambda_1, \lambda_2, \dots, \lambda_n)$
is an integer partition of $k + l$ having exactly $n$ parts, with zero element parts allowed, 
then, by first removing all zero element parts to give the integer partition
$\lambda = (\lambda_1, \lambda_2, \dots, \lambda_t)$ of $k + l$
for some $t \in [n]$
such that $\lambda_i$ is positive for all $i \in [t]$,
we obtain the 
$(k+l, 0)$--bipartition
$\lambda = \{[\lambda_1, 0], [\lambda_2, 0], \dots, [\lambda_t, 0]\}$
having precisely $t$ blocks.
(Remark: we remove all zero element parts in $\lambda$ first since the definition
of a bipartition does not allow parts $[x_i, y_i]$ where $x_i = 0 = y_i$.)
We can then represent the $(k+l, 0)$--bipartition $\lambda$ as a one-row 
$(k+l, 0)$--bipartition diagram
\begin{equation} \label{onerowbipart}
	\begin{aligned}
		\scalebox{0.95}{\tikzfig{onerowbipartition}}
	\end{aligned}
\end{equation}
or with open green circles for a $(k+l, 0)$--orbit bipartition diagram.
We retain the order of the integer partition in the diagram; that is,
the number of legs is decreasing from left to right in each spider.

To see why every $(k, l)$--bipartition having $t$ blocks 
must come from a integer partition of $k+l$ having exactly $t$ parts,
let $\pi = \{[x_1, y_1], [x_2, y_2], \dots, [x_t, y_t]\}$
be a $(k, l)$--bipartition having $t$ blocks.
We form a $(k+l, 0)$--bipartition by subtracting, for each part $i$, 
$y_i$ from the second element of the bracket and then adding it to the first element of the bracket.
This gives the $(k+l, 0)$--bipartition
$\{[x_1 + y_1, 0], [x_2 + y_2, 0], \dots, [x_t + y_t, 0]\}$.
Without loss of generality, we can assume that the parts 
in this $(k+l, 0)$--bipartition
have been reordered 
in decreasing size order from left to right.
Hence, this $(k+l, 0)$--bipartition
is precisely the integer partition
$(x_1 + y_1, x_2 + y_2, \dots, x_t + y_t)$ of $k+l$ into exactly $t$ parts.

Hence, to generate all $(k,l)$--bipartitions having at most $n$ blocks, we first need
to generate all integer partitions of $k+l$ having exactly $n$ parts, with zero element
parts allowed,
and express them as $(k+l,0)$--bipartitions
(or as bipartition diagrams).
We then need to work out how to create all possible $(k, l)$--bipartitions (or diagrams)
from each such $(k+l,0)$--bipartition (diagram), which is the reverse procedure 
of the process described in the previous paragraph.

We see that, for a given $(k+l,0)$--bipartition diagram having $t$ blocks, where $t \in [n]$, 
to obtain a $(k,l)$--bipartition diagram, we need to turn $l$ legs up across all 
of the $t$ spiders.
This is the same as creating a weak composition $\mu$ of $l$ into exactly $n$ parts,
where we set the last $n - t$ parts in $\mu$ to $0$.
Hence, calculating all weak compositions of $l$ into exactly $n$ parts  
will be enough to determine all of the $(k,l)$--bipartition diagrams
from a given $(k+l,0)$--bipartition diagram having at most $n$ blocks.

In effect, this produces a matrix where the integer partitions $\lambda$ of $k+l$ into exactly
$n$ parts, with zero element parts allowed, index the rows, 
and the weak compositions $\mu$ of $l$ into exactly $n$ parts index the columns.
We then need to see, for each $(\lambda, \mu)$ pair, whether the elements of $\mu$
are a well-fitting combination for turning up the legs of $\lambda$.
We call such a pair \textbf{valid}.
A simple way to determine if each $(\lambda, \mu)$ pair is valid is to see whether
$\mu$ fits inside $\lambda$. 
If it does, then the pair is valid, otherwise it is not.
Then, for each valid pair $(\lambda, \mu)$, we can create a 
$(k,l)$--bipartition diagram by turning up $\mu_i$ legs 
in each spider $i$ of $\lambda$, viewed as a $(k+l,0)$--bipartition diagram.

Hence, if $\mu = (\mu_1, \mu_2, \dots, \mu_t, 0, \dots, 0)$ is a weak composition of $l$
into exactly $n$ parts such that 
$(\lambda, \mu)$ is a valid pair, 
where $\lambda$ is the $(k+l,0)$--bipartition diagram given in (\ref{onerowbipart}),
then, for each spider $i$ in (\ref{onerowbipart}), we can turn up $\mu_i$ legs to obtain
the $(k,l)$--bipartition diagram 
\begin{equation} \label{tworowbipart}
	\begin{aligned}
		\scalebox{0.95}{\tikzfig{tworowbipartition}}
	\end{aligned}
\end{equation}
At this stage, this would be enough to generate all 
$(k,l)$--bipartition diagrams having at most $n$ blocks.
However, this procedure in its current state would create potentially 
multiple duplicate $(k,l)$--bipartition diagrams
from any $(k+l,0)$--bipartition diagram where spiders have the same number of legs;
that is, this procedure will overgenerate the $(k,l)$--bipartition diagrams
having at most $n$ blocks that we need.
Indeed, if block $i$ and block $j$ have the same number of legs, that is, $\lambda_i = \lambda_j$,
then, if $\mu^1$ is the weak composition with entry $\mu_i$ in position $i$
and entry $\mu_j$ in position $j$ such that $(\lambda, \mu^1)$ is valid,
and $\mu^2$ is the weak composition with entry $\mu_j$ in position $i$
and entry $\mu_i$ in position $j$ such that $(\lambda, \mu^2)$ is valid,
with the entries of $\mu^1$ and $\mu^2$ being equal at all other indices in the tuples,
we see that using $\mu^1$ to turn up the legs of $\lambda$ gives
the same $(k,l)$--bipartition diagram as using $\mu^2$ to turn up the legs of $\lambda$,
even though the spiders might be in a different order.

Hence, we can improve this procedure so that only the exact number of 
$(k,l)$--bipartition diagrams having at most $n$ blocks are generated, in the following way:
after the matrix that is indexed by $(\lambda, \mu)$ pairings is created,
perform what we have termed the $(\lambda, \mu)$ Duplication Test to eliminate
all pairings that fail the test first before checking the rest for validity.

The $(\lambda, \mu)$ Duplication Test does the following: to avoid duplicating
$(k,l)$--bipartition diagrams from a given $(k+l,0)$--bipartition diagram $\lambda$,
for spiders whose legs have the same size, we only need to 
turn up the legs on these spiders in decreasing order,
since any other order will produce duplicate diagrams.
That is, in the subtuple of a weak composition formed by taking those elements of the 
weak composition at the same indices as the blocks of the spiders,
we only need to consider the subtuple that is in partition order.
Hence we can set any $(\lambda, \mu)$ pairing that fails this Duplication Test
to be invalid.

We can also speed up the process of determining whether a $(\lambda, \mu)$ pair is valid or not as follows.
Suppose that $\lambda$ has non-zero entries only in its first $t$ parts.
If $\mu$ contains a non-zero entry in any index from $t+1$ to $n$, then this pair is automatically invalid.
Hence, for a given $\lambda$ having non-zero entries only in its first $t$ parts, 
we only need to consider the validity of $(\lambda, \mu)$ pairings where $\mu$ is
of the form $\mu = (\mu_1, \mu_2, \dots, \mu_t, 0, \dots, 0)$,
with the $\mu_i$ being non-negative integers.
We perform this before the $(\lambda, \mu)$ Duplication Test.

There is one further optimisation that we can make to the entire procedure.
As it stands, we chose to turn $l$ legs up in a $(k+l,0)$--bipartition diagram
and found all possible ways of doing this by considering weak combinations of $l$
into exactly $n$ parts.
In fact, we could choose instead to not turn up $k$ legs in total, which would
be the same as considering all weak combinations of $k$ into exactly $n$ parts.
As the number of weak combinations of a number $m$ into $n$ parts increases as $m$ increases,
by considering the sizes of $k$ and $l$, we can 
reduce the number of weak combinations that we need to consider
for each $(k+l,0)$--bipartition diagram
by choosing the smaller value between $k$ and $l$.

\begin{figure*}[tb!]
	\begin{tcolorbox}[colback=ygreen!10, colframe=ygreen!40, coltitle=black, 
		title={\bfseries 
		Procedure: How to Generate All $(k,l)$--Bipartition Diagrams
		Having at Most $n$ Blocks.},
		fonttitle=\bfseries
		]
	\begin{enumerate}
		\item 
			Calculate all integer partitions $\lambda$ of $k+l$ into at most
			$n$ parts, and express each integer partition as an $n$-length tuple
			whose elements are in decreasing order, allowing zero element parts.
		\item 
			Choose the smaller value between $k$ and $l$.
			Let $m = \min(k,l)$.
			Calculate all weak compositions $\mu$ of $m$ into exactly $n$ parts.
		\item
			Create a matrix where the integer partitions $\lambda$ index the rows
			and the weak compositions $\mu$ index the columns.
		\item
			Put $0$ in any $(\lambda, \mu)$--entry where $\mu$ has a non-zero entry
			in any index that is greater than the number of parts in $\lambda$.
		\item
			Now consider only those $\lambda$ that have parts that are equal.
			For each such $\lambda$, put $0$ in each remaining $(\lambda, \mu)$--entry
			that fails the $(\lambda, \mu)$ Duplication Test.
		\item
			For each remaining $(\lambda, \mu)$--entry, determine
			whether $\mu$ fits inside $\lambda$, viewing each as a Young diagram.
			If it does, put $1$ in that entry, and call the pair 
			$(\lambda, \mu)$ \textbf{valid}.
			Otherwise, put $0$ in that entry.
		\item 
			For each valid $(\lambda, \mu)$ pair, remove all zero element parts 
			of $\lambda$ and express it as a $(k+l,0)$--bipartition diagram.
			Hence, each block $i$ in $\lambda$ is a spider 
			that corresponds to the pair $[\lambda_i, 0]$, viewing $\lambda$
			as a $(k+l,0)$--bipartition.
			Then:
			\begin{itemize}
				\item If $m = l$, then, for each block $i$ in $\lambda$,
					turn $\mu_i$ legs up to the top row.
				\item Otherwise ($m = k$), for each block $i$ in $\lambda$,
					turn $\lambda_i - \mu_i$ legs up to the top row.
			\end{itemize}
	\end{enumerate}	
	Note that this procedure also works for generating 
	$(k,l)$--orbit bipartition diagrams; simply express each $\lambda$ in Step $6$
	as a $(k,l)$--orbit bipartition diagram instead.
	\end{tcolorbox}
  	\label{generationsummaryprocedure}
\end{figure*}


\begin{figure*}[tb!]
	\begin{tcolorbox}[colback=orange!10, colframe=orange!40, coltitle=black, 
		title={\bfseries 
		Procedure: The $(\lambda, \mu)$ Duplication Test.},
		fonttitle=\bfseries
		]
	\begin{enumerate}
		\item 
			Suppose that $\lambda$ has $t$ non-zero parts, where $t \in [n]$.
			Partition $[t]$ into blocks, where two elements $i, j$ are in
			the same block if and only if $\lambda_i = \lambda_j$.
		\item
			For each block in the partition whose size is greater than $1$,
			create a subtuple from $\mu$ by only considering those elements
			in $\mu$ that are indexed by the elements of the block.
		\item
			Consider each subtuple in turn, 
			If a subtuple is not in partition order,
			then we say that $(\lambda, \mu)$ has 
			\textbf{failed the duplication test}.
		\item 
			Otherwise, we say that $(\lambda, \mu)$ has 
			\textbf{passed the duplication test}.
	\end{enumerate}	
	\end{tcolorbox}
  	\label{duplicationsummaryprocedure}
\end{figure*}


\begin{example}
	Suppose that we would like to find all $(3,2)$--bipartition diagrams
	having at most $3$ blocks.
	We follow the procedure that is given in the green- and orange-coloured boxes below.

	\textbf{Step 1:} We calculate all integer partitions $\lambda$ of $3+2 = 5$
	into at most $n = 3$ parts, and express each as a $3$-length tuple whose elements
	are in decreasing order, allowing zero element parts.

	They are:
	$(5,0,0)$,
	$(4,1,0)$,
	$(3,2,0)$,
	$(3,1,1)$, and
	$(2,2,1)$.

	\textbf{Step 2:} As $l = 2 < 3 = k$, we calculate all weak compositions $\mu$
	of $m = \min(3,2) = 2$ into exactly $n = 3$ parts.

	They are:
	$(2,0,0)$,
	$(0,2,0)$,
	$(0,0,2)$,
	$(1,1,0)$,
	$(1,0,1)$, and
	$(0,1,1)$.

	\textbf{Steps 3, 4 and 5:} We create a matrix where 
	the integer partitions $\lambda$ index the rows
	and the weak compositions $\mu$ index the columns.
	We put $0$ in any $(\lambda, \mu)$--entry where $\mu$ has a non-zero entry
	in any index that is greater than the number of parts in $\lambda$.
	We then look to find
	any remaining $(\lambda, \mu)$--entries that fail the Duplication Test.
	For this test,
	we only need to consider those $\lambda$ that have repeating parts:
	they are $(3,1,1)$ and $(2,2,1)$.

	Consider $\lambda = (3,1,1)$ and $\mu = (1,0,1)$.
	We form the partition $\{1 \mid 2, 3\}$ of $[3]$ from $\lambda$,
	and form the subtuple $(0,1)$ of $\mu$ from the only block in the partition
	of size greater than $1$.
	As this subtuple is not in partition order, we see that this pair $(\lambda, \mu)$
	has failed the Duplication Test.

	Likewise, consider $\lambda = (2,2,1)$ and $\mu = (0,1,1)$.
	We form the partition $\{1, 2 \mid 3\}$ of $[3]$ from $\lambda$,
	and form the subtuple $(0,1)$ of $\mu$ from the only block in the partition
	of size greater than $1$.
	As this subtuple is not in partition order, we see that this pair $(\lambda, \mu)$
	has failed the Duplication Test.

	For these two pairs, we put $0$ in the $(\lambda, \mu)$--entry of the matrix.
	All other pairs pass the Duplication Test.

	\textbf{Step 6:} For each remaining $(\lambda, \mu)$--entry, we determine
	whether $\mu$ fits inside $\lambda$, viewing each as a Young diagram.

	At the end of this step, we see that the matrix contains the following entries:

	\begin{equation}
	\NiceMatrixOptions{code-for-first-row = \scriptstyle \color{blue},
                   	   code-for-first-col = \scriptstyle \color{blue}
	}
	\renewcommand{\arraystretch}{1.5}
	\begin{bNiceArray}{*{6}{c}}[first-row,first-col]
			& (2,0,0) & (0,2,0) & (0,0,2) & (1,1,0) & (1,0,1) & (0,1,1) \\
		(5,0,0)		& 1	& 0	& 0	& 0	& 0	& 0	\\
		(4,1,0)		& 1	& 0	& 0 	& 1	& 0	& 0	\\
		(3,2,0)		& 1	& 1	& 0	& 1	& 0	& 0 	\\
		(3,1,1)		& 1	& 0	& 0	& 1	& 0	& 1 	\\
		(2,2,1)		& 1	& 1	& 0	& 1	& 1	& 0 	\\
	\end{bNiceArray}
	\end{equation}
	
	At this point, we can see that $p_3(3,2) = 13$.

	\textbf{Step 7:} We form the $(3,2)$--bipartition diagram that is associated
	with each valid $(\lambda, \mu)$ pair.

	They are:
	\begin{equation}
	\NiceMatrixOptions{code-for-first-row = \scriptstyle \color{blue},
                   	   code-for-first-col = \scriptstyle \color{blue}
	}
	\renewcommand{\arraystretch}{4}
	\begin{bNiceArray}{*{6}{c}}[first-row,first-col]
			& (2,0,0) & (0,2,0) & (0,0,2) & (1,1,0) & (1,0,1) & (0,1,1) \\
		(5,0,0)		& \scalebox{0.65}{\tikzfig{diagram31ex1}}	& 0	& 0	& 0	& 0	& 0	\\
		(4,1,0)		& \scalebox{0.65}{\tikzfig{diagram31ex2}}	& 0	& 0 	& \scalebox{0.65}{\tikzfig{diagram31ex8}}	& 0	& 0	\\
		(3,2,0)		& \scalebox{0.65}{\tikzfig{diagram31ex3}}	& \scalebox{0.65}{\tikzfig{diagram31ex6}}	& 0	& \scalebox{0.65}{\tikzfig{diagram31ex9}}	& 0	& 0 	\\
		(3,1,1)		& \scalebox{0.65}{\tikzfig{diagram31ex4}}	& 0	& 0	& \scalebox{0.65}{\tikzfig{diagram31ex10}}	& 0	& \scalebox{0.65}{\tikzfig{diagram31ex13}} 	\\
		(2,2,1)		& \scalebox{0.65}{\tikzfig{diagram31ex5}}	& \scalebox{0.65}{\tikzfig{diagram31ex7}}	& 0	& \scalebox{0.65}{\tikzfig{diagram31ex11}}	& \scalebox{0.65}{\tikzfig{diagram31ex12}}	& 0 	\\
	\end{bNiceArray}
	\end{equation}
\end{example}

To finish this section, we note that there exists a more generic
partition function, $p(k,l)$, which counts the number of possible $(k,l)$--bipartitions without
any restriction on the number of blocks.
In particular, the sequence $p(k,k)$ (i.e $k = l$) is the OEIS sequence A002774.
Bipartitions without any restriction on the number of blocks
have been studied in many prior works
\citep{macmahon1893, macmahon1896, mathews1896, macmahon1899, macmahon1918, 
auluck1953, nanda1957, wright1957, wright1958, carlitz1963, carlitz1966, andrews1977, andrews1998}.
It is fair to say that 
most of these authors, 
with the exception of MacMahon and Mathews,
primarily focused on 
determining either generating functions or asymptotic formulas for $p(k,l)$. 

We see from our analysis that $p(k,l)$ is equal to $p_{k+l}(k,l)$, 
the number of $(k,l)$--bipartitions having at most $k+l$ parts.
We give the value of $p(k,l)$ up to $k = l = 5$ in Table \ref{p(k,l)values}.
Note that $p(k,l)$ is symmetric in $k$ and $l$, 
and that $p(k,0) = p(0,k)$ is precisely the number of integer partitions of $k$ (into at most $k$ parts).
\begin{table}[h]
    \centering
	\caption{The number of $(k,l)$--bipartitions: $p(k,l)$} 
	\label{p(k,l)values} 
\begin{equation} 
	\NiceMatrixOptions{code-for-first-row = \color{blue},
                   	   code-for-first-col = \color{blue}
	}
	\renewcommand{\arraystretch}{1.5} 
	\setlength{\tabcolsep}{15pt} 
	\begin{NiceArray}{|*{6}{c}}[first-row,first-col]
		\diagbox{\textcolor{red}{k}}{\textcolor{red}{l}} 	& 0	& 1 	& 2	& 3	& 4 	& 5	\\
		\cline{0-6}
		0		& 1 	& 1 	& 2	& 3	& 5 	& 7  	\\
		1		& 1	& 2 	& 4	& 7	& 12 	& 19  	\\
		2		& 2	& 4 	& 9	& 16	& 29 	& 47  	\\
		3		& 3	& 7 	& 16	& 31	& 57 	& 97  	\\
		4		& 5	& 12 	& 29	& 57	& 109	& 189  	\\
		5		& 7	& 19 	& 47	& 97	& 189 	& 339 	
	\end{NiceArray}
\end{equation}
\end{table}

\section{Supplementary Weight Matrix Calculations}

\begin{example} [Calculations for Example \ref{orbitexample}]
	\label{orbitexampleapp}
	Recall that
	we would like to find
	the $S_3$-equivariant weight matrix
	from
	$S^2(\mathbb{R}^{3})$
	to
	$S^1(\mathbb{R}^{3}) = \mathbb{R}^{3}$
	using the orbit basis of $SP_2^1(3)$.

	We need to consider the $(2,1)$--orbit bipartition diagrams 
	having at most $n = 3$ blocks.
	They are 
	\begin{equation} \label{orbitexdiagapp}
		\begin{aligned}
			\scalebox{0.7}{\tikzfig{orbit21diags}}
		\end{aligned}
	\end{equation}
	For example, for the second diagram which has two blocks, we see that, since $n=3$,
	there are only six possible $2$-length tuples having distinct elements in $[3]$:
	\begin{equation}
		(1,2);
		(1,3);
		(2,1);
		(2,3);
		(3,1);
		(3,2)
	\end{equation}
	Assigning these tuples to the central green nodes of the diagram and reordering the spiders where appropriate gives
	\begin{equation}	
		\begin{aligned}
			\scalebox{0.7}{\tikzfig{orbit21ex2ex}}
		\end{aligned}
	\end{equation}
	By propagating the central values to the ends of the wires in each diagram,
	then reading the $I$ tuple off the top row and the $J$ tuple off the bottom row
	to form the matrix unit $E_{I,J}$ for each diagram, and then adding these matrix units together
	gives the following orbit basis matrix $X_\pi$ corresponding to the second diagram
	in (\ref{orbitexdiagapp}):

	\begin{equation}
	\NiceMatrixOptions{code-for-first-row = \scriptstyle \color{blue},
                   	   code-for-first-col = \scriptstyle \color{blue}
	}
	\renewcommand{\arraystretch}{1}
	\begin{bNiceArray}{*{6}{c}}[first-row,first-col]
				& 1,1 	& 1,2	& 1,3	& 2,2 	& 2,3	& 3,3	\\
		1		& 0	& 0	& 0	& 1	& 0	& 1	\\
		2		& 1	& 0	& 0 	& 0	& 0	& 1	\\
		3		& 1	& 0	& 0	& 1	& 0	& 0 
	\end{bNiceArray}
	\end{equation}
	
	Hence, in full, from left to right, the four diagrams in (\ref{orbitexdiagapp}) correspond to the following basis matrices $X_\pi$
	of size $3 \times 6$:

	\begin{equation} 
	\NiceMatrixOptions{code-for-first-row = \scriptstyle \color{blue},
                   	   code-for-first-col = \scriptstyle \color{blue}
	}
	\renewcommand{\arraystretch}{1}
	\begin{bNiceArray}{*{6}{c}}[first-row,first-col]
				& 1,1 	& 1,2	& 1,3	& 2,2 	& 2,3	& 3,3	\\
		1		& 1	& 0	& 0	& 0	& 0	& 0	\\
		2		& 0	& 0	& 0 	& 1	& 0	& 0	\\
		3		& 0	& 0	& 0	& 0	& 0	& 1 
	\end{bNiceArray}
	\;
	;
	\;
	\NiceMatrixOptions{code-for-first-row = \scriptstyle \color{blue},
                   	   code-for-first-col = \scriptstyle \color{blue}
	}
	\renewcommand{\arraystretch}{1}
	\begin{bNiceArray}{*{6}{c}}[first-row,first-col]
				& 1,1 	& 1,2	& 1,3	& 2,2 	& 2,3	& 3,3	\\
		1		& 0	& 0	& 0	& 1	& 0	& 1	\\
		2		& 1	& 0	& 0 	& 0	& 0	& 1	\\
		3		& 1	& 0	& 0	& 1	& 0	& 0 
	\end{bNiceArray}
	\end{equation}
	
	\begin{equation}
	\NiceMatrixOptions{code-for-first-row = \scriptstyle \color{blue},
                   	   code-for-first-col = \scriptstyle \color{blue}
	}
	\renewcommand{\arraystretch}{1}
	\begin{bNiceArray}{*{6}{c}}[first-row,first-col]
				& 1,1 	& 1,2	& 1,3	& 2,2 	& 2,3	& 3,3	\\
		1		& 0	& 1	& 1	& 0	& 0	& 0	\\
		2		& 0	& 1	& 0 	& 0	& 1	& 0	\\
		3		& 0	& 0	& 1	& 0	& 1	& 0 
	\end{bNiceArray}
	\;
	;
	\; 
	\NiceMatrixOptions{code-for-first-row = \scriptstyle \color{blue},
                   	   code-for-first-col = \scriptstyle \color{blue}
	}
	\renewcommand{\arraystretch}{1}
	\begin{bNiceArray}{*{6}{c}}[first-row,first-col]
				& 1,1 	& 1,2	& 1,3	& 2,2 	& 2,3	& 3,3	\\
		1		& 0	& 0	& 0	& 0	& 1	& 0	\\
		2		& 0	& 0	& 1 	& 0	& 0	& 0	\\
		3		& 0	& 1	& 0	& 0	& 0	& 0 
	\end{bNiceArray}
	\end{equation}

	Hence, the $S_3$-equivariant weight matrix
	from
	$S^2(\mathbb{R}^{3})$
	to
	$S^1(\mathbb{R}^{3})$
	using the orbit basis of $SP_2^1(3)$
	is of the form 
\begin{equation}
	\NiceMatrixOptions{code-for-first-row = \scriptstyle \color{blue},
                   	   code-for-first-col = \scriptstyle \color{blue}
	}
	\renewcommand{\arraystretch}{1}
	\begin{bNiceArray}{*{6}{c}}[first-row,first-col]
				& 1,1 	& 1,2	& 1,3	& 2,2 	& 2,3	& 3,3	\\
		1		& \lambda_1	& \lambda_3	& \lambda_3	& \lambda_2	& \lambda_4	& \lambda_2	\\
		2		& \lambda_2	& \lambda_3	& \lambda_4	& \lambda_1	& \lambda_3	& \lambda_2	\\
		3		& \lambda_2	& \lambda_4	& \lambda_3	& \lambda_2	& \lambda_3	& \lambda_1
	\end{bNiceArray}
\end{equation}
for weights $\lambda_1, \lambda_2, \lambda_3, \lambda_4 \in \mathbb{R}$.
\end{example}

\begin{example} [Calculations for Example \ref{diagramexample}]
	\label{diagramexampleapp}
	Recall that
	we would like to find
	the $S_3$-equivariant weight matrix
	from
	$S^2(\mathbb{R}^{3})$
	to
	$S^1(\mathbb{R}^{3}) = \mathbb{R}^{3}$ 
	using the diagram basis of $SP_2^1(3)$ instead.

	We need to consider the $(2,1)$--bipartition diagrams 
	having at most $n = 3$ blocks.
	They are 
	\begin{equation} \label{diagramexdiagapp}
		\begin{aligned}
			\scalebox{0.7}{\tikzfig{diagram21diags}}
		\end{aligned}
	\end{equation}
	For example, for the third diagram which has two blocks, we see that, since $n=3$,
	there are now nine possible $2$-length tuples whose elements are in $[3]$, since
	the elements in each tuple no longer need to be distinct:	
	\begin{equation}
		(1,1);
		(1,2);
		(1,3);
		(2,1);
		(2,2);
		(2,3);
		(3,1);
		(3,2);
		(3,3)
	\end{equation}
	Assigning these tuples to the central green nodes of the diagram,
	reordering the spiders where appropriate, 
	and fusing together any spiders whose central green nodes 
	are labelled with the same value
	gives
	\begin{equation}	
		\begin{aligned}
			\scalebox{0.7}{\tikzfig{diagram21ex3exa}}
		\end{aligned}
	\end{equation}
	and
	\begin{equation}	
		\begin{aligned}
			\scalebox{0.7}{\tikzfig{diagram21ex3exb}}
		\end{aligned}
	\end{equation}
	By propagating the central values to the ends of the wires in each diagram,
	then reading the $I$ tuple off the top row and the $J$ tuple off the bottom row
	to form the matrix unit $E_{I,J}$ for each diagram, and then adding these matrix units together
	gives the following diagram basis matrix $D_\pi$ corresponding to the 
	third diagram
	in (\ref{diagramexdiagapp}):

	\begin{equation}
	\NiceMatrixOptions{code-for-first-row = \scriptstyle \color{blue},
                   	   code-for-first-col = \scriptstyle \color{blue}
	}
	\renewcommand{\arraystretch}{1}
	\begin{bNiceArray}{*{6}{c}}[first-row,first-col]
				& 1,1 	& 1,2	& 1,3	& 2,2 	& 2,3	& 3,3	\\
		1		& 1	& 1	& 1	& 0	& 0	& 0	\\
		2		& 0	& 1	& 0 	& 1	& 1	& 0	\\
		3		& 0	& 0	& 1	& 0	& 1	& 1 
	\end{bNiceArray}
	\end{equation}

	Hence, in full, from left to right, the four diagrams in (\ref{orbitexdiagapp}) correspond to the following basis matrices $D_\pi$
	of size $3 \times 6$:

	\begin{equation} 
	\NiceMatrixOptions{code-for-first-row = \scriptstyle \color{blue},
                   	   code-for-first-col = \scriptstyle \color{blue}
	}
	\renewcommand{\arraystretch}{1}
	\begin{bNiceArray}{*{6}{c}}[first-row,first-col]
				& 1,1 	& 1,2	& 1,3	& 2,2 	& 2,3	& 3,3	\\
		1		& 1	& 0	& 0	& 0	& 0	& 0	\\
		2		& 0	& 0	& 0 	& 1	& 0	& 0	\\
		3		& 0	& 0	& 0	& 0	& 0	& 1 
	\end{bNiceArray}
	\;
	;
	\;
	\NiceMatrixOptions{code-for-first-row = \scriptstyle \color{blue},
                   	   code-for-first-col = \scriptstyle \color{blue}
	}
	\renewcommand{\arraystretch}{1}
	\begin{bNiceArray}{*{6}{c}}[first-row,first-col]
				& 1,1 	& 1,2	& 1,3	& 2,2 	& 2,3	& 3,3	\\
		1		& 1	& 0	& 0	& 1	& 0	& 1	\\
		2		& 1	& 0	& 0 	& 1	& 0	& 1	\\
		3		& 1	& 0	& 0	& 1	& 0	& 1 
	\end{bNiceArray}
	\end{equation}
	
	\begin{equation}
	\NiceMatrixOptions{code-for-first-row = \scriptstyle \color{blue},
                   	   code-for-first-col = \scriptstyle \color{blue}
	}
	\renewcommand{\arraystretch}{1}
	\begin{bNiceArray}{*{6}{c}}[first-row,first-col]
				& 1,1 	& 1,2	& 1,3	& 2,2 	& 2,3	& 3,3	\\
		1		& 1	& 1	& 1	& 0	& 0	& 0	\\
		2		& 0	& 1	& 0 	& 1	& 1	& 0	\\
		3		& 0	& 0	& 1	& 0	& 1	& 1 
	\end{bNiceArray}
	\;
	;
	\; 
	\NiceMatrixOptions{code-for-first-row = \scriptstyle \color{blue},
                   	   code-for-first-col = \scriptstyle \color{blue}
	}
	\renewcommand{\arraystretch}{1}
	\begin{bNiceArray}{*{6}{c}}[first-row,first-col]
				& 1,1 	& 1,2	& 1,3	& 2,2 	& 2,3	& 3,3	\\
		1		& 1	& 1	& 1	& 1	& 1	& 1	\\
		2		& 1	& 1	& 1 	& 1	& 1	& 1	\\
		3		& 1	& 1	& 1	& 1	& 1	& 1 
	\end{bNiceArray}
	\end{equation}

	Hence, the $S_3$-equivariant weight matrix
	from
	$S^2(\mathbb{R}^{3})$
	to
	$S^1(\mathbb{R}^{3})$
	using the diagram basis of $SP_2^1(3)$
	is of the form 
\begin{equation}
	\NiceMatrixOptions{code-for-first-row = \scriptstyle \color{blue},
                   	   code-for-first-col = \scriptstyle \color{blue}
	}
	\renewcommand{\arraystretch}{1}
	\begin{bNiceArray}{*{6}{c}}[first-row,first-col]
				& 1,1 	& 1,2	& 1,3	& 2,2 	& 2,3	& 3,3	\\
		1		& \lambda_{1,2,3,4}	& \lambda_{3,4}	& \lambda_{3,4}	& \lambda_{2,4}	& \lambda_4	& \lambda_{2,4}	\\
		2		& \lambda_{2,4}	& \lambda_{3,4}	& \lambda_4	& \lambda_{1,2,3,4}	& \lambda_{3,4}	& \lambda_{2,4}	\\
		3		& \lambda_{2,4}	& \lambda_4	& \lambda_{3,4}	& \lambda_{2,4}	& \lambda_{3,4}	& \lambda_{1,2,3,4}
	\end{bNiceArray}
\end{equation}
for weights $\lambda_1, \lambda_2, \lambda_3, \lambda_4 \in \mathbb{R}$,
	where $\lambda_A \coloneqq \sum_{i \in A} \lambda_i$.
\end{example}

\begin{example} [Deep Sets $S_n$-Equivariant Weight Matrix 
	from 
	$\mathbb{R}^{n}$
	to
	$\mathbb{R}^{n}$]
	\label{deepsetsexample}
	To find the Deep Sets $S_n$-equivariant weight matrix from
	from 
	$\mathbb{R}^{n}$
	to
	$\mathbb{R}^{n}$,
	we need to find the diagram basis of 
	$\Hom_{S_n}(S^1(\mathbb{R}^{n}), S^1(\mathbb{R}^{n}))$
	from the diagram basis of $SP_1^1(n)$,
	since $S^1(\mathbb{R}^{n}) = \mathbb{R}^{n}$.
	
	We need to consider all $(1,1)$--bipartition diagrams 
	having at most $n$ blocks.
	Assuming that $n > 1$, they are 
	\begin{equation} \label{deepsetsexdiag}
		\begin{aligned}
			\scalebox{0.7}{\tikzfig{diagram11diags}}
		\end{aligned}
	\end{equation}
	For the first diagram, the only valid $1$-length tuples are the elements of $[n]$.
	Hence, the matrix that corresponds to this $(1,1)$--bipartition diagram is 
	the $n \times n$ identity matrix. 
	For the second diagram, all possible $2$-length tuples with elements from $[n]$ are
	valid, since the elements in each tuple do not need to be distinct.
	Hence, the matrix that corresponds to this $(1,1)$--bipartition diagram is 
	the $n \times n$ all ones matrix.

	Hence, the $(i,j)$--entry of the $S_n$-equivariant weight matrix from 
	$\mathbb{R}^{n}$
	to
	$\mathbb{R}^{n}$
	is $\lambda_1 + \lambda_2$ if $i = j$, and is $\lambda_2$ if $i \neq j$,
	for weights $\lambda_1, \lambda_2 \in \mathbb{R}$.
	This is precisely the Deep Sets characterisation given in \citet{zaheer2017}.
\end{example}


\section{Map Label Implementation and Examples} \label{maplabelimpl}

At the end of the paper, we present complete implementations of the five subprocedures (I to V)
that appear in the procedure for generating the transformation map labels that correspond to 
a $(k,l)$--bipartition diagram for a given value of $n$.
We provide illustrative examples of each of the subprocedures below.

\subsection{Subprocedure I}

\begin{example} \label{subI32ex}

	Consider the $(3,2)$--bipartition diagram $d_{\pi}$ given by
\begin{equation} \label{ProcIex10}
	\begin{aligned}
		\scalebox{0.7}{\tikzfig{ProcIex10}}
	\end{aligned}
\end{equation}
	We follow Subprocedure I to obtain all possible grouped outputs for $d_{\pi}$
	and, for each one, its associated set of partially labelled diagrams.
	(In this case, the resulting diagrams will also be fully labelled, that is, there would be
	no need to apply Subprocedure II to the outcome).

	\textbf{Step 1}: We sort the spiders in decreasing order of output wires, breaking
	ties by sorting them in decreasing order of input wires. This gives
\begin{equation} \label{ProcIex11}
	\begin{aligned}
		\scalebox{0.7}{\tikzfig{ProcIex11}}
	\end{aligned}
\end{equation}

	\textbf{Step 2}: We partition the spiders into groupings where spiders are in the
	same grouping if and only if they have the same number of output wires.
	Labelling each spider with $O_x$, where $x$ denotes the number of output wires that
	the spider has, we see that,
	for the diagram given in (\ref{ProcIex11}), 
	the partition $\pi$ is $\{O_1, O_1\}$, which consists of only one grouping, $\pi_{B} = \{O_1, O_1\}$.

	\textbf{Step 3}: For $\pi_{B}$, we form a new partition of the spiders such
	that spiders are in the same grouping in the new partition $\pi_{B,I}$ if and
	only if they have the same number of input wires. 
	Labelling each spider in $\pi_{B}$ with $I_x$, where $x$ denotes the number of input wires that
	the spider has, we see that $\pi_{B,I}$ is $\{I_2 \mid I_1\}$.

	\textbf{Step 4}: We calculate all integer partitions of $t$,
	where $t$ is the number of spiders in $d_{\pi}$ that have output wires.
	Here $t=2$, since both spiders in (\ref{ProcIex11}) have output wires. 
	The integer partitions of $2$ are $1 + 1$ and $2$ only.

	\textbf{Step 5}: Consider each integer partition $\lambda$ in turn.

	\textbf{a)}: For $\lambda = 1 + 1$, we form the multiset $\{i,j\}$ and compute
	all partitions of this multiset that correspond to the original partition
	structure of $\pi$, modulo isomorphism. 
	There is only one non-isomorphic partition of the multiset, namely $L = \{i,j\}$ itself,
	which has only one grouping, $L_B = \{i,j\}$.

	We now compute all partitions of the grouping $L_B$ that correspond to the partition
	structure of $\pi_{B,I}$.
	This gives $\{i \mid j\}$ and $\{j \mid i\}$, which we convert into ordered tuples,
	that is, $T_{L, B} = \{(i,j), (j,i)\}$.

	Since $L$ had only one grouping, we see that $T_{1+1, L} = \{(i,j), (j,i)\}$.
	Hence the set $D_{1+1,L}$ consists only of the following two diagrams
\begin{equation} \label{ProcIex1213}
	\begin{aligned}
		\scalebox{0.7}{\tikzfig{ProcIex12}}
	\end{aligned}
	\quad\quad\text{ and }\quad\quad
	\begin{aligned}
		\scalebox{0.7}{\tikzfig{ProcIex13}}
	\end{aligned}
\end{equation}

	where, in each diagram, we are unable to merge any spiders.
	Hence $R_{1+1,L} = \varnothing$, and so $P_{1+1,L} = D_{1+1,L}$.

	Moreover, since there was only one non-isomorphic partition of the multiset $\{i,j\}$,
	we see that $P_{1+1} = P_{1+1,L}$.

	\textbf{b)}: For $\lambda = 2$, we form the multiset $\{i,i\}$ and compute
	all partitions of this multiset that correspond to the original partition
	structure of $\pi$, modulo isomorphism. 
	There is only one non-isomorphic partition of the multiset, namely $L = \{i,i\}$ itself,
	which has only one grouping, $L_B = \{i,i\}$.

	We now compute all partitions of the grouping $L_B$ that correspond to the partition
	structure of $\pi_{B,I}$.
	This gives $\{i \mid i\}$ only, which we convert into an ordered tuple,
	that is, $T_{L, B} = \{(i,i)\}$.

	Since $L$ had only one grouping, we see that $T_{2, L} = \{(i,i)\}$.
	Hence the set $D_{2,L}$ consists only of the following diagram

\begin{equation} \label{ProcIex14}
	\begin{aligned}
		\scalebox{0.7}{\tikzfig{ProcIex14}}
	\end{aligned}
\end{equation}

	We merge spiders that are labelled with the same label to form:

\begin{equation} \label{ProcIex15}
	\begin{aligned}
		\scalebox{0.7}{\tikzfig{ProcIex15}}
	\end{aligned}
\end{equation}

	It is clear that, despite the new grouping, no new partially labelled diagrams
	can appear from (\ref{ProcIex15}), and so we get that $R_{2,L} = \varnothing$.
	Hence $P_{2,L} = D_{2,L}$.

	Moreover, since there was only one non-isomorphic partition of the multiset $\{i,i\}$,
	we see that $P_{2} = P_{2,L}$.

	\textbf{Step 6}: We now take the set union of $P_{\lambda}$ over all integer partitions
	$\lambda$ of $t=2$, and call the new set $P$.
	Hence $P$ has three elements, namely the two diagrams given in (\ref{ProcIex1213})
	and the one diagram given in (\ref{ProcIex15}).
	By reordering the labelled spiders in lexicographical order, 
	and partitioning $P$ into subsets such that diagrams are in the same subset 
	if and only if their tuples of output labels are the same, 
	we obtain the following two sets as output of Subprocedure I for the $(3,2)$--bipartition diagram $d_{\pi}$:
\begin{equation} \label{exD1result}
	\begin{aligned}
		(i,j)
		=
		\left\{
			\;
		\scalebox{0.7}{\tikzfig{ProcIex12}},
		\;
		\scalebox{0.7}{\tikzfig{ProcIex13a}}
			\;
		\right\}
	\end{aligned}
		\quad\quad\text{ and }\quad\quad
	\begin{aligned}
		(i,i)
		=
		\left\{
			\;
		\scalebox{0.7}{\tikzfig{ProcIex15}}
		\;
		\right\}
	\end{aligned}
\end{equation}
\end{example}


\begin{example}
	Consider instead the $(1,4)$--bipartition diagram $d_{\pi}$ given by
\begin{equation} \label{ProcIex1}
	\begin{aligned}
		\scalebox{0.7}{\tikzfig{ProcIex1}}
	\end{aligned}
\end{equation}
	We follow Subprocedure I to obtain all possible grouped outputs for $d_{\pi}$
	and, for each one, its associated set of partially labelled diagrams.
	(Once again, the resulting diagrams will also be fully labelled, that is, there would be
	no need to apply Subprocedure II to the outcome).

	\textbf{Step 1}: We sort the spiders in decreasing order of output wires, breaking
	ties by sorting them in decreasing order of input wires. We see that the spiders
	in (\ref{ProcIex1}) are already sorted.

	\textbf{Step 2}: We partition the spiders into groupings where spiders are in the
	same grouping if and only if they have the same number of output wires.
	Labelling each spider with $O_x$, where $x$ denotes the number of output wires that
	the spider has, we see that,
	for the diagram given in (\ref{ProcIex1}), 
	the partition $\pi$ is $\{O_2 \mid O_1, O_1\}$, which consists of two groupings, 
	$\pi_{B_1} = \{O_2\}$ and $\pi_{B_2} = \{O_1, O_1\}$.

	\textbf{Step 3}: For each grouping $\pi_{B_i}$, we form a new partition of the spiders such
	that spiders are in the same grouping in the new partition $\pi_{B_i,I}$ if and
	only if they have the same number of input wires. 
	Labelling each spider in $\pi_{B_i}$ with $I_x$, where $x$ denotes the number of input wires that
	the spider has, we see that 
	$\pi_{B_1,I} = \{I_1\}$ and
	$\pi_{B_2,I} = \{I_0, I_0\}$.
	
	\textbf{Step 4}: We calculate all integer partitions of $t$,
	where $t$ is the number of spiders in $d_{\pi}$ that have output wires.
	Here $t=3$, since all three spiders in (\ref{ProcIex1}) have output wires. 
	The integer partitions of $3$ are $1 + 1 + 1$, $2 + 1$ and $3$.

	\textbf{Step 5}: Consider each integer partition $\lambda$ in turn.

	\textbf{a)}: For $\lambda = 1 + 1 + 1$, we form the multiset $\{i,j,k\}$ and compute
	all partitions of this multiset that correspond to the original partition
	structure of $\pi$, modulo isomorphism. 
	Despite there being three partitions, namely
	$\{i \mid j, k\}$,
	$\{j \mid i, k\}$, and
	$\{k \mid i, j\}$,
	all three are equivalent under a relabelling of the labels.
	Hence, there is only one non-isomorphic partition of the multiset, namely $L = \{i \mid j, k\}$,
	which has two groupings: $L_{B_1} = \{i\}$ and $L_{B_2} = \{j,k\}$.

	For each grouping $L_{B_i}$, we now compute all partitions of the grouping $L_{B_i}$ that correspond to the partition
	structure of $\pi_{B_i,I}$.
	For $L_{B_1}$, this gives $\{i\}$ only, and for $L_{B_2}$, this gives $\{j,k\}$ only.
	Converting these into ordered tuples, we get that
	$T_{L, B_1} = \{(i)\}$ and $T_{L, B_2} = \{(j,k)\}$.
	Hence $T_{1+1+1, L}$ is $\{(i,j,k)\}$.
	Consequently, the set $D_{1+1+1,L}$ consists only of the diagram
\begin{equation} \label{ProcIex2}
	\begin{aligned}
		\scalebox{0.7}{\tikzfig{ProcIex2}}
	\end{aligned}
\end{equation}

	Since we are unable to merge any spiders, we have that $R_{1+1+1,L} = \varnothing$,
	and so $P_{1+1+1,L} = D_{1+1+1,L}$.

	Moreover, since there was only one non-isomorphic partition of the multiset $\{i,j,k\}$,
	we see that $P_{1+1+1} = P_{1+1+1,L}$.

	\textbf{b)}: For $\lambda = 2 + 1$, we form the multiset $\{i,i,j\}$ and compute
	all partitions of this multiset that correspond to the original partition
	structure of $\pi$, modulo isomorphism. 
	There are two non-isomorphic partitions of the multiset,
	namely $L^1 = \{i \mid i, j\}$
	and $L^2 = \{j \mid i, i\}$.
	$L^1$ has two groupings, $L_{B_1}^1 = \{i\}$ and $L_{B_2}^1 = \{i,j\}$,
	and 
	$L^2$ has two groupings, $L_{B_1}^2 = \{j\}$ and $L_{B_2}^2 = \{i,i\}$.

	i): Consider $L^1$.
	For each grouping $L_{B_i}^1$, we now compute all partitions of the grouping $L_{B_i}^1$ that correspond to the partition
	structure of $\pi_{B_i,I}$.
	For $L_{B_1}^1$, this gives $\{i\}$ only, and for $L_{B_2}^1$, this gives $\{i,j\}$ only.
	Converting these into ordered tuples, we get that
	$T_{L^1, B_1} = \{(i)\}$ and $T_{L^1, B_2} = \{(i,j)\}$.
	Hence $T_{2+1, L^1}$ is $\{(i,i,j)\}$.
	Consequently, the set $D_{2+1,L^1}$ consists only of the diagram
\begin{equation} \label{ProcIex3}
	\begin{aligned}
		\scalebox{0.7}{\tikzfig{ProcIex3}}
	\end{aligned}
\end{equation}
	Merging spiders that are labelled with the same label gives
\begin{equation} \label{ProcIex5}
	\begin{aligned}
		\scalebox{0.7}{\tikzfig{ProcIex5}}
	\end{aligned}
\end{equation}
	It is clear that, despite the new grouping, no new partially labelled diagrams
	can appear from (\ref{ProcIex5}), since there is only one spider with three output wires
	and one spider with one output wire.
	Hence, we get that $R_{2+1,L^1} = \varnothing$, and so $P_{2+1,L^1} = D_{2+1,L^1}$.

	ii): Now consider $L^2$.
	For each grouping $L_{B_i}^2$, we now compute all partitions of the grouping $L_{B_i}^2$ that correspond to the partition
	structure of $\pi_{B_i,I}$.
	For $L_{B_1}^2$, this gives $\{j\}$ only, and for $L_{B_2}^2$, this gives $\{i,i\}$ only.
	Converting these into ordered tuples, we get that
	$T_{L^2, B_1} = \{(j)\}$ and $T_{L^2, B_2} = \{(i,i)\}$.
	Hence $T_{2+1, L^2}$ is $\{(j,i,i)\}$.
	Consequently, the set $D_{2+1,L^2}$ consists only of the diagram
\begin{equation} \label{ProcIex4}
	\begin{aligned}
		\scalebox{0.7}{\tikzfig{ProcIex4}}
	\end{aligned}
\end{equation}
	Merging spiders that are labelled with the same label gives
\begin{equation} \label{ProcIex6}
	\begin{aligned}
		\scalebox{0.7}{\tikzfig{ProcIex6}}
	\end{aligned}
\end{equation}
	However, this time, with the new grouping, we will form new partially labelled diagrams from
	(\ref{ProcIex6}). 
	Let the newly merged bipartition diagram in (\ref{ProcIex6}) be $d_{\theta}$. 
	We retain but ignore its labels for now.
	We see that $d_{\theta}$ is already sorted in decreasing order on output wires, with ties broken by input wires,
	and so we partition the spiders into groupings based on their output wires. Hence $\theta = \{O_2, O_2\}$.
	Moreover, for the single individual grouping in $\theta$, $\theta_B$, we form a new partition of $\theta_B$
	such that spiders are in the same grouping if and only if they have the same number of input wires.
	Hence we have that $\theta_{B,I} = \{I_1 \mid I_0\}$.
	Removing the labels from $\theta$, we obtain a partition of labels $\tilde{L}$ into individual groupings $\tilde{L}_B$.
	Hence $\tilde{L}_B = \{j,i\}$.
	We now compute all partitions of $\tilde{L}_B$ that correspond to the partition structure of $\theta_{B,I}$.
	This gives $\{j \mid i\}$ and $\{i \mid j\}$, which we convert into ordered tuples.
	Hence $T_{\tilde{L}, B} = \{(j,i), (i,j)\}$, and since there is only one grouping in $\tilde{L}$, we have that
	$T_{1+1, \tilde{L}} =  \{(j,i), (i,j)\}$.
	Creating a copy of (\ref{ProcIex6}) without its labels for each element $l$ of $T_{1+1, \tilde{L}}$, we label
	it with $l$. This gives back (\ref{ProcIex6}) and a new diagram
\begin{equation} \label{ProcIex7}
	\begin{aligned}
		\scalebox{0.7}{\tikzfig{ProcIex7}}
	\end{aligned}
\end{equation}
	both of which we add to $R_{2+1, L^2}$.

	Taking the set union of $D_{2+1,L^2}$ and $R_{2+1, L^2}$ gives $P_{2+1, L^2}$, which in this case is equal to $R_{2+1, L^2}$.

	As there were two non-isomorphic partitions of the multiset $\{i,i,j\}$, we see that $P_{2+1} = P_{2+1, L^1} \cup P_{2+1, L^2}$.

	\textbf{c)}: For $\lambda = 3$, we form the multiset $\{i,i,i\}$ and compute
	all partitions of this multiset that correspond to the original partition
	structure of $\pi$, modulo isomorphism. 
	Hence, there is only one non-isomorphic partition of the multiset, namely $L = \{i \mid i, i\}$,
	which has two groupings: $L_{B_1} = \{i\}$ and $L_{B_2} = \{i,i\}$.

	For each grouping $L_{B_i}$, we now compute all partitions of the grouping $L_{B_i}$ that correspond to the partition
	structure of $\pi_{B_i,I}$.
	For $L_{B_1}$, this gives $\{i\}$ only, and for $L_{B_2}$, this gives $\{i,i\}$ only.
	Converting these into ordered tuples, we get that
	$T_{L, B_1} = \{(i)\}$ and $T_{L, B_2} = \{(i,i)\}$.
	Hence $T_{3, L}$ is $\{(i,i,i)\}$.
	Consequently, the set $D_{3,L}$ consists only of the diagram
\begin{equation} \label{ProcIex8}
	\begin{aligned}
		\scalebox{0.7}{\tikzfig{ProcIex8}}
	\end{aligned}
\end{equation}
	Merging spiders that are labelled with the same label gives
\begin{equation} \label{ProcIex9}
	\begin{aligned}
		\scalebox{0.7}{\tikzfig{ProcIex9}}
	\end{aligned}
\end{equation}
	It is clear that, despite the new grouping, no new partially labelled diagrams
	can appear from (\ref{ProcIex9}), since there is only one spider with four output wires.
	Hence, we get that $R_{3,L} = \varnothing$, and so $P_{3,L} = D_{3,L}$.

	Moreover, since there was only one non-isomorphic partition of the multiset $\{i,i,i\}$,
	we see that $P_{3} = P_{3,L}$.

	\textbf{Step 6}: We now take the set union of $P_{\lambda}$ over all integer partitions
	$\lambda$ of $t=3$, and call the new set $P$.
	Hence $P$ has five elements, namely the diagrams given in 
	(\ref{ProcIex2}),
	(\ref{ProcIex5}),
	(\ref{ProcIex6}),
	(\ref{ProcIex7}), and
	(\ref{ProcIex9}).
	By reordering the labelled spiders in lexicographical order, 
	and partitioning $P$ into subsets such that diagrams are in the same subset 
	if and only if their tuples of output labels are the same, 
	we obtain the following four sets as output of Subprocedure I for the $(1,4)$--bipartition diagram $d_{\pi}$:
	\begin{equation} \label{exD2resulti}
	\begin{aligned}
		(i,i,j,k)
		=
		\left\{
			\;
		\scalebox{0.7}{\tikzfig{ProcIex2}}
			\;
		\right\}
	\end{aligned}
	\end{equation}
	\begin{equation} \label{exD2resultii}
	\begin{aligned}
		(i,i,i,j)
		=
		\left\{
			\;
		\scalebox{0.7}{\tikzfig{ProcIex5}}
		\;
		\right\}
	\end{aligned}
	\end{equation}
	\begin{equation} \label{exD2resultiii}
	\begin{aligned}
		(i,i,j,j)
		=
		\left\{
			\;
		\scalebox{0.7}{\tikzfig{ProcIex7}},
			\;
		\scalebox{0.7}{\tikzfig{ProcIex6a}}
		\;
		\right\}
	\end{aligned}
	\end{equation}
	\begin{equation} \label{exD2resultiv}
	\begin{aligned}
		(i,i,i,i)
		=
		\left\{
			\;
		\scalebox{0.7}{\tikzfig{ProcIex9}}
		\;
		\right\}
	\end{aligned}
	\end{equation}
\end{example}

\begin{figure*}[tb!]
	\begin{tcolorbox}[colback=sblue!10, colframe=sblue!40, coltitle=black, 
		title={\bfseries 
		Subprocedure I: 
		How to Obtain All Possible Grouped Outputs of a $(k,l)$--Bipartition Diagram.
		},
		fonttitle=\bfseries
		]
		\textbf{Input:} A $(k,l)$--bipartition diagram $d_{\pi}$.
	\begin{enumerate}
		\item \label{step:sortspiders}
			Sort the spiders in decreasing order of output wires. 
			For spiders that have the same number of output wires, 
			break ties by sorting them in decreasing order of input wires.
			For now, ignore in the following all spiders that do not have output wires.
		\item \label{step:firstpartition}
			Partition the spiders having output wires into groupings where spiders are in the
			same grouping if and only if they have the same number of output wires.
			Call this partition $\pi$ and the individual groupings $\{\pi_B\}$.
		\item \label{step:firstpartitionsplit} 
				For each grouping $\pi_B$, 	
				form a new partition of the spiders in $\pi_B$
				such that
				spiders are in the same grouping in the new partition
				if and only if they have the same number of input wires.
				Let $\pi_{B,I}$ denote the new partition that corresponds to the grouping $\pi_B$.
		\item
			Suppose that the number of spiders in $d_{\pi}$ that have output wires is $t$.
			Calculate all integer partitions of $t$.  
			
		\item
			Then, for each integer partition $\lambda$ of $t$:
		\begin{enumerate}
			\item
				Assign a distinct label ($i$, $j$, $k$, $\dots$) to each part of $\lambda$
				such that the label appears as many times as the size of the part.
				Treating the labels as a multiset, compute all partitions
				of this multiset that correspond to the original partition
				structure of $\pi$, modulo isomorphism: that is, identifying
				partitions that are equivalent under a relabelling of the labels.
				Denote the non-isomorphic partitions of the multiset of labels by
				$\{L\}$, and for each $L$, denote its individual groupings by $\{L_B\}$.
			\item \label{step:secondpartition}
				For each non-isomorphic partition $L$:
				\begin{enumerate}
					\item 
						For each grouping $L_B$, 
						compute all partitions of $L_B$ that 
						correspond to the partition structure of $\pi_{B,I}$.
						Convert each partition of $L_B$ into an ordered tuple of labels
						by replacing curly brackets with curved brackets
						and the vertical bar that splits groupings with a comma. 
						Let $T_{L,B}$ denote the resulting set of tuples corresponding to $L_B$.
					\item \label{step:enumerate}
						Form the Cartesian product of the sets $T_{L,B}$, maintaining the order of the
						individual groupings $\pi_B$ in $\pi$. 
						Denote the Cartesian product set of ordered labels by $T_{\lambda, L}$.

					\item
						For each tuple $l \in T_{\lambda, L}$,
						create a copy of the sorted diagram of $d_{\pi}$
						and label the output spiders with $l$,
						resulting in a set $D_{\lambda, L}$ of partially labelled diagrams 
						that are indexed by $T_{\lambda, L}$.
					\item
						Create an empty set $R_{\lambda, L}$.
						For each partially labelled diagram in $D_{\lambda, L}$, merge spiders that are labelled
						with the same label. If no merging took place, continue, otherwise:
						\begin{enumerate}
							\item Reconsider the newly merged bipartition diagram $d_{\theta}$ 
								(retaining but ignoring its labels for now) and
								apply Steps \ref{step:sortspiders}, \ref{step:firstpartition}, 
								and \ref{step:firstpartitionsplit} to $d_{\theta}$.
								This gives a partition $\theta$ whose individual groupings are $\{\theta_B\}$,
								and for each grouping $\theta_B$, it is further partitioned into $\theta_{B,I}$.
							\item For each grouping $\theta_B$, form a set from the labels of the grouping
								and remove the labels from the diagram.
								This gives a partition of labels $\tilde{L}$ into individual groupings $\{\tilde{L}_B\}$.
							\item
								Apply Step \ref{step:secondpartition} to this partition of labels $\tilde{L}$ for $\theta_{B,I}$
								up to and including Step \ref{step:enumerate}.
							\item 
								Then, for each tuple $l$ in the newly created $T_{\tilde{\lambda}, \tilde{L}}$,
								create a copy of the sorted diagram of $d_{\theta}$ with its labels removed and
								label the output spiders with $l$. 
								Add the resulting diagram for each tuple $l$ to $R_{\lambda, L}$.
						\end{enumerate}
					\item Take the set union of $D_{\lambda, L}$ and $R_{\lambda, L}$. 
						This gives a full set of partially labelled diagrams for 
						each non-isomorphic partition of the multiset of labels $L$.
						Denote this set by $P_{\lambda, L}$.
				\end{enumerate}
			\item Take the set union of the $P_{\lambda, L}$ over all non-isomorphic partitions of the multiset of labels $L$.
				This gives a full set of partially labelled diagrams for each integer partition $\lambda$ of $t$. 
				Denote this set by $P_{\lambda}$.
		\end{enumerate}

	\item Take the set union of $P_{\lambda}$ over all integer partitions $\lambda$ of $t$.
		Denote this set by $P$.
		For each diagram in $P$, reorder the labelled spiders in lexicographical order.
		Propagate the labels to the ends of the output wires.
		Partition the set $P$ into subsets such that diagrams are in the same subset
		if and only if their tuples of output indices are the same.
		Label each subset with its shared tuple of output labels; each label
		represents a grouped output for $d_{\pi}$.
	\end{enumerate}	
	\textbf{Output:} All possible grouped outputs for $d_{\pi}$ and, for each one, its associated set
	of partially labelled diagrams.
	\end{tcolorbox}
  	\label{subprocedureI}
\end{figure*}



\subsection{Subprocedure II}

\begin{example}

	Consider the partially labelled diagram for the grouped output $(i,i,j)$ of the $(4,3)$--bipartition diagram $d_{\pi}$
\begin{equation} \label{ProcIex16}
	\begin{aligned}
		\scalebox{0.7}{\tikzfig{ProcIex16}}
	\end{aligned}
\end{equation}
	We follow Subprocedure II to obtain all possible fully labelled diagrams for this particular partially labelled diagram.

	\textbf{Step 1}: We see that the partially labelled diagram has $t=2$ labelled spiders, and also $s=3$ spiders with no label.
	Hence $L_X$, the set of fixed labels, is $\{i,j\}$.
	
	\textbf{Step 2}: 
	We see that the unlabelled spiders in (\ref{ProcIex16}) are already sorted in decreasing order of input wires.

	\textbf{Step 3}: 
	We partition the sorted unlabelled spiders into groupings where spiders are in the same grouping if and only if
	they have the same number of input wires. We see that $\pi = \{I_2 \mid I_1, I_1\}$, which has two individual groupings
	$\pi_{B_1} = \{I_2\}$ and
	$\pi_{B_2} = \{I_1, I_1\}$.

	\textbf{Step 4}: 
	We consider all possible ways of labelling the unlabelled spiders, where 
			$k$ of them are fixed and the remaining $s - k$ of them are free, for $k = 0, 1, 2, 3$.

	Consider each value of $k$ in turn:

	i) $k = 0$: $0$ fixed labels and $3$ free labels.

	First, we create the multiset $\{F, F, F\}$, where $F$ denotes a free symbol.
	We compute all partitions of this multiset that correspond to the original partition structure of $\pi$.
	In this case, there is only one such partition, and so $\Pi_{X,F} = \{\{F \mid F, F\}\}$.

	Next, we calculate all possible size $0$ multisets choosing labels from $L_X$, allowing repeats,
	and all possible size $3$ multisets choosing free labels, introducing them as needed, and also allowing repeats.
	This gives $\{k,l,m\}$, $\{k,k,l\}$ and $\{k,k,k\}$.
	We consider each of these in turn:

	\begin{itemize}
		\item we compute all partitions of $\{k,l,m\}$ that correspond to the original partition structure of the only
	partition in $\Pi_{X,F}$, namely $\{F \mid F, F\}$, modulo isomorphism. 
	Despite there being three partitions, namely
	$\{k \mid l, m\}$,
	$\{l \mid k, m\}$, and
	$\{m \mid k, l\}$,
	all three are equivalent under a relabelling of the free labels.
	Hence there is only one non-isomorphic partition of the multiset, namely $\{k \mid l,m\}$ only.
	Creating a copy of (\ref{ProcIex16}), we label the spiders in order from left to right to create the fully labelled diagram
\begin{equation} \label{ProcIex17}
	\begin{aligned}
		\scalebox{0.7}{\tikzfig{ProcIex17}}
	\end{aligned}
\end{equation}
	
		\item we compute all partitions of $\{k,k,l\}$ that correspond to the original partition structure of the only
	partition in $\Pi_{X,F}$, namely $\{F \mid F, F\}$, modulo isomorphism. 
	This gives $\{k \mid k, l\}$ and $\{l \mid k, k\}$.
	As before, we label the spiders in order from left to right to create the following two fully labelled diagrams
\begin{equation} \label{ProcIex18}
	\begin{aligned}
		\scalebox{0.7}{\tikzfig{ProcIex18}}
	\end{aligned}
	\quad\quad\text{ and }\quad\quad
	\begin{aligned}
		\scalebox{0.7}{\tikzfig{ProcIex18a}}
	\end{aligned}
\end{equation}

		\item we compute all partitions of $\{k,k,k\}$ that correspond to the original partition structure of the only
	partition in $\Pi_{X,F}$, namely $\{F \mid F, F\}$, modulo isomorphism. 
	This gives $\{k \mid k, k\}$ only. 
	As before, we label the spiders in order from left to right to create the following two fully labelled diagrams
\begin{equation} \label{ProcIex18b}
	\begin{aligned}
		\scalebox{0.7}{\tikzfig{ProcIex18b}}
	\end{aligned}
\end{equation}
	\end{itemize}

	ii) $k = 1$: $1$ fixed label and $2$ free labels.

	First, we create the multiset $\{X, F, F\}$, where $X$ denotes a fixed symbol and $F$ denotes a free symbol.
	We compute all partitions of this multiset that correspond to the original partition structure of $\pi$.
	There are two such partitions, and so $\Pi_{X,F} = \{\{X \mid F, F\}, \{F \mid X, F\}\}$.

	Next, we calculate all possible size $1$ multisets choosing labels from $L_X$, allowing repeats,
	and all possible size $1$ multisets choosing free labels, introducing them as needed, and also allowing repeats.
	Taking their Cartesian product gives all possible ways of choosing $1$ fixed label and $1$ free label.
	This gives 
	$\{i,k,l\}$,
	$\{i,k,k\}$,
	$\{j,k,l\}$, and
	$\{j,k,k\}$.
	We consider each of these in turn:
	\begin{itemize}
		\item for each partition $\pi_{X,F} \in \Pi_{X,F}$, we 
			compute all partitions of $\{i,k,l\}$ that correspond to the original partition structure of $\pi_{X,F}$,
			modulo isomorphism.
			For $\{X \mid F, F\}$, this gives $\{i \mid k,l\}$ only.
			For $\{F \mid X, F\}$, this initially gives $\{k \mid i,l\}$ and $\{l \mid i,k\}$, but 
			these two are equivalent under a relabelling of the free labels, hence there is only $\{k \mid i,l\}$.

			For each partition, we create a copy of (\ref{ProcIex16}) and label the spiders 
			in order from left to right to create the two fully labelled diagrams
			\begin{equation} \label{ProcIex19}
				\begin{aligned}
					\scalebox{0.7}{\tikzfig{ProcIex19}}
				\end{aligned}
				\quad\quad\text{ and }\quad\quad
				\begin{aligned}
					\scalebox{0.7}{\tikzfig{ProcIex19a}}
				\end{aligned}
			\end{equation}

			A similar process occurs for $\{j,k,l\}$, giving
			\begin{equation} \label{ProcIex19b}
				\begin{aligned}
					\scalebox{0.7}{\tikzfig{ProcIex19b}}
				\end{aligned}
				\quad\quad\text{ and }\quad\quad
				\begin{aligned}
					\scalebox{0.7}{\tikzfig{ProcIex19c}}
				\end{aligned}
			\end{equation}
		\item for each partition $\pi_{X,F} \in \Pi_{X,F}$, we 
			compute all partitions of $\{i,k,k\}$ that correspond to the original partition structure of $\pi_{X,F}$,
			modulo isomorphism.
			For $\{X \mid F, F\}$, this gives $\{i \mid k,k\}$ only.
			For $\{F \mid X, F\}$, this gives $\{k \mid i,k\}$ only.

			As before,  we create a copy of (\ref{ProcIex16}) and label the spiders 
			in order from left to right to create the two fully labelled diagrams
			\begin{equation} \label{ProcIex20}
				\begin{aligned}
					\scalebox{0.7}{\tikzfig{ProcIex20}}
				\end{aligned}
				\quad\quad\text{ and }\quad\quad
				\begin{aligned}
					\scalebox{0.7}{\tikzfig{ProcIex20a}}
				\end{aligned}
			\end{equation}
			A similar process occurs for $\{j,k,k\}$, giving
			\begin{equation} \label{ProcIex20b}
				\begin{aligned}
					\scalebox{0.7}{\tikzfig{ProcIex20b}}
				\end{aligned}
				\quad\quad\text{ and }\quad\quad
				\begin{aligned}
					\scalebox{0.7}{\tikzfig{ProcIex20c}}
				\end{aligned}
			\end{equation}
	\end{itemize}

	iii) $k = 2$: $2$ fixed labels and $1$ free label.

	First, we create the multiset $\{X, X, F\}$, where $X$ denotes a fixed symbol and $F$ denotes a free symbol.
	We compute all partitions of this multiset that correspond to the original partition structure of $\pi$.
	There are two such partitions, and so $\Pi_{X,F} = \{\{X \mid X, F\}, \{F \mid X, X\}\}$.

	Next, we calculate all possible size $2$ multisets choosing labels from $L_X$, allowing repeats,
	and all possible size $1$ multisets choosing free labels, introducing them as needed, and also allowing repeats.
	Taking their Cartesian product gives all possible ways of choosing $2$ fixed labels and $1$ free label.
	This gives 
	$\{i,j,k\}$,
	$\{i,i,k\}$, and
	$\{j,j,k\}$.
	We consider each of these in turn:
	\begin{itemize}
		\item for each partition $\pi_{X,F} \in \Pi_{X,F}$, we 
			compute all partitions of $\{i,j,k\}$ that correspond to the original partition structure of $\pi_{X,F}$,
			modulo isomorphism.
			For $\{X \mid X, F\}$, this gives $\{i \mid j,k\}$ and $\{j \mid i,k\}$ (which are not isomorphic).
			For $\{F \mid X, X\}$, this gives $\{k \mid i,j\}$ only.
			
			As usual, we create a copy of (\ref{ProcIex16}) and label the spiders 
			in order from left to right to create the three fully labelled diagrams
			\begin{equation} \label{ProcIex21}
				\begin{aligned}
					\scalebox{0.7}{\tikzfig{ProcIex21}}
				\end{aligned}
				\quad\quad\text{ and }\quad\quad
				\begin{aligned}
					\scalebox{0.7}{\tikzfig{ProcIex21a}}
				\end{aligned}
			\end{equation}
			and
			\begin{equation} \label{ProcIex21b}
				\begin{aligned}
					\scalebox{0.7}{\tikzfig{ProcIex21b}}
				\end{aligned}
			\end{equation}
		\item for each partition $\pi_{X,F} \in \Pi_{X,F}$, we 
			compute all partitions of $\{i,i,k\}$ that correspond to the original partition structure of $\pi_{X,F}$,
			modulo isomorphism.
			For $\{X \mid X, F\}$, this gives $\{i \mid i,k\}$ only.
			For $\{F \mid X, X\}$, this gives $\{k \mid i,i\}$ only.
			
			As usual, we create a copy of (\ref{ProcIex16}) and label the spiders 
			in order from left to right to create the two fully labelled diagrams
			\begin{equation} \label{ProcIex22}
				\begin{aligned}
					\scalebox{0.7}{\tikzfig{ProcIex22}}
				\end{aligned}
				\quad\quad\text{ and }\quad\quad
				\begin{aligned}
					\scalebox{0.7}{\tikzfig{ProcIex22b}}
				\end{aligned}
			\end{equation}
		\item a similar process to $\{i,i,k\}$ occurs for $\{j,j,k\}$, hence we obtain
			\begin{equation} \label{ProcIex22c}
				\begin{aligned}
					\scalebox{0.7}{\tikzfig{ProcIex22c}}
				\end{aligned}
				\quad\quad\text{ and }\quad\quad
				\begin{aligned}
					\scalebox{0.7}{\tikzfig{ProcIex22d}}
				\end{aligned}
			\end{equation}
	\end{itemize}

	iv) $k = 3$: $3$ fixed labels and $0$ free labels.

	Despite there only being two fixed labels in $L_X$, we can still label the $s=3$ unlabelled spiders in (\ref{ProcIex16})
	with them.
	First, we create the multiset $\{X, X, X\}$, where $X$ denotes a fixed symbol.
	We compute all partitions of this multiset that correspond to the original partition structure of $\pi$.
	There is only one such partition, and so $\Pi_{X,F} = \{\{X \mid X, X\}\}$.

	Next, we calculate all possible size $3$ multisets choosing labels from $L_X$, allowing repeats,
	and all possible size $0$ multisets choosing free labels, introducing them as needed, and also allowing repeats.
	Taking their Cartesian product gives all possible ways of choosing $3$ fixed labels and $0$ free labels.
	This gives 
	$\{i,i,j\}$,
	$\{i,i,i\}$, 
	$\{j,j,i\}$, and
	$\{j,j,j\}$.
	We consider each of these in turn:
	\begin{itemize}
		\item we compute all partitions of $\{i,i,j\}$ that correspond to the original partition structure of the only
			partition in $\Pi_{X,F}$, namely $\{X \mid X, X\}$, modulo isomorphism. 
			This gives $\{i \mid i, j\}$ and $\{j \mid i, i\}$.
			As before, we label the spiders in order from left to right to create the following two fully labelled diagrams
			\begin{equation} \label{ProcIex22e}
				\begin{aligned}
					\scalebox{0.7}{\tikzfig{ProcIex22e}}
				\end{aligned}
				\quad\quad\text{ and }\quad\quad
				\begin{aligned}
					\scalebox{0.7}{\tikzfig{ProcIex22f}}
				\end{aligned}
			\end{equation}
			A similar process occurs for $\{j,j,i\}$, giving
			\begin{equation} \label{ProcIex22g}
				\begin{aligned}
					\scalebox{0.7}{\tikzfig{ProcIex22g}}
				\end{aligned}
				\quad\quad\text{ and }\quad\quad
				\begin{aligned}
					\scalebox{0.7}{\tikzfig{ProcIex22h}}
				\end{aligned}
			\end{equation}
		\item we compute all partitions of $\{i,i,i\}$ that correspond to the original partition structure of the only
			partition in $\Pi_{X,F}$, namely $\{X \mid X, X\}$, modulo isomorphism. 
			This gives $\{i \mid i, i\}$ only.
			A similar process for $\{j,j,j\}$ gives $\{j \mid j, j\}$ only.
			Hence we obtain the following two fully labelled diagrams
			\begin{equation} \label{ProcIex22i}
				\begin{aligned}
					\scalebox{0.7}{\tikzfig{ProcIex22i}}
				\end{aligned}
				\quad\quad\text{ and }\quad\quad
				\begin{aligned}
					\scalebox{0.7}{\tikzfig{ProcIex22j}}
				\end{aligned}
			\end{equation}
	\end{itemize}

	\textbf{Step 5}: To finish, we merge all of the spiders in each of the diagrams, remove duplicates, and form a set as a result.
	This is the set of fully labelled diagrams corresponding to the partially labelled diagram (\ref{ProcIex16})
	for the grouped output $(i,i,j)$ of the $(4,3)$--bipartition diagram $d_{\pi}$.

	To ensure brevity, we will not reproduce the final set of merged spiders here.
	However, we can see that only three duplicates are removed:
	the right hand diagram in (\ref{ProcIex22}) is removed since it is the same as the left hand diagram in (\ref{ProcIex20});
	the right hand diagram in (\ref{ProcIex22c}) is removed since it is the same as the left hand diagram in (\ref{ProcIex20b});
	and
	the right hand diagram in (\ref{ProcIex22g}) is removed since it is the same as the right hand diagram in (\ref{ProcIex22e}).
\end{example}


\begin{figure*}[tb!]
	\begin{tcolorbox}[colback=lplum!10, colframe=lplum!40, coltitle=black, 
		title={\bfseries 
		Subprocedure II: 
		How to Determine All Fully Labelled Diagrams for a Partially Labelled Diagram of a Grouped Output.
		},
		fonttitle=\bfseries
		]
		\textbf{Input:} A partially labelled diagram for a grouped output of a
		$(k,l)$--bipartition diagram $d_{\pi}$.
	\begin{enumerate}
		\item 
			Suppose that the partially labelled diagram has $t$ labelled spiders, each with a separate label,
			and $s$ spiders with no label. 
			Call the labels in the partially labelled diagram fixed. Let $L_X$ denote the set of fixed labels.
			Any other labels that are created in the following are said to be free.

		\item	\label{step:sortunlabelled}
					Leaving the labelled spiders as they are, sort the unlabelled spiders in decreasing order of input wires.
		\item
					Partition the sorted unlabelled spiders into groupings where spiders are in the same grouping
					if and only if they have the same number of input wires. 
					Call this partition $\pi$ and the individual groupings $\{\pi_B\}$.
		\item
			Consider all possible ways of labelling the unlabelled spiders, where 	
			$k$ of them are fixed and the remaining $s - k$ of them are free, for $k = 0, 1, \dots, s$.
			Then, for each value of $k$:


			\begin{enumerate}
				\item
					Create a multiset of $k$ $X$ symbols (to denote fixed) and $s-k$ $F$ symbols (to denote free).
					Compute all partitions of this multiset that correspond to the original partition structure of $\pi$.
					Denote this set by $\Pi_{X,F}$.
				\item 
					Now calculate all possible $k$-size multisets choosing labels from $L_X$, allowing repeats, and
					all possible $(s-k)$-size multisets choosing free labels (by introducing them as needed), also allowing repeats.
					Taking their Cartesian product gives all possible ways of choosing $k$ fixed indices and $s-k$ free indices.
				\item
					For each pair of $k$ fixed indices and $s-k$ free indices in this set:
					\begin{enumerate}
						\item For each partition $\pi_{X,F} \in \Pi_{X,F}$, compute all partitions of this pair of 
							$k$ fixed indices and $s-k$ free indices that correspond to the original partition structure
							of $\pi_{X,F}$, modulo isomorphism: that is, identifying partitions that are equivalent
							under a relabelling of the free indices only.
						\item
							For each non-isomorphic partition, create a copy of the sorted partially labelled diagram
							from Step \ref{step:sortunlabelled} and label the output spiders in order from left to right
							to create a fully labelled diagram.
					\end{enumerate}
				\item 
					Take the set union of the set of fully labelled diagrams for all pairs of $k$ fixed indices and $s-k$ free indices.
			\end{enumerate}
		\item
			Finally, take the set union of the set of the fully labelled diagrams for all values of $k$.
			Merge all of the spiders in each fully labelled diagram, and remove any duplicates.
			If it is desired, propagate the labels to the ends of the wires.
	\end{enumerate}
	\textbf{Output:} A set of fully labelled diagrams for the partially labelled diagram of a grouped output.
	\end{tcolorbox}
  	\label{subprocedureII}
\end{figure*}


\subsection{Subprocedures III, IV, and V}

\begin{example} \label{subIII32ex}
	In Example \ref{subI32ex}, we considered 
	the $(3,2)$--bipartition diagram (\ref{ProcIex10})
	and, by applying Subprocedure I, obtained two sets of partially labelled diagrams that were given in (\ref{exD1result}), 
	one for each possible grouped output for (\ref{ProcIex10}).
	In fact, we noted that the diagrams in each set were actually fully labelled, hence we could skip applying Subprocedure II
	to each of them and proceed to applying Subprocedure III to each fully labelled diagram.

	Following Subprocedure III, we see that the grouped map labels that correspond to the diagrams in the set $(i,j)$ of (\ref{exD1result})
	are
	\begin{equation}
		i, j
		\xleftarrow{g}
		i, i, j
	\end{equation}
	and
	\begin{equation}
		i, j
		\xleftarrow{g}
		j, j, i
	\end{equation}
	where $i \neq j$,
	and the grouped map label that corresponds to the diagram in the set $(i,i)$ of (\ref{exD1result}) is
	\begin{equation}
		i, i
		\xleftarrow{g}
		i, i, i
	\end{equation}
	We now apply Subprocedure IV to each of these grouped map labels --- that is,
	we unroll the right hand side of each grouped map label to obtain its left grouped map label:
	\begin{equation}
		i, j
		\xleftarrow{lg}
		i, i, j
		+
		i, j, i
		+
		j, i, i
	\end{equation}
	\begin{equation}
		i, j
		\xleftarrow{lg}
		j, j, i
		+
		j, i, j
		+
		i, j, j
	\end{equation}
	\begin{equation}
		i, i
		\xleftarrow{lg}
		i, i, i
	\end{equation}

	For each grouped output, we can add together the right hand side of every left grouped map label
	where this grouped output appears on the left hand side to obtain a single left grouped map label
	for each grouped output:
	\begin{equation} \label{exD4conc}
		i, j
		\xleftarrow{lg}
		i, i, j
		+
		i, j, i
		+
		j, i, i
		+
		j, j, i
		+
		j, i, j
		+
		i, j, j
	\end{equation}
	and
	\begin{equation}
		i, i
		\xleftarrow{lg}
		i, i, i
	\end{equation}
	Hence, by applying Subprocedure V and converting the resulting map labels to transformation map labels, 
	we see that, for an input symmetric tensor $T \in (\mathbb{R}^{n})^{\otimes 3}$, where $n \geq 2$,
	the output symmetric tensor $D_{\pi}(T) \in (\mathbb{R}^{n})^{\otimes 2}$, where $d_\pi$ is given in (\ref{ProcIex10}),
	is given by
	\begin{equation} \label{exD4conc1}
		{D_{\pi}(T)}_{i,j}
		\leftarrow
		T_{i, i, j}
		+
		T_{i, j, i}
		+
		T_{j, i, i}
		+
		T_{j, j, i}
		+
		T_{j, i, j}
		+
		T_{i, j, j}
	\end{equation}
	for all $i \neq j \in [n]$,
	and 
	\begin{equation}
		{D_{\pi}(T)}_{i, i}
		\leftarrow
		T_{i, i, i}
	\end{equation}
	for all $i \in [n]$.

	As a trivial case, note that if $n = 1$, then we would not consider the fully labelled diagrams in the set $(i,j)$ of (\ref{exD1result}),
	and so the only transformation map label would be
	\begin{equation}
		{D_{\pi}(T)}_{1, 1}
		\leftarrow
		T_{1, 1, 1}
	\end{equation}
\end{example}

\begin{example} \label{subIII34ex}


	We can compare Example \ref{subIII32ex}
	directly with the $(3,4)$--bipartition diagram $d_{\pi}$
	\begin{equation} \label{ProcIex29}
		\begin{aligned}
			\scalebox{0.7}{\tikzfig{ProcIex29}}
		\end{aligned}
	\end{equation}
	Subprocedure I will give two sets:
	\begin{equation} \label{remarkD5Iresult}
	\begin{aligned}
		(i,i,j,j)
		=
		\left\{
			\;
		\scalebox{0.7}{\tikzfig{ProcIex29a}},
		\;
		\scalebox{0.7}{\tikzfig{ProcIex29b}}
			\;
		\right\}
	\end{aligned}
		\quad\quad\text{ and }\quad\quad
	\begin{aligned}
		(i,i,i,i)
		=
		\left\{
			\;
		\scalebox{0.7}{\tikzfig{ProcIex29c}}
		\;
		\right\}
	\end{aligned}
	\end{equation}
	Once again we can skip over Subprocedure II since each diagram in each set is fully labelled.
	Applying Subprocedure III and then Subprocedure IV,
	we obtain the following left grouped map labels
	\begin{equation} \label{exD4remark}
		i, i, j, j
		\xleftarrow{lg}
		i, j, j
		+
		j, i, j
		+
		j, j, i
		+
		i, i, j
		+
		i, j, i
		+
		j, i, i
	\end{equation}
	where $i \neq j$,
	and
	\begin{equation} \label{exD4remarki}
		i, i, i, i
		\xleftarrow{lg}
		i, i, i
	\end{equation}
	We now apply Subprocedure V to obtain a set of map labels for each left grouped map label:
	(\ref{exD4remark}) becomes
		\begin{equation} \label{D5mapequ1}
		i, i, j, j
		\leftarrow
		i, j, j
		+
		j, i, j
		+
		j, j, i
		+
		i, i, j
		+
		i, j, i
		+
		j, i, i
	\end{equation}
	\begin{equation} \label{D5mapequ2}
		i, j, i, j
		\leftarrow
		i, j, j
		+
		j, i, j
		+
		j, j, i
		+
		i, i, j
		+
		i, j, i
		+
		j, i, i
	\end{equation}
	\begin{equation} \label{D5mapequ3}
		i, j, j, i
		\leftarrow
		i, j, j
		+
		j, i, j
		+
		j, j, i
		+
		i, i, j
		+
		i, j, i
		+
		j, i, i
	\end{equation}
	and (\ref{exD4remarki}) becomes
	\begin{equation} 
		i, i, i, i
		\leftarrow
		i, i, i
	\end{equation}
	Hence, by converting the resulting map labels to transformation map labels, 
	we see that, for an input symmetric tensor $T \in (\mathbb{R}^{n})^{\otimes 3}$, where $n \geq 2$,
	the output symmetric tensor $D_{\pi}(T) \in (\mathbb{R}^{n})^{\otimes 4}$, where $d_\pi$ is given in (\ref{ProcIex29}),
	is given by
	\begin{equation} \label{D5equ1}
		{D_{\pi}(T)}_{i, i, j, j}
		\leftarrow
		T_{i, j, j}
		+
		T_{j, i, j}
		+
		T_{j, j, i}
		+
		T_{i, i, j}
		+
		T_{i, j, i}
		+
		T_{j, i, i}
	\end{equation}
	\begin{equation} \label{D5equ2}
		{D_{\pi}(T)}_{i, j, i, j}
		\leftarrow
		T_{i, j, j}
		+
		T_{j, i, j}
		+
		T_{j, j, i}
		+
		T_{i, i, j}
		+
		T_{i, j, i}
		+
		T_{j, i, i}
	\end{equation}
	\begin{equation} \label{D5equ3}
		{D_{\pi}(T)}_{i, j, j, i}
		\leftarrow
		T_{i, j, j}
		+
		T_{j, i, j}
		+
		T_{j, j, i}
		+
		T_{i, i, j}
		+
		T_{i, j, i}
		+
		T_{j, i, i}
	\end{equation}
	for all $i \neq j \in [n]$, and
	\begin{equation}
		{D_{\pi}(T)}_{i, i, i, i}
		\leftarrow
		T_{i, i, i}
	\end{equation}
	for all $i \in [n]$.

	Once again, as a trivial case, note that if $n = 1$, then we would not consider the fully labelled diagrams 
	in the set $(i,i,j,j)$ of (\ref{remarkD5Iresult}),
	and so the only transformation map label would be
	\begin{equation}
		{D_{\pi}(T)}_{1,1,1,1}
		\leftarrow
		T_{1, 1, 1}
	\end{equation}
\end{example}


\begin{example}
	We obtain the left grouped map labels for the $(3,1)$--bipartition diagram $d_{\pi}$
	\begin{equation} \label{ProcIex23}
		\begin{aligned}
			\scalebox{0.7}{\tikzfig{ProcIex23}}
		\end{aligned}
	\end{equation}
	as follows.

	We apply Subprocedure I to (\ref{ProcIex23}) to obtain 
\begin{equation} \label{ProcIex23i}
	(i)
	=
	\left\{
		\;
	\begin{aligned}
		\scalebox{0.7}{\tikzfig{ProcIex23i}}
	\end{aligned}
	\;
	\right\}
\end{equation}	
	as the only possible grouped output for (\ref{ProcIex23}) with its associated set of partially labelled diagrams.

	We then apply Subprocedure II to the only partially labelled diagram in $(i)$,
	giving five fully labelled diagrams for this grouped output:
	\begin{equation}
		\begin{aligned}
			\scalebox{0.7}{\tikzfig{ProcIex24}}
		\end{aligned}
	\quad\quad\quad\quad
		\begin{aligned}
			\scalebox{0.7}{\tikzfig{ProcIex25}}
		\end{aligned}
	\quad\quad\quad\quad
		\begin{aligned}
			\scalebox{0.7}{\tikzfig{ProcIex26}}
		\end{aligned}
	\end{equation}
	and
	\begin{equation}
		\begin{aligned}
			\scalebox{0.7}{\tikzfig{ProcIex27}}
		\end{aligned}
		\quad\quad\quad\quad
		\begin{aligned}
			\scalebox{0.7}{\tikzfig{ProcIex28}} 
		\end{aligned}
	\end{equation}

	Following Subprocedure III, we see that the grouped map labels that correspond to these fully labelled diagrams are
	\begin{equation}
		i 
		\xleftarrow{g}
		\sum_{\substack{j,\,k \\ j \ne k \\ j \ne i \\ k \ne i}} j, j, k
		\quad\quad\quad\quad
		i 
		\xleftarrow{g}
		\sum_{\substack{j \\ j \ne i}} j, j, j
		\quad\quad\quad\quad
		i 
		\xleftarrow{g}
		\sum_{\substack{j \\ j \ne i}} i, i, j
	\end{equation}
	and
	\begin{equation}
		i 
		\xleftarrow{g}
		\sum_{\substack{j \\ j \ne i}} i, j, j
		\quad\quad\quad\quad
		i 
		\xleftarrow{g}
		i, i, i
	\end{equation}

	We now apply Subprocedure IV to each of these grouped map labels --- 
	unrolling the right hand side of each grouped map label, we obtain its left grouped map label:
	\begin{equation}
		i 
		\xleftarrow{lg}
		\sum_{\substack{j,\,k \\ j \ne k \\ j \ne i \\ k \ne i}} 
		[
		j, j, k
		+
		j, k, j
		+
		k, j, j
		]
		\quad\quad\quad\quad
		i 
		\xleftarrow{lg}
		\sum_{\substack{j \\ j \ne i}} 
		j, j, j
		\quad\quad\quad\quad
		i 
		\xleftarrow{lg}
		\sum_{\substack{j \\ j \ne i}} 
		[
		i, i, j
		+
		i, j, i
		+
		j, i, i
		]
	\end{equation}
	and
	\begin{equation}
		i 
		\xleftarrow{lg}
		\sum_{\substack{j \\ j \ne i}} 
		[
		i, j, j
		+
		j, i, j
		+
		j, j, i
		]
		\quad\quad\quad\quad
		i 
		\xleftarrow{lg}
		i, i, i
	\end{equation}
	Given that there is only one grouped output, we can add together the right hand side of every left grouped map label
	above to obtain a single left grouped map label for this grouped output:
	\begin{equation} \label{exD3IIIIVlg}
		i 
		\xleftarrow{lg}
		\sum_{\substack{j,\,k \\ j \ne k \\ j \ne i \\ k \ne i}} 
		[
		j, j, k
		+
		j, k, j
		+
		k, j, j	
		]
		+
		\sum_{\substack{j \\ j \ne i}} 
		j, j, j
		+
		\sum_{\substack{j \\ j \ne i}} 
		[
		i, i, j
		+
		i, j, i
		+
		j, i, i
		]
		+
		\sum_{\substack{j \\ j \ne i}} 
		[
		i, j, j
		+
		j, i, j
		+
		j, j, i
		]
		+
		i, i, i
	\end{equation}
	Tidying the right hand side of (\ref{exD3IIIIVlg}) gives
	\begin{equation} 
		i 
		\xleftarrow{lg}
		\sum_{j,\,k = 1}^{n}
		[
		j, j, k
		+
		j, k, j
		+
		k, j, j	
		]
		-
		2\sum_{j = 1}^{n}
		j, j, j	
	\end{equation}
	Hence, by applying Subprocedure V and converting the resulting map label to a transformation map label,
	we see that, for an input symmetric tensor $T \in (\mathbb{R}^{n})^{\otimes 3}$, where $n \geq 3$,
	the output (symmetric) tensor $D_{\pi}(T) \in \mathbb{R}^{n}$, where $d_\pi$ is given in (\ref{ProcIex23}),
	is given by
	\begin{equation}
		{D_{\pi}(T)}_i 
		\leftarrow
		\sum_{j,\,k = 1}^{n}
		[
		T_{j, j, k}
		+
		T_{j, k, j}
		+
		T_{k, j, j}
		]
		-
		2\sum_{j = 1}^{n}
		T_{j, j, j}
	\end{equation}
	for all $i \in [n]$.
	
	Finally, in a similar way to the preceding examples, it is possible to obtain the transformation map label
	in the trivial cases $n=1$ and $n=2$.
\end{example}


\subsection{Map Label Calculations for Example \ref{mainexmaplabels}.}

\begin{example} [Calculations for Example \ref{mainexmaplabels}]
	\label{mainexmaplabelscalcs}
	We calculate the transformation map labels for each of the four $(2,1)$--bipartition diagrams
	that were given in (\ref{bipartdiagsex}) for $n=3$. We reproduce them below, for ease:
	\begin{equation} 
		\begin{aligned}
			\scalebox{0.7}{\tikzfig{diagram21diags}}
		\end{aligned}
	\end{equation}
	In the following, we refer to these diagrams as 
	$d_{\pi_1}$,
	$d_{\pi_2}$,
	$d_{\pi_3}$,
	$d_{\pi_4}$, respectively.
	We follow the procedure that was given in the main paper to obtain the transformation map labels.

	For $d_{\pi_1}$, it is clear that the only grouped output, and its associated set of partially labelled diagrams, is
	\begin{equation} 
	(i)
	=
	\left\{
		\;
	\begin{aligned}
		\scalebox{0.7}{\tikzfig{ProcIex30}}
	\end{aligned}
	\;
	\right\}
	\end{equation}	
	Since there is only one diagram in the set, and it is fully labelled, we skip over Subprocedure II to apply
	Subprocedure III, obtaining the grouped map label
	\begin{equation}
		i
		\xleftarrow{g}
		i,i
	\end{equation}
	Applying Subprocedure IV to unroll its right hand side does nothing, hence we obtain
	\begin{equation}
		i
		\xleftarrow{lg}
		i,i
	\end{equation}
	Finally, there is nothing to unroll on its left hand side either,
	hence we obtain the map label
	\begin{equation}
		i
		\leftarrow
		i,i
	\end{equation}
	which we can convert into the transformation map label
	\begin{equation}
		{D_{\pi_1}(T)}_i
		\leftarrow
		T_{i,i}
	\end{equation}
	for $i \in [3]$.
	This is (\ref{tensormaplabelex1}) in the main paper.

	For $d_{\pi_2}$, once again there is only one grouped output, whose associated set of partially labelled diagrams is
	\begin{equation} 
	(i)
	=
	\left\{
		\;
	\begin{aligned}
		\scalebox{0.7}{\tikzfig{ProcIex31}}
	\end{aligned}
	\;
	\right\}
	\end{equation}	
	As there is only one diagram in the set, we now apply Subprocedure II to it to obtain two fully labelled diagrams that are associated
	with the grouped output
	\begin{equation} 
	\begin{aligned}
		(i)
		=
		\left\{
			\;
		\scalebox{0.7}{\tikzfig{ProcIex31b}},
		\;
		\scalebox{0.7}{\tikzfig{ProcIex31a}}
			\;
		\right\}
	\end{aligned}
	\end{equation}	
	From these fully labelled diagrams, we obtain the following grouped map labels
	\begin{equation}
		i
		\xleftarrow{g}
		\sum_{\substack{j=1 \\ j \ne i}}^{3} j,j
		\quad\quad\text{ and }\quad\quad
		i
		\xleftarrow{g}
		i,i
	\end{equation}
	Applying Subprocedure IV to unroll the right hand side for each grouped map label does nothing, hence we obtain
	\begin{equation}
		i
		\xleftarrow{lg}
		\sum_{\substack{j=1 \\ j \ne i}}^{3} j,j
		\quad\quad\text{ and }\quad\quad
		i
		\xleftarrow{lg}
		i,i
	\end{equation}
	Adding together the right hand side of the resulting left grouped map labels gives
	\begin{equation}
		i
		\xleftarrow{lg}
		\sum_{j=1}^{3} j,j
	\end{equation}
	Finally, there is nothing to unroll on its left hand side either,
	hence we obtain the map label
	\begin{equation}
		i
		\leftarrow
		\sum_{j=1}^{3} j,j
	\end{equation}
	which we can convert into the transformation map label
	\begin{equation}
		{D_{\pi_2}(T)}_i
		\leftarrow
		\sum_{j=1}^{3} T_{j,j}
	\end{equation}
	for $i \in [3]$.
	This is (\ref{tensormaplabelex2}) in the main paper.

	For $d_{\pi_3}$, once again there is only one grouped output, whose associated set of partially labelled diagrams is
	\begin{equation} 
	(i)
	=
	\left\{
		\;
	\begin{aligned}
		\scalebox{0.7}{\tikzfig{ProcIex32}}
	\end{aligned}
	\;
	\right\}
	\end{equation}	
	As there is only one diagram in the set, we now apply Subprocedure II to it to obtain two fully labelled diagrams that are associated
	with the grouped output
	\begin{equation} 
	\begin{aligned}
		(i)
		=
		\left\{
			\;
		\scalebox{0.7}{\tikzfig{ProcIex32a}},
		\;
		\scalebox{0.7}{\tikzfig{ProcIex32b}}
			\;
		\right\}
	\end{aligned}
	\end{equation}	
	From these, we obtain the following grouped map labels
	\begin{equation}
		i
		\xleftarrow{g}
		\sum_{\substack{j=1 \\ j \ne i}}^{3} i,j
		\quad\quad\text{ and }\quad\quad
		i
		\xleftarrow{g}
		i,i
	\end{equation}
	Applying Subprocedure IV to unroll the right hand side for each grouped map label gives
	\begin{equation}
		i
		\xleftarrow{lg}
		\sum_{\substack{j=1 \\ j \ne i}}^{3} [i,j + j,i]
		\quad\quad\text{ and }\quad\quad
		i
		\xleftarrow{lg}
		i,i
	\end{equation}
	Adding together the right hand side of the resulting left grouped map labels 
	and tidying gives
	\begin{equation} 
		i
		\xleftarrow{lg}
		\sum_{j=1}^{3} [i,j + j,i] - i,i
	\end{equation}
	Finally, there is nothing to unroll on its left hand side either,
	hence we obtain the map label
	\begin{equation}
		i
		\leftarrow
		\sum_{j=1}^{3} [i,j + j,i] - i,i
	\end{equation}
	which we can convert into the transformation map label
	\begin{equation}
		{D_{\pi_3}(T)}_i
		\leftarrow
		\sum_{j=1}^{3} [T_{i,j} + T_{j,i}] - T_{i,i}
	\end{equation}
	for $i \in [3]$.
	This is (\ref{tensormaplabelex3}) in the main paper.

	Finally, for $d_{\pi_4}$, once again there is only one grouped output, whose associated set of partially labelled diagrams is
	\begin{equation} 
	(i)
	=
	\left\{
		\;
	\begin{aligned}
		\scalebox{0.7}{\tikzfig{ProcIex33}}
	\end{aligned}
	\;
	\right\}
	\end{equation}	
	As there is only one diagram in the set, we now apply Subprocedure II to it to obtain four fully labelled diagrams that are associated
	with the grouped output
	\begin{equation} 
	\begin{aligned}
		(i)
		=
		\left\{
			\;
		\scalebox{0.7}{\tikzfig{ProcIex33a}},
		\quad
		\scalebox{0.7}{\tikzfig{ProcIex33b}},
		\quad
		\scalebox{0.7}{\tikzfig{ProcIex33c}},
		\quad
		\scalebox{0.7}{\tikzfig{ProcIex33d}}
		\;
		\right\}
	\end{aligned}
	\end{equation}	
	From these, we obtain the following grouped map labels
	\begin{equation}
		i
		\xleftarrow{g}
		\sum_{\substack{j,\,k \\ j \ne k \\ j \ne i \\ k \ne i}}^{3} j,k
		\quad\quad\quad\quad
		i
		\xleftarrow{g}
		\sum_{\substack{j=1 \\ j \ne i}}^{3} j,j
		\quad\quad\quad\quad
		i
		\xleftarrow{g}
		\sum_{\substack{j=1 \\ j \ne i}}^{3} i,j 
		\quad\quad\quad\quad
		i
		\xleftarrow{g}
		i,i
	\end{equation}
	Applying Subprocedure IV to unroll the right hand side for each grouped map label gives
	\begin{equation}
		i
		\xleftarrow{lg}
		\sum_{\substack{j,\,k \\ j \ne k \\ j \ne i \\ k \ne i}}^{3} j,k
		\quad\quad\quad\quad
		i
		\xleftarrow{lg}
		\sum_{\substack{j=1 \\ j \ne i}}^{3} j,j
		\quad\quad\quad\quad
		i
		\xleftarrow{lg}
		\sum_{\substack{j=1 \\ j \ne i}}^{3} [i,j + j,i]
		\quad\quad\quad\quad
		i
		\xleftarrow{lg}
		i,i
	\end{equation}
	Adding together the right hand side of the resulting left grouped map labels 
	and tidying gives
	\begin{equation}
		i
		\xleftarrow{lg}
		\sum_{j,k = 1}^{3} j,k
	\end{equation}
	Finally, there is nothing to unroll on its left hand side either,
	hence we obtain the map label
	\begin{equation}
		i
		\leftarrow
			\sum_{j,k = 1}^{3} j,k
	\end{equation}
	which we can convert into the transformation map label
	\begin{equation}
		{D_{\pi_4}(T)}_i
		\leftarrow
		\sum_{j,k = 1}^{3} T_{j,k}
	\end{equation}
	for $i \in [3]$.
	This is (\ref{tensormaplabelex4}) in the main paper.
\end{example}


\begin{figure*}[tb!]
	\begin{tcolorbox}[colback=terracotta!10, colframe=terracotta!40, coltitle=black, 
		title={\bfseries 
		Subprocedure III: How to Convert a Fully Labelled Diagram into a Grouped Map Label.
		},
		fonttitle=\bfseries
		]
		\textbf{Input:} A fully labelled diagram of a $(k,l)$--bipartition diagram $d_{\pi}$. \\[0.4em]
		Convert the fully labelled diagram into a grouped map label as follows:
	\begin{enumerate}
		\item 
			Create the arrow $\xleftarrow{g}$. 
		\item
			On its left, write the tuple that is formed from 
			propagating the labels of the fully labelled diagram
			to the output wires.
			In particular, distinct labels that appear on the left hand side of the arrow represent distinct values.
		\item
			On its right, write the tuple that is formed from
			propagating the labels of the fully labelled diagram
			to the input wires,
			summing over any labels that do not appear on the left hand side,
			and not allowing any summation labels to equal each other nor 
			any label that appears on the left hand side.
	\end{enumerate}
		\textbf{Output:} A grouped map label.
	\end{tcolorbox}
  	\label{subprocedureIII}
\end{figure*}



\begin{figure*}[tb!]
	\begin{tcolorbox}[colback=melon!10, colframe=melon!40, coltitle=black, 
		title={\bfseries 
		Subprocedure IV: 
		How to Convert the Grouped Map Label for a Fully Labelled Diagram into a Left Grouped Map Label.
		},
		fonttitle=\bfseries
		]
		\textbf{Input:} A grouped map label for a fully labelled diagram of a
		$(k,l)$--bipartition diagram $d_{\pi}$.
	\begin{enumerate}
		\item Consider the right hand side of the grouped map label. 
			The summation indices correspond to free indices in Subprocedure II, 
			and all other indices correspond to fixed indices in Subprocedure II.
			(Each fixed index should also appear on the left hand side of the grouped map label.)
		\item
			Replace the tuple of indices by a sum of all permutations of the tuple, modulo isomorphism
			under a relabelling of the free indices only.
			Also replace $\xleftarrow{g}$ with $\xleftarrow{lg}$.
	\end{enumerate}
		\textbf{Output:} A left grouped map label for a fully labelled diagram of a
		$(k,l)$--bipartition diagram $d_{\pi}$.
	\end{tcolorbox}
  	\label{subprocedureIV}
\end{figure*}
\begin{figure*}[tb!]
	\begin{tcolorbox}[colback=dustyblue!10, colframe=dustyblue!40, coltitle=black, 
		title={\bfseries 
		Subprocedure V: 
		How to Unroll the Left Hand Side of a Left Grouped Map Label.
		},
		fonttitle=\bfseries
		]
		\textbf{Input:} A left grouped map label. \\[0.3em]
		Create a set of map labels for the left grouped map label by unrolling the left hand side as follows:
	\begin{enumerate}
		\item 
			Calculate all permutations of the left hand side of the left grouped map label
			modulo isomorphism: any two permutations are said to be equivalent if the first can be bijectively relabelled
			to become the second, using the multiset of letters of letters formed from the left hand side of the left grouped map label.
		\item
			For each resulting permutation, create a map label by letting it be the left hand side of the arrow $\leftarrow$.
			To its right is exactly the right hand side of the left grouped map label, unchanged.
	\end{enumerate}
		\textbf{Output:} A set of map labels corresponding to the left grouped map label.
	\end{tcolorbox}
  	\label{subprocedureV}
\end{figure*}


\section{Implementation Details}

We describe the architecture and training details that were used for each of the two tasks.
In implementing\footnote{The code is available at \url{https://github.com/epearcecrump/symmetrictensors}.}
the linear permutation equivariant functions, we used batch vectorised
operations for each of the unrolled basis transformations that came from the corresponding transformation map labels.

\textbf{$S_{12}$-Invariant Task:}
We randomly generated a synthetic data set consisting of $5000$ symmetric tensors,
split into $90\%$ training and $10\%$ test.
We trained two models which had the following specifications:
\begin{itemize}
	\item SymmPermEquiv, the $S_{12}$-invariant model 
		on symmetric tensors in $(\mathbb{R}^{12})^{\otimes 3}$,
		consisted of a single, trainable, linear permutation equivariant function
		from symmetric tensors in $(\mathbb{R}^{12})^{\otimes 3}$ to $\mathbb{R}$.
		The total number of parameters was $3$ and the training time was $2.26$ seconds.
	\item MLP consisted of a single standard linear layer from
		$\mathbb{R}^{12 \times 12 \times 12}$ to $\mathbb{R}$.
		The total number of parameters was $1728$ and the training time 
		was $1.46$ seconds.
\end{itemize}
Both models were optimised with stochastic gradient descent with 
a learning rate of $0.0001$.
We trained both models for $50$ epochs with a batch size of $50$.



\textbf{$S_{8}$-Equivariant Task:}
We randommly generated a synthetic data set consisting of $10000$ symmetric tensors,
split into $90\%$ training and $10\%$ test.
We trained three models which had the following specifications:
\begin{itemize}
	\item SymmPermEquiv, the $S_{8}$-equivariant model 
		on symmetric tensors in $(\mathbb{R}^{8})^{\otimes 3}$,
		consisted of a single, trainable, linear permutation equivariant function
		from symmetric tensors in $(\mathbb{R}^{8})^{\otimes 3}$ 
		to symmetric tensors in $\mathbb{R}^{8}$.
		The total number of parameters was $7$ and the training time was $12.01$ seconds.
	\item PermEquiv, the standard $S_{8}$-equivariant model 
		\citep{maron2019, pearcecrump, godfrey2023}, 
		consisted of a single, trainable, linear permutation equivariant function
		from $(\mathbb{R}^{8})^{\otimes 3}$ 
		to $\mathbb{R}^{8}$.
		The total number of parameters was $15$ and the training time was $12.17$ seconds.
	\item MLP consisted of a single standard linear layer from 
		$\mathbb{R}^{8 \times 8 \times 8}$ to $\mathbb{R}^{8}$.
		The total number of parameters was $4096$ and the training time 
		was $4.38$ seconds.
\end{itemize}
Both models were optimised with stochastic gradient descent with 
a learning rate of $0.0001$.
We trained both models for $50$ epochs with a batch size of $50$.
In Table \ref{tab:diag}, we used a batch size of $50$ to obtain the test mean squared error (MSE).
In Table \ref{tab:generalisation},
we generated a test set of $1000$ tensors, one for each of $n = 8, 16, 32$,
using the parameters of the model that was trained in the task itself for each of these models.

\textbf{Comparison between Map Label and Weight Matrix Implementations:}
We are grateful to the reviewers for the following suggestion:
they recommended that we should provide empirical evidence 
to justify the time and space requirements 
of our map label notation method compared with the standard implementation 
that uses unrolled weight matrices. 

For the $S_{12}$-invariant task, the training time of SymmPermEquiv using 
the map label implementation
was $2.26$ seconds, compared to $127.55$ seconds using a weight matrix implementation,
resulting in a $60\times$ speedup.

For the $S_8$-equivariant task, the training time of SymmPermEquiv using 
the map label implementation
was $12.01$ seconds, compared to $2451.45$ seconds using a weight matrix implementation,
resulting in a $200\times$ speedup.

These results demonstrate that 
what we have said in theory is true in practice, 
namely that the standard weight matrix approach becomes unfeasible 
as $n, k$ and $l$ become larger, 
owing to constraints on storing the weight matrices in memory,
particularly in training the networks.





\end{document}